\DeclareMathOperator*{\argmax}{arg\,max}
\DeclareMathOperator*{\argmin}{arg\,min}
\theoremstyle{plain}
\newtheorem{theorem}{Theorem}[section]
\newtheorem{lemma}[theorem]{Lemma}
\theoremstyle{definition}
\newtheorem{definition}[theorem]{Definition}
\theoremstyle{remark}
\newtheorem{remark}[theorem]{Remark}
\newif\ifcomments
    \newcommand{\todo}[1]{{\color{orange}{\bf\sf [TODO: #1]}}}
    \newcommand{\todo}[1]{}
\long\def\ignore#1{}
\def\Snospace~{\S{}}
\begin{document}

\date{}

\title{\Large \bf Fairness in Serving Large Language Models}

\author{
\rm{
Ying Sheng$^{\text{1,2}}$ \enskip
Shiyi Cao$^{\text{1}}$ \enskip
Dacheng Li$^{\text{1}}$ \enskip
Banghua Zhu$^{\text{1}}$ \enskip
Zhuohan Li$^{\text{1}}$ \enskip
Danyang Zhuo$^{\text{3}}$ \enskip
}
\\
\rm{
Joseph E. Gonzalez$^{\text{1}}$ \enskip
Ion Stoica$^{\text{1}}$ \enskip
}
\\
\\
$^{\text{1}}$UC Berkeley\enskip
$^{\text{2}}$Stanford University\enskip
$^{\text{3}}$Duke University\enskip
}

\maketitle

{\let\thefootnote\relax\footnote{{$^*$Part of the work was done when Ying
was visiting UC Berkeley.}}}

\begin{abstract}
High-demand LLM inference services (e.g., ChatGPT and BARD) support a wide range of requests from short chat conversations to long document reading.
To ensure that all client requests are processed fairly, most major LLM inference services have request rate limits, to ensure that no client can dominate the request queue. 
However, this rudimentary notion of fairness also results in under-utilization of the resources and poor client experience when there is spare capacity. 
While there is a rich literature on fair scheduling, serving LLMs presents new challenges due to their unpredictable request lengths and their unique batching characteristics on parallel accelerators. 
This paper introduces the definition of LLM serving fairness based on a cost function that accounts for the number of input and output tokens processed.
To achieve fairness in serving, we propose a novel scheduling algorithm, the Virtual Token Counter (VTC), a fair scheduler based on the continuous batching mechanism.
We prove a $2\times$ tight upper bound on the service difference between two backlogged clients, adhering to the requirement of work-conserving.
Through extensive experiments, we demonstrate the superior performance of VTC in ensuring fairness, especially in contrast to other baseline methods, which exhibit shortcomings under various conditions.
The reproducible code is available at \url{https://github.com/Ying1123/VTC-artifact}.
\end{abstract}
\section{Introduction}
\label{sec:introduction}

In a very short time, Large Language Models (LLMs), such as ChatGPT-4 Turbo~\cite{openai2023gpt4turbo}, have been integrated into various application domains, e.g., programming assistants, customer support, document search, and chatbots. The core functionality rendered by LLM providers to these applications is serving their requests. In addition to the response accuracy, the request response time is a key metric that determines the quality of service being provided. Furthermore, LLM providers seek to utilize their resources efficiently so they can reduce costs and increase their competitiveness in the market.

Today’s LLM serving systems~\cite{kwon2023efficient, tgi} typically use First-Come-First-Serve (FCFS) to schedule incoming requests. While simple, this scheduling discipline has several drawbacks. One such drawback is the lack of \emph{isolation}: a client sending a disproportionate number of requests can negatively impact the service of all the other clients sharing the same server (i.e.,  slow down their requests or even cause timeouts) even when they send very little traffic.
In multi-tenant personalized serving (S-LoRA~\cite{sheng2023slora}, Punica~\cite{punica}) that uses a dedicated adapter for each user, it is important to ensure fairness among the adapters as well.
One solution to address this problem is to limit the incoming load of each client. Many of the existing LLM services do this today by imposing a request-per-minute (RPM) limit~\cite{openai2023rate} for each client.

Unfortunately, RPM can lead to low resource utilization. A client sending requests at a high rate will be restricted even if the system is underutilized. This leads to wasted resources, an undesirable situation given the cost and the scarcity of GPUs. Thus, we want a solution that provides not only isolation (like RPM limit) but also high resource utilization.

This is a common problem in many other domains like networking and operating systems. The solution of choice to achieve both isolation and high resource utilization in those domains has been \emph{fair queueing}~\cite{fq}. Fair queueing ensures that each client will get their “fair share”. In the simplest case, if there are $n$ clients sharing the same resource, the fair share is at least $1/n$ of the resource, which means that each client gets at least $1/n$ of the resource. Furthermore, if some clients do not use their share, other clients with more demands can use it, hence leading to higher resource utilization. 

In this paper, we apply fair sharing to the domain of LLM serving at the token granularity. We do it at the token rather than request granularity to avoid unfairness due to request heterogeneity. Consider two clients, client $A$ sends requests of 2K tokens each (both input and output), and client $B$ sends requests of 200 tokens each. Serving an equal number of requests for each client would be unfair to client $B$ as her requests consume much fewer resources than client $A$’s requests. This is similar to networking where fair queuing is typically applied to the bit granularity, rather than packet granularity.

Despite these similarities, we cannot directly use the algorithms developed for networking and operating systems, as LLM serving has several unique characteristics. First, the request output lengths are unknown in advance. In contrast, in networking, the packet lengths are known before the packet is scheduled. 
Second, the cost of each token can vary. For instance, the cost of processing an input (prompt) token is typically lower than that of an output token, because input token processing is parallelizable. %
In contrast, the cost of sending a bit or the cost of a CPU time slice are the same irrespective of the workload.
Third, the \emph{effective} capacity of an LLM server (i.e., processing rate expressed in token/sec when the request queue is non-empty) can vary over time.
For example, 
longer input sequences take more memory. This limits the number of batched parallel requests during generation, leading to GPU under-utilization and a lower processing rate.
In contrast, the network or CPU capacity is assumed to be fixed.

In this paper, we discuss the factors that need to be considered when defining fairness in the context of LLM serving.
We show how different definitions can be incorporated into a configurable service cost function in Section~\ref{sec:definition}.
While the cost function can be customized, a simple metric of counting input and output tokens at different prices is extensively used in analysis for the sake of simplicity.
We then present a fair scheduling algorithm called \textit{Virtual Token Counter (VTC)} that can be easily adapted for different service cost functions. At a high level, VTC tracks the services received for each client and will prioritize the ones with the least services received, with a counter lift each time a client is new to the queue.
It updates the counters at a token-level granularity on the fly, which addresses the unknown length issue.
VTC integrates seamlessly with current LLM serving batching techniques (Section~\ref{sec:prelim_llm_serving}), and its scheduling mechanism does not depend on the server's capacity, overcoming the problem of the dynamically fluctuating server capacity.
We also provide theoretical bounds of fairness for VTC in~\Cref{sec:vtc}. The serving architecture of VTC is illustrated in Figure~\ref{fig:architecture}.

In summary, this paper makes the following contributions:
\begin{itemize}
    \item This is the first work to discuss the fair serving of Large Language Models to the best of our knowledge.
    We identify its unique challenges and give the definition of LLM serving fairness (\Cref{sec:definition}).
    \item We propose a simple yet effective fair-serving algorithm called VTC. We provide rigorous proofs for VTC on fairness guarantee, which gives fairness bound within $2\times$ of the optimal bound (\Cref{sec:method}).
    \item We conduct in-depth evaluations on our proposed algorithm VTC. Results confirm that our proposed algorithms are fair and work-conserving
    (\Cref{sec:eval-service}).
\end{itemize}

\section{Background}

In this section, we first introduce how an LLM serving system operates.
Then we describe existing methods for ensuring fairness in LLM serving.

\subsection{Large Language Models Serving}

\label{sec:prelim_llm_serving}

\paragraph{LLM serving with a single request}
First, a request contains information about its arrival time ($a$), input tokens ($x$), and its associated client ($u$). Formally, we represent a request using a three-tuple ($a$, $x$, $u$). The system generates output tokens based on the input tokens. For instance, the input tokens can be an incomplete sentence, and the system generates the rest of the sentence~\cite{openai_api}.

The generation procedure consists of two stages: the initial~\textbf{prefilling} stage, and the~\textbf{decoding} stage~\cite{pope2023efficiently}.
Mathematically, $x$ is a sequence of tokens $(x_1, x_2, ..., x_n)$. In the prefilling stage, the LLM computes the probability of the first new tokens: $P(x_{n+1} | x_1, ..., x_n)$. In the decoding stage, the system~\textit{autoregressively} generates a new token. At time $t$ ($t \geq 1$), the process is written as: $P(x_{n+t+1} | x_1, ..., x_{n+t})$.

The decoding stage ends when the LLM generates a special end-of-sentence (EOS) token or the number of generated tokens reaches a pre-defined maximal length.

\begin{figure}[t]
{\includegraphics[width=0.45\textwidth]{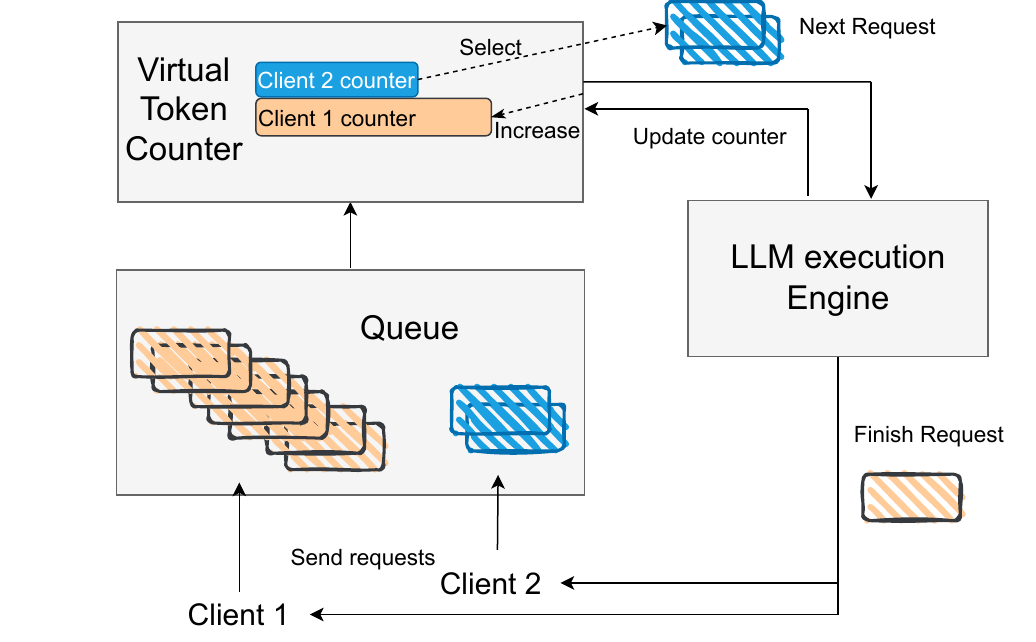}}
    \caption{Serving architecture with Virtual Token Counter (VTC), illustrated with two clients. VTC maintains a queue of requests and keeps track of tokens served for each client.
    In each iteration of the LLM execution engine, some tokens from some clients are generated. The counters of these clients are correspondingly updated.
    When the condition of adding new requests is satisfied (e.g. memory is released when some other requests finish), VTC will be invoked to choose the requests to be added.
    VTC achieves fairness by prioritizing clients with the lowest counter and carefully handling clients' leave and rejoin (Section~\ref{sec:vtc}).
    \label{fig:architecture}
    }
\end{figure}

\paragraph{LLM serving with multiple requests}
In the online serving scenarios, multiple clients submit requests to the serving system. To process these requests, the system maintains two concurrent streams: A monitoring stream adds requests to a waiting queue; an execution stream selects and executes request(s) from the waiting queue.

Naively, the execution stream can choose to execute requests one by one. However, this is highly GPU inefficient due to various natures of the LLM generation procedure. For instance, the decoding steps must be carried out sequentially where the arithmetic intensity is relatively low in a single step. Contemporary serving systems usually perform batching that executes multiple requests concurrently to maximize the system throughput.
The most widely used approach in LLM serving is continuous batching~\cite{yu2022orca}. Algorithm~\ref{alg:high_level} shows the pseudocode for continuous batching.\footnote{For a simple presentation, we consider an implementation that only uses continuous batching for decode steps but keeps the prefill step separated. as how TGI~\cite{huggingface2023tgi} adopted the original proposed iteration-level scheduling in Orca~\cite{yu2022orca}. For more discussions, see \Cref{sec:vtc_integration}.}
The monitoring stream enqueues requests to a waiting queue. The execution stream performs a check on whether there are finished requests at the end of each decoding step. If there are, the system removes these requests and adds new requests from the queue.
\paragraph{Fairness with continuous batching}
We can naturally integrate fairness policies into the continuous batching algorithm, by 
designing a fair \texttt{select\_new\_requests()} function in Algorithm~\ref{alg:high_level}.
Intuitively, the execution stream should keep track of how much service a particular client has received, and prioritize clients that haven't received much service in the next selection. 
We formally define fairness in the LLM serving context in Section~\ref{sec:definition} and design a method with theoretical guarantee in Section~\ref{sec:method}. 

We adopt a continuous batching scheme in which a request only leaves the batch when it generates an EOS token or reaches the pre-defined maximum number of generated tokens (i.e., no preemption). This paper focuses on integrating fair scheduling with continuous batching, and we leave an investigation on preemption as an orthogonal future work (discussed in \Cref{sec:preemption}).

\begin{algorithm}[ht]
\caption{LLM serving with Continuous batching}
\begin{algorithmic}[1]
\State Initialize current batch $B \leftarrow \emptyset$, waiting queue $Q \leftarrow \emptyset$
\State $\triangleright$ \texttt{with monitoring stream:}
\While{True}
    \If {new request $r$ arrived}
        \State $Q \leftarrow Q + r$
    \EndIf
\EndWhile
\State $\triangleright$ \texttt{with execution stream:}
\While{True}
    \If{can\_add\_new\_request()}
        \State $B_{new} \leftarrow {\textbf{select\_new\_requests(Q)}}$
        \State prefill($B_{new}$)
        \State $B \leftarrow B + B_{new}$
    \EndIf
    \State decode($B$)
    \State $B \leftarrow$ filter\_finished\_requests($B$)
\EndWhile
\end{algorithmic}
\label{alg:high_level}
\end{algorithm}

\subsection{Existing Fairness Approaches}
\label{sec:prelim_fair_sharing}

Fairness is a key metric of interest in computer systems that provide service to multiple concurrent clients~\cite{bertsekas2021data}.
A \textit{fair} LLM serving system should protect clients from a misbehaving client who may try to overload the serving system by submitting too many requests.

\paragraph{RPM Limit Per Client}
As a common practice of API management (e.x. \cite{openai2023rate}), specific rate limits are established for each client's API usage to prevent potential abuse or misuse of the API and ensure equitable access for all clients. This limitation is on the metric request-per-minute (RPM). Once a client reaches the RPM limit, the client is only allowed to submit more requests in the next time window. However, it's important to note that while these limits are effective in managing resource allocation during periods of high demand, they may not be \textit{work-conserving} when the number of active clients is low. In such scenarios, the system's capacity might be underutilized, as the imposed limits prevent the full exploitation of available resources. 

\paragraph{Fair Queueing~\cite{fq}} The fairness problem has been extensively studied in the past for traditional compute resources, such as CPU cycles and network bandwidth. Fair queuing and its variants (e.g., Weighted Fair Queuing (WFQ)~\cite{wfq}, Self-clocked Fair Queueing~\cite{self-clock}, and Start-time Fair Queueing (SFQ)~\cite{sfq}) have been proposed to achieve the fair allocation of link bandwidth in packet-switching networks.

In the traditional packet-switching network, a \emph{flow} $f$ is referred to as a sequence of packets $p_f^0, p_f^1, \ldots p_f^n$ transmitted by a source.
Each packet $p_f^j$ is of length $l_f^j$. A flow is \emph{backlogged} during the time interval $[t_1,t_2)$ if it has one or more outstanding packets waiting in the queue at any time $t\in[t_1,t_2)$.

All fair queueing algorithms maintain a system \emph{virtual time}, $v(t)$, which intuitively measures the service received by a continuously backlogged flow in terms of bits forwarded. Each packet, $p$ is associated two tags: \textit{Start} tag $S(p)$ and a \textit{Finish} tag $F(p) = S(p) + l_p$. The Start tag (a.k.a. packet's virtual starting time) is computed based on both the system virtual time and the Finish tag (a.k.a. packet's virtual finishing time). These algorithms schedule packets in the ascending order of either the Finish or Start tags.

In networking, fairness is simply defined as follows: for any two flows, $f$ and $g$, that are backlogged  during time interval $[t_1,t_2)$, we have
\begin{equation}
\left|W_f(t_1,t_2)-W_g(t_1, t_2)\right| \leq U(f,g),
\label{eq:fq_fairness}
\end{equation}
where $W_f(t_1,t_2)$ and $W_g(t_1,t_2)$ denote the service received in bits by flow $f$ and $g$, respectively, during interval $[t_1,t_2)$, and $U(f,g)$ is a function of the properties of flows $f$ and $g$ (e.g.,  maximum packet length) and the system (e.g., link capacity). 
Intuitively, for packets-switching networks, the allocation of a link bandwidth is fair if, for any time interval during which two flows are backlogged, each of these flows receives approximately the same service in terms of the number of bits being forwarded during that interval. A scheduling algorithm is said to be \emph{work-conserving} if a link always forwards packets when the queue is not  empty~\cite{sfq_d}.

There exists a distinct strand of research~\cite{grrr,cfq,ion96} focusing on the fair scheduling of preemptible tasks (e.g., CPU scheduling). The Completely Fair Scheduler (CFS)~\cite{cfs}, implemented in Linux 2.6.23 and applying fair queuing to CPU scheduling, is closely related to our algorithm. In CFS, a ``vruntime'' is maintained for each task, and the task with the smallest ``vruntime'' is scheduled next. The tasks can be presented with a small time slice, aiming to maximize overall CPU utilization while also maximizing interactive performance.

\subsection{Challenges}
\label{sec:challenge}
There are several unique challenges in LLM serving that prevent a direct application of fair-queuing-like algorithms. The first challenge is that the definition of fairness in the context of LLM serving is unexplored, and likely very different than that discussed in fair-queuing literature.

Traditional fairness is defined by measuring the cost of requests, which is usually a fixed value that is easy to estimate in either network or operating systems.
For example, in networking, requests correspond to packets, and the cost is usually the number of bits of a packet. However, in LLM generations, how to define the cost of a request is not obvious. The cost per token can vary. Especially, processing an input (prompt) token is typically less expensive than processing an output token, as input tokens are processed in parallel while output tokens must be generated sequentially. 
Batching the output tokens from different requests can parallelize the fully connected layers but is still slower than processing input tokens for the attention layers.

Additionally, in LLM serving, the server has variable token-rate capacity, although the memory allocated for a batch is constant.
Firstly, even if the request queue is not empty, we are not guaranteed that each batch is full.
This is because we need to preserve spaces for future generated tokens, and also because the tokens added to the batch are not at the token but the request granularity. 
Secondly, the number of tokens processed highly depends on the requests' arrival patterns because of the continuous batching mechanism (Section~\ref{sec:prelim_llm_serving}).
Furthermore, the capacity depends on the mix between input and output tokens of existing requests. If all requests have long past tokens, then the capacity is likely to be low (See~\Cref{fig:var_capacity}).
Then there is no way to define a fixed amount of equal share.

\begin{figure}[ht!]
    \centering
    \includegraphics[width=0.475\textwidth]{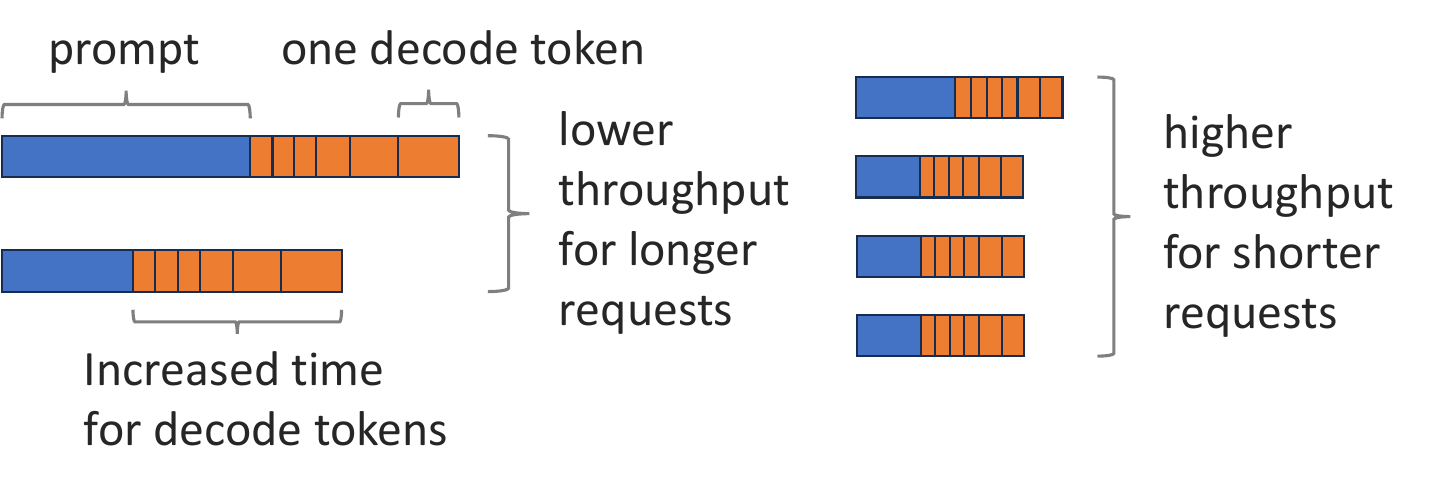}
    \caption{
    An illustration of how request length can affect the cost and server capacity in terms of throughput.
    The visualized length is not precise but for illustration purposes only.}
    \label{fig:var_capacity}
\end{figure}

The second challenge is the characteristic of unknown output length before finishing a request.
This prevents a direct adaptation of classical algorithms like SFQ and Deficit Round Robin (DRR)~\cite{drr} into the LLM serving.
SFQ-style algorithms can provide good bounds in fairness by setting the Start and Finish tags through virtual time, as introduced in \Cref{sec:prelim_fair_sharing}.
However, computing Start and Finish tags requires knowing the request length in advance.
DRR performs round-robin scheduling with a ``deficit counter'' mechanism to achieve fair scheduling of packets of variable length. 
In DRR, each client is assigned a specific quantum of service. It tracks the ``deficit'' of service for each client to ensure fairness over time. During each round, the scheduler allows each client to dispatch as many requests as possible, provided that the total length of these requests does not exceed the sum of the client's assigned quantum for that round and any accumulated deficit from previous rounds.
Without knowing the length in advance, DRR cannot determine how many jobs can be scheduled within the quantum.
Compared to CPU scheduling, although exploring adequate preemption is worthwhile in LLM serving, it cannot occur frequently. We need to define service fairness in LLM serving and operate at the granularity of individual tokens when frequent preemption is not possible. Additionally, the Completely Fair Scheduler (CFS) in CPU scheduling does not account for the concurrency of each task. It seeks fairness among individual tasks rather than among streams of tasks that can be executed concurrently.
\footnote{Our algorithm does not consider preemption. Discussion about preemption is in \Cref{sec:preemption}.}

We will give the definition for LLM serving fairness in \Cref{sec:definition} and give a scheduling algorithm to achieve the LLM serving fairness in \Cref{sec:method}.
We will outline our algorithm in a basic format for clarity, while details on its general form and integration with existing serving frameworks can be found in \Cref{sec:vtc_integration}. Although our algorithm is closely related to CFS, we also discuss the adaptation of DRR in \Cref{sec:drr}. Further discussions on future work are included in \Cref{sec:future_work}.

\section{Definition of Fairness in LLM Serving}
\label{sec:definition}

In this section, we discuss the cost of a request, and the measurement of the service a client has received (\Cref{sec:measurement}).
After defining the measurement of service, we can define fairness among clients in \Cref{sec:def_fairness}.

\begin{table}[ht]
\centering
\resizebox{1\columnwidth}{!}{
\begin{tabular}{c|p{70mm}}
\toprule
Notation & Explanation\\
\midrule
$W_f(t_1, t_2)$ & service received by $f$ during interval $[t_1, t_2)$ (write as $W(t_1, t_2)$ when $f$ is clear in the context) \\
$n_p$ & number of processed input tokens \\
$n_q$ & number of processed output tokens \\
$w_p$ & weight of input tokens in the cost function \\
$w_q$ & weight of output tokens in the cost function \\
$h(n_p, n_q)$ & customized cost function \\
\midrule
$c_i$ & virtual token counter for client $i$ \\
$Q$ & waiting queue of requests to be processed \\
$i \in Q$ & $\exists r \in Q$, $r$ is a request from client $i$ \\
$L_{input}$ & maximum number of input tokens in a request \\
$L_{output}$ & maximum number of output tokens in a request \\
$M$ & maximum number of tokens that can be fitted in a running batch  \\
$U$ & invariant bound: $\max (w_p \cdot L_{input}, w_q\cdot M)$ \\
\bottomrule
\end{tabular}
}
\caption{The upper half includes notations for service measurement. The lower half includes notations for the VTC algorithm and its analysis.
The terms $n_p, n_q$ can refer to either a single request or a single client, depending on the context.}
\label{tab:notations}
\end{table}

\subsection{Measurement of Service}
\label{sec:measurement}

In this subsection, we discuss the measurement of the service a client has received.
Specifically, we define $W_f(t_1, t_2)$ and $W_g(t_1, t_2)$ from \Cref{eq:fq_fairness} in the context of LLM serving. We omit the subscript and write $W(t_1, t_2)$ when the client is clear from the context or is irrelevant.
The number of processed input and output tokens are denoted as $n_p, n_q$.
Notations that will be introduced and used multiple times in this paper are summarized in \Cref{tab:notations}.

\textbf{Number of tokens}
A straightforward way to measure the service provided to a client is by summing the number of input tokens that have been processed and the number of output tokens that have been generated so far, i.e., $W(t_1, t_2) = n_p(t_1, t_2) + n_q(t_1, t_2)$ during the time window $[t_1, t_2)$.

\textbf{Number of FLOPs} Alternatively, one can measure the total FLOPs used in each stage, i.e., $W(t_1, t_2) = \mathit{FLOP}_{\mathit{input}}(t_1, t_2) + \mathit{FLOP}_\mathit{output}(t_1, t_2)$. This can be more precise because it captures the difference among tokens in attention computation, where tokens with longer prefixes require more computation.

However, both of these formulations cannot accurately reflect the actual LLM serving cost: The computation of the tokens at the prefill stage can be parallelized and achieve high GPU utilization. However, at the generation stage, we can only generate tokens one by one, as each token depends on all previous tokens as described in~\Cref{sec:prelim_llm_serving}. %

\textbf{Weighted number of tokens} %
To better reflect the actual LLM serving cost, a more accurate measure should capture the difference in costs of the prefilling and generation phases. One simple way to implement this idea is by using a weighted combination of the prefilling (input) tokens and decoding (output) tokens, inspired by the pricing mechanism used in OpenAI's API\footnote{\url{https://openai.com/pricing}}.
Formally, let $w_p$ be the weight of input tokens and $w_q$ be the weight of output tokens. Then, we have $W(t_1, t_2) = w_p \cdot n_p(t_1, t_2) + w_q \cdot n_q(t_1, t_2)$.
Due to its simplicity, we will use this measure extensively in our analysis and evaluation.

\textbf{Customized, unified representation.}
The definition of fairness in LLM serving can also be extended to other aspects, such as the weighted number of FLOPs or a more sophisticated method introduced in \cite{narayanan2023cheaply} that uses piecewise linear functions for the number of input and output tokens.
Generally, the service can be represented as a function of the number of input and output tokens ($n_p, n_q$, respectively).
Let $h(n_p, n_q)$ be the cost function that is monotonically increasing according to $n_p$ and $n_q$.
Our method can easily accommodate different $h$ (\Cref{sec:general_cost}).

\subsection{Fairness in LLM Serving}
\label{sec:def_fairness}

In this paper, we apply fair sharing to the domain of LLM serving to provide performance isolation across multiple clients sharing the same LLM server. In particular, we employ the classic formulation of max-min fairness~\cite{bertsekas2021data}, which computes a \emph{fair share} for the clients sharing a given server. In a nutshell, given the metric of service fairness, if a client sends requests
at no more than its fair share, all its requests are served. In contrast, if a client sends requests at more than its fair share, its excess requests will be delayed or even dropped. As a result, a misbehaving client cannot deny the service to other clients, no matter how many requests it sends. 
To achieve max-min fairness,
an idealized serving system follows desirable properties as below:

\ignore{
The fair share of clients is computed as follows:
\begin{itemize}
    \item[1.] Initially, it tries to allocate shares equally among all clients.
    \item[2.] If not all the share can be allocated equally (due to some clients requiring less than their equal share), the excess resources are reallocated to those who need more. This process is repeated iteratively.
\end{itemize}
}

\ignore{
Intuitively, the least advantaged clients or tasks are given as much resource as possible, which prevents resource monopolization by a few high-capacity clients and promotes equitable resource distribution.
Now, we can define the three desired properties for a fair LLM serving.
}

\begin{enumerate}
    \item \textbf{Backlogged clients} Any two clients $f, g$ that are continuously  backlogged during a given time interval $[t_1, t_2)$ should receive the same service during this interval, i.e. $W_f(t_1, t_2) = W_g(t_1, t_2)$. 

    \item \textbf{Non-backlogged clients} Client $f$ that is continuously backlogged during time interval $[t_1, t_2)$ should not receive less service than another client, $g$, that is not continuously backlogged during the same time interval, i.e., $W_f(t_1, t_2) \geq W_g(t_1, t_2)$.

    \item \textbf{Work-conservation} As long as there are requests in the queue, the server should not be idle.
\end{enumerate}

\ignore{
\begin{itemize}
    \item[1.] \textbf{Backlogged clients} For any two clients $f, g$ that are both backlogged during a time period $[t_1, t_2)$, they should receive the same level of service, i.e. $W_f(t_1, t_2) = W_g(t_1, t_2)$.
    \item[2.] \textbf{Non-backlogged clients} If during time period $[t_1, t_2)$ a client sends requests at less than its fair share, the client should have all its requests serviced.
    \item[3.] \textbf{Work-conserving} We also want a scheduling algorithm that is work-conserving, i.e., as long as there are requests in the queue, the scheduler tries to keep the server busy.
\end{itemize}
}

The first property means that two clients sending requests at more than their fair share will get the same service, regardless of the discrepancy between their sending rates.
The second property says that a client sending requests at a higher rate will not get less service than a client sending at a lower rate. 
Basically, the first two properties say that a misbehaving client is contained (i.e., doesn't receive more service than other backlogged clients), and not punished (i.e., doesn't receive less service than other non-backlogged clients).
Finally, the work conserving property aims to maximize the utilization, addressing a key weakness of the RPM-based solutions.

The three properties above assume an idealized fair serving system. A practical system will approximate these properties. In general, the best we can achieve is deriving bounds that are independent of the length of the time interval, e.g., in the first property, the difference between $W_f(t_1, t_2)$ and $W_g(t_1, t_2)$ is bounded by a value that is independent of $t_2 - t_1$. 
We give the formal guarantees provided by our method in \Cref{sec:vtc}.

\ignore{
The three properties above are for a perfectly fair scheduling method.
When designing the algorithms, the three properties will be achieved approximately.
For example, the difference in the services received for two backlogged clients can be bounded by a value independent of $t_2 - t_1$.
The formal guarantees with our method are provided in \Cref{sec:vtc}.
}

\section{Achieving Fairness}
\label{sec:method}
In this section, we present our algorithm VTC with proved fairness properties in \Cref{sec:vtc}, and show its generalization for customized service measurement in \Cref{sec:general_cost}.
Variants of VTC, including weighted VTC and VTC with length prediction, are introduced in \Cref{sec:method_weighted_vtc} and \Cref{sec:method_vtc_length}.

\subsection{Virtual Token Counter (VTC)}
\label{sec:vtc}

Based on insights from prior discussions, we've identified key challenges inherent in large language model (LLM) serving that hinder direct adaptation of existing algorithms to deliver approximately fair LLM service.
We then propose the Virtual Token Counter (VTC), a mechanism for achieving fair sharing in LLM Serving (\Cref{alg:vtc}).
To quantify the service received by a client we use the weighted number of tokens metric, as described in \Cref{sec:measurement}.
We discuss the generalization to other metrics in \Cref{sec:general_cost}.

Intuitively, VTC tracks the services received for each client and will prioritize the ones with the least services received, with a counter lift each time a client is new to the queue.
The \emph{counter lift} is needed to fill the gap created by a low load period of the client, so that it will not be unfairly served more in the future.
In other words, the credits for a client are utilized immediately and cannot be carried over or accumulated.
The virtual counters are updated each time a new token is generated, which can reflect the services received instantly.
This operates at the token-level granularity, and thus addresses the unknown length issue.
VTC can be easily integrated into the continuous batching mechanism, and its scheduling mechanism does not depend on the server's capacity, overcoming the problem of variable token-rate capacity.

\begin{algorithm}[ht]
\caption{Virtual Token Counter (VTC)}
\begin{algorithmic}[1]
\Require request trace, input token weight $w_p$, output token weight $w_q$, upper bound from \Cref{eq:invariant} denoted as $U$.
\State let current batch $B \leftarrow \emptyset$
\State let $c_i \leftarrow 0$ for all client $i$
\State let $Q$ denote the waiting queue, which is dynamically changing.
\State $\triangleright$ \texttt{with monitoring stream:}
\While{True}
    \If {new request $r$ from client $u$ arrived}
        \If {not $\exists r' \in Q, client(r')=u$}
            \If {$Q=\emptyset$}
                \State let $l\leftarrow$ the last client left $Q$
                \State $c_u \leftarrow \max\{c_u, c_{l}\}$
            \Else
                \State $P \leftarrow \{i \mid \exists r' \in Q, client(r')=i\}$
                \State $c_u \leftarrow \max\{c_u, \min\{c_i \mid i\in P\}\}$
            \EndIf
        \EndIf
        \State $Q \leftarrow Q + r$
    \EndIf
\EndWhile
\State $\triangleright$ \texttt{with execution stream:}
\While{True}
    \If{can\_add\_new\_request()}
        \State $B_{new} \leftarrow \emptyset$
        \While{True} 
            \State let $k \leftarrow \argmin_{i \in \{client(r) \mid r\in Q\}} c_i$
            \State let $r$ be the earliest request in $Q$ from $k$.
            \If{$r$ cannot fit in the memory}
                \State Break
            \EndIf
            \State $c_k \leftarrow c_k + w_p\cdot input\_length(r)$
            \State $B_{new} \leftarrow B_{new} + r$
            \State $Q \leftarrow Q - r$
        \EndWhile
        \State forward\_prefill($B_{new}$)
        \State $B \leftarrow B + B_{new}$
    \EndIf
    \State forward\_decode($B$)
    \State $c_i\leftarrow c_i + w_q \cdot |\{r \mid client(r)=i, r \in B\}|$
    \State $B \leftarrow$ filter\_finished\_requests($B$)
\EndWhile
\end{algorithmic}
\label{alg:vtc}
\end{algorithm}

\Cref{alg:vtc} shows how VTC can be implemented in the continuous batching framework described in \Cref{sec:prelim_llm_serving}.
A more general integration for VTC is described in \Cref{sec:vtc_integration}.
It maintains a virtual counter for each client, denoted as $\{c_i\}$.
The counters are initialized as $0$ (line 2).
The program runs with two parallel streams.

The monitoring stream listens to the incoming requests, described in lines 5-14. The new request will be added to the waiting queue $Q$ immediately.
If the new request is the only request in $Q$ for its sender client, a counter lift (lines 8-13) will happen.
Because this client could have been underloaded before, its counter could be smaller than the other active counters.
However, since the credits cannot be carried over, we need to lift it to the same level as other active counters, thus maintaining fairness among this client and others.
Lines 9-10 address the scenario where the entire system was in an idle state. We do not reset all the counters to avoid nullifying a previously accumulated deficit upon a system restart.

The execution stream is the control loop of an execution engine that implements continuous batching.
Line 17 controls the frequency of adding a minibatch $B_{new}$ of new requests into the running batch $B$.
Commonly, the server will add a new minibatch after several decoding steps.
The minibatch $B_{new}$ is constructed by iteratively selecting the request from the client with the smallest virtual counter (lines 20-26). The counters will be updated when adding new requests according to the service invoked by the input tokens (line 24).
After each decoding step (line 29), $\{c_i\}$ will be updated immediately according to the service invoked by the newly generated output tokens (line 30).

The VTC algorithm is (mostly) work-conserving because it only manipulates the dispatch order and does not reject a request if it can fit in the batch.

\subsubsection{Fairness for backlogged clients in VTC}
\label{sec:proof-overload}

In this subsection, we provide the theoretical guarantee for fairness among overloaded clients in VTC.
More precisely, the overload of a client is reflected by its backlog, which can be formally defined as follows. Intuitively, a client being backlogged means its requests are queued up.

\begin{definition}[Backlog]
A client $f$ is backlogged during time interval $[t_1, t_2)$, if at any time $t \in [t_1, t_2)$, $f$ has a request that is waiting in the queue. 
\end{definition}

We adapt the traditional definition of fairness for backlogged clients in the network to our scenario.
The following definition formally defined the item 1 introduced in \Cref{sec:def_fairness}, that for any interval, and any two continuously backlogged clients during the time interval, the difference of their received service should be bounded by a value that is independent of the interval length.

\begin{definition}[Fairness adapted from \cite{stf}]
\label{def:fairness}
Let $W_f(t_1, t_2)$ be the aggregated service received by client $f$ in the interval $[t_1, t_2)$.
A schedule is fair w.r.t. $\delta$, if for any clients $f$ and $g$, for all intervals $[t_1, t_2)$ in which clients $f$ and $g$ are backlogged, we have
$ \vert W_f(t_1, t_2) - W_g(t_1, t_2) \vert \leq \delta $.
\end{definition}

In the rest of the paper, as in \Cref{alg:vtc}, we let $Q$ denote the set of requests in the waiting queue.
We abuse the notation of $i\in Q$ for a client $i$ to indicate there exists $r\in Q$, such that $r$ is a request from client $i$.
Let $L_{input}$ and $L_{output}$ be the maximum number of input and output tokens in a request.
Let $M$ be the maximum number of tokens that can be fitted in a running batch.
Lemma~\ref{lem:invariant} reflects the core design of \Cref{alg:vtc}, that the virtual counters for active clients are chasing each other to ensure their maximum difference is bounded.
The missing proof for Lemma~\ref{lem:invariant} and all following theorems are in \Cref{sec:proof}.

\begin{restatable}{lemma}{invariant}
\label{lem:invariant}
The following invariant holds at any time in \Cref{alg:vtc} when $Q\neq \emptyset$: \begin{equation}
\max_{i\in Q} c_i - \min_{i \in Q} c_i \leq \max (w_p \cdot L_{input}, w_q\cdot M)
\label{eq:invariant}
\end{equation}
\end{restatable}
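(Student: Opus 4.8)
\textbf{Proof plan for Lemma~\ref{lem:invariant}.}

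The plan is to prove the invariant by induction on the sequence of discrete events that modify either the queue $Q$ or the counters $\{c_i\}$: (i) a new request arriving at the monitoring stream (possibly triggering a counter lift in lines 8--13), (ii) a request being moved from $Q$ into $B_{new}$ in the selection loop (lines 20--26, updating $c_k$ by $w_p\cdot input\_length(r)$), (iii) a decode step (line 30, incrementing $c_i$ by $w_q$ for every client with a request in $B$), and (iv) a request leaving $B$ when finished (line 31, which changes neither $Q$ nor the counters, so it is trivial). The base case is $Q=\emptyset$, where the invariant is vacuous; I would state the inductive hypothesis as: immediately before each event, either $Q=\emptyset$ or Eq.~\eqref{eq:invariant} holds.

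The heart of the argument is checking that each event type preserves Eq.~\eqref{eq:invariant}. First I would handle arrivals: if the arriving client $u$ is already in $Q$, the set of active clients is unchanged and no counter moves, so the bound still holds; if $u$ is new to $Q$, the counter lift sets $c_u \leftarrow \max\{c_u, \min_{i\in P} c_i\}$ (or $\max\{c_u, c_l\}$ when $Q$ was empty). I need to argue this keeps $\max_{i\in Q} c_i - \min_{i\in Q} c_i$ bounded by $U$. The key observation is that the lift never moves $c_u$ above $\min_{i\in P} c_i$ unless $c_u$ was already at least that large; combined with the fact that $c_u$ only ever increases while $u$ is out of the queue via decode steps on other — wait, no: $c_u$ does not change while $u\notin Q$. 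So if $c_u$ was set by a previous lift or selection it is bounded relative to the then-active counters; the cleanest route is to show that after the lift, $\min_{i\in Q} c_i = \min_{i\in P} c_i$ is unchanged (since $u$ is lifted to at least this value) and $c_u \le \max\{$old $\max_{i\in Q} c_i$, something bounded by $U$ above the min$\}$. The subtle sub-case is when $c_u > \min_{i\in P}c_i$ already — then I must bound how far above $c_u$ sits, which requires tracking that $c_u$'s last update was itself within $U$ of the active counters at that earlier time, and that the active minimum is non-decreasing over time (it only goes down at a lift, and a lift raises the new client to at least the old min). Establishing monotonicity of $\min_{i\in Q}c_i$ across lifts, or an equivalent potential-style statement, is what I expect to be the main obstacle.

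For the selection loop (ii): a request from client $k=\argmin_{i\in Q} c_i$ is chosen, and $c_k$ increases by $w_p\cdot input\_length(r) \le w_p\cdot L_{input} \le U$. Since $k$ was the argmin before the update, after the update $c_k \le (\min_{i\in Q'}c_i) + U$ where $Q'$ is the queue after possibly removing $r$; and removing the request can only raise the set's minimum, so the spread stays $\le U$ — I would phrase this as: the new value of $c_k$ is at most the old minimum plus $U$, and the old minimum is a lower bound for the new minimum over the remaining clients, while the max is either unchanged or equals the new $c_k$. For the decode step (iii): every client currently holding a request in $B$ gets the same additive increment $w_q \cdot (\text{count in }B)$ — but the counts differ per client, so this is not a uniform shift. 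I would bound the increment for any single client by $w_q\cdot M$ (at most $M$ tokens fit in the batch) and argue that clients receiving a larger decode increment are precisely those the selection loop preferentially admitted (i.e., low-counter clients), so the increment cannot push a trailing client past a leading one by more than $U$; more carefully, I expect the bookkeeping to show that $c_i$ after decode is at most $(\min_{j\in Q}c_j \text{ or batch-min}) + U$. Finally I would assemble these cases and note the work-conserving/no-preemption property ensures no other event touches the counters, completing the induction.
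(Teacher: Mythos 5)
Your overall strategy---induction over the discrete events that touch $Q$ or the counters, with a monotonicity statement for $\min_{i\in Q}c_i$ as the auxiliary fact---is exactly the paper's (the paper isolates the monotonicity as a separate lemma, proved by a routine case check, so the ``main obstacle'' you anticipate is in fact the easy part). Your arrival case and selection case are handled essentially as in the paper: for an arriving client $u$ with $c_u$ already above the active minimum, one bounds $c_u$ by the invariant at the last time $u$ was in $Q$ and then uses that the minimum has only increased since; for selection, the argmin client's counter rises by at most $w_p\cdot L_{input}\leq U$ above the old minimum, which lower-bounds the new minimum.

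The genuine gap is in your decode case (iii). Bounding the \emph{per-step} increment of a single client by $w_q\cdot M$ does not preserve the invariant: if the spread is already close to $U$ before the decode step, adding up to $w_q\cdot M$ to the leading client's counter overshoots the bound, and the overshoot would compound over successive decode steps. The heuristic that ``clients receiving larger decode increments are precisely the low-counter ones'' is also not the operative mechanism---a client admitted long ago may by now have the largest counter yet still hold many slots in $B$. The paper's argument is not a local one for this case: for the client $k$ attaining the maximum \emph{after} the decode step, let $t$ be the time its \emph{last} request was admitted at line 21; at that moment $c_k^{(t)}=\min_{i\in Q^{(t)}}c_i^{(t)}$, and the \emph{cumulative} increase of $c_k$ over all decode steps since $t$ is at most $w_q\cdot M$, because all of $k$'s in-flight requests together occupy at most $M$ token slots in the running batch. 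Combining this with the monotonicity of the minimum gives $\max_{i\in Q}c_i'-\min_{i\in Q}c_i'\leq w_q\cdot M\leq U$ directly, with no appeal to the inductive hypothesis for this event. Replacing your per-step increment bound with this ``anchor to the admission time of the last request'' argument is what is needed to close the proof.
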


We then introduce our main theorem which provides a bound for Definition~\ref{def:fairness}.
\begin{theorem}[Fairness for overloaded clients]
\label{thm:main_fairness}
For any clients $f$ and $g$, for any time interval $[t_1, t_2)$ in which $f$ and $g$ are backlogged, \Cref{alg:vtc} guarantees
\footnote{The service of a served request incurred by pre-filling (service for input tokens) is counted at the time when the request is added to the running batch (line 24 in \Cref{alg:vtc}), rather than the time when prefill is finished.
This is because we want to count the input tokens immediately to avoid selecting all the same $k$ at line 20 in \Cref{alg:vtc} for $B_{new}$.}
\[
\vert W_f(t_1, t_2) - W_g(t_1, t_2) \vert \leq  2\max(w_p\cdot L_{input}, w_q\cdot M).
\]
\end{theorem}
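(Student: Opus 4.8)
The plan is to reduce the theorem almost entirely to Lemma~\ref{lem:invariant} via the observation that, for a client that stays in the waiting queue, its virtual counter grows by exactly the weighted-token service it receives. Throughout, write $U = \max(w_p\cdot L_{input}, w_q\cdot M)$ for the right-hand side of \Cref{eq:invariant}.

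The first step I would carry out is an accounting identity: if client $f$ is backlogged throughout $[t_1, t_2)$, then $W_f(t_1, t_2) = c_f(t_2^-) - c_f(t_1^+)$, where $c_f(t_1^+)$ denotes the value of $c_f$ at $t_1$ after any counter lift triggered at that instant, and $c_f(t_2^-)$ its value just before $t_2$. The algorithm changes $c_f$ in only two places during normal operation: line 24 adds $w_p\cdot input\_length(r)$ when a request of $f$ enters the running batch, and line 30 adds $w_q$ per decode step per active request of $f$. By the weighted-number-of-tokens model of \Cref{sec:measurement} and the attribution convention stated in the theorem's footnote (input-token service charged at batch-admission time), the sum of these increments over $[t_1, t_2)$ is precisely $W_f(t_1, t_2)$. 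The only event that could change $c_f$ without matching service is the counter lift in lines 8--13, which fires only when a fresh request of $f$ arrives and $f$ has no other request in $Q$; since $f$ is backlogged at every $t \in [t_1, t_2)$, it has a queued request throughout, so no lift of $f$ occurs strictly inside the interval, and a lift exactly at $t_1$ is absorbed into $c_f(t_1^+)$. The identical statement holds for $g$.

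Next I would invoke the invariant at both endpoints. At (or immediately after) time $t_1$, both $f$ and $g$ are in $Q$, so $Q \neq \emptyset$ and Lemma~\ref{lem:invariant} gives $|c_f(t_1^+) - c_g(t_1^+)| \le U$. Since the execution stream advances in discrete prefill/decode iterations, each taking positive time, only finitely many counter updates occur in the bounded interval $[t_1, t_2)$; pick a time $s \in [t_1, t_2)$ after the last update to either $c_f$ or $c_g$ within the interval, so that $c_f(s) = c_f(t_2^-)$ and $c_g(s) = c_g(t_2^-)$. Both clients are still backlogged at $s < t_2$, hence in $Q$, so Lemma~\ref{lem:invariant} again yields $|c_f(t_2^-) - c_g(t_2^-)| \le U$.

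Finally, I would combine these with the accounting identity from the first step:
\[
|W_f(t_1,t_2) - W_g(t_1,t_2)| = \bigl| \bigl(c_f(t_2^-) - c_g(t_2^-)\bigr) - \bigl(c_f(t_1^+) - c_g(t_1^+)\bigr) \bigr| \le U + U = 2U,
\]
by the triangle inequality, which is the claimed bound. The conceptual weight of the argument lies in Lemma~\ref{lem:invariant}; the part that needs the most care is the first step — precisely pinning down that counter lifts cannot perturb a continuously backlogged client inside the interval, and that the line-24 charging of input tokens is consistent both with the counter update and with the definition of $W_f$, so that "counter increment" and "service received" genuinely coincide over $[t_1, t_2)$.
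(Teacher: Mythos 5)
Your proposal is correct and follows essentially the same route as the paper: establish that $W_f(t_1,t_2) = c_f^{(t_2)} - c_f^{(t_1)}$ for a continuously backlogged client (since the counter lift at line 7 cannot fire and the counter increments exactly track served input and output tokens), then apply Lemma~\ref{lem:invariant} at both endpoints and the triangle inequality to get $2\max(w_p\cdot L_{input}, w_q\cdot M)$. Your treatment of the endpoint conventions ($t_1^+$, $t_2^-$) and of why no lift occurs strictly inside the interval is more explicit than the paper's one-line justification, but the argument is the same.
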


\begin{proof}
For any $f$, if $f$ is backlogged during time $t_1$ to $t_2$, we have $W_f(t_1, t_2) = c_f^{(t_2)} - c_f^{(t_1)}$.
This is because the line 7 will not be reached for client $f$ during $t_1$ to $t_2$, and the $c_f$ keeps increasing during $t_1$ to $t_2$ by adding $w_p$ product the number of served input tokens and $w_q$ product the number of served output tokens.
By Lemma~\ref{lem:invariant}, from \Cref{eq:invariant}, we have
\begin{align*}
\vert W_f(t_1, t_2) - W_g(t_1, t_2) \vert &\leq |c_f^{(t_1)} - c_g^{(t_1)}| + |c_f^{(t_2)} - c_g^{(t_2)}| \\
&\leq 2\max (w_p \cdot L_{input}, w_q\cdot M) 
\end{align*}
\end{proof}

\begin{remark}
    An empirical illustration of this theorem can be found in Figure~\ref{fig:syn_overload_diff}, where the difference between services received by backlogged clients is bounded, regardless of how long they have been backlogged.
\end{remark}
\begin{remark}
    Line 13 can be modified to take any value between $\min\{c_i | \exists r' \in Q, client(r') = i\}$ and $\max\{c_i | \exists r' \in Q, client(r') = i\}$. The proof of \Cref{thm:main_fairness} should still hold.
\end{remark}

\begin{remark}
\label{remark:restrict_mem}
    To tighten the bound in \Cref{thm:main_fairness}, we can restrict the memory usage for each client in the running batch.
    This might compromise the work-conserving property, as we will demonstrate in \Cref{thm:lower_bound}. Therefore, there is a \textit{trade-off} between achieving a better fairness bound and maintaining work conservation.
    Heuristically, predicting the request length in advance could result in a smaller discrepancy, as detailed in \Cref{sec:method_vtc_length}. Additionally, preemption is another method to achieve smaller differences, discussed in \Cref{sec:preemption}.
\end{remark}

We also prove in the next theorem that the bound in \Cref{thm:main_fairness} is tight within a factor of $2$ for a family of work-conserving schedulers.
We say a scheduler is work-conserving if it stops adding requests to a partially-filled minibatch (line 22 in \Cref{alg:vtc}) only when it runs out of memory\footnote{Different implementation may have different criteria of ``not enough memory''. This can only be achieved heuristically because the number of output tokens is unknown before it finishes.}
but not for fairness reasons.

\begin{restatable}{theorem}{lowerbound}
\label{thm:lower_bound}
For any work-conserving schedule without preemption, there exists some query arrival sequence such that for client $f, g$ and a time period $t_1, t_2$, such that
\[ \vert W_f(t_1, t_2) - W_g(t_1, t_2) \vert \geq w_q \cdot M, \]
where clients $f, g$ are backlogged during the time $[t_1, t_2)$.
\end{restatable}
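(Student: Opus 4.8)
The plan is to construct an adversarial arrival sequence that forces any work-conserving, preemption-free scheduler into a ``memory trap'': client $g$ fills the batch with long-running requests, so that once a fresh batch slot opens, client $f$ cannot be squeezed in without violating work-conservation, yet work-conservation forbids leaving that slot idle. Concretely, I would have $g$ arrive first with a stream of requests whose combined occupancy is just under $M$ but whose per-request output lengths are large (up to $L_{output}$), so that a work-conserving scheduler — which by definition keeps adding to a partially-filled minibatch until it runs out of memory — commits all of $g$'s requests to the running batch. Then $f$ arrives, backlogged, but there is no room; because there is no preemption, $f$ must wait. During the window $[t_1,t_2)$ that both are backlogged, $g$ keeps receiving decode service (roughly $w_q$ per token per step, across all its in-batch requests) while $f$ receives nothing until a $g$-request finishes and frees enough memory.

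The key steps, in order: (1) fix parameters so that a single batch worth of $g$'s requests occupies essentially all $M$ tokens' worth of capacity and each such request still has many output tokens to generate; (2) set $t_1$ to the moment $f$ becomes backlogged (just after $g$'s requests are all admitted) and $t_2$ to a moment before any slot is freed for $f$ — i.e., while $g$'s batch is still decoding; (3) lower-bound $W_g(t_1,t_2)$ by the decode service $g$ accrues over those steps, which is $w_q$ times the number of (request, step) decode events, and arrange that this reaches at least $w_q \cdot M$ (e.g. one full ``sweep'' of decoding across a batch holding $\approx M$ tokens, or accumulated over enough steps); (4) observe $W_f(t_1,t_2) = 0$ since $f$ gets no service in that window; (5) conclude $|W_f(t_1,t_2) - W_g(t_1,t_2)| = W_g(t_1,t_2) \geq w_q \cdot M$. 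I would also double-check the edge case where the scheduler could have declined to admit all of $g$'s requests — but the work-conserving hypothesis (stops filling a minibatch only on running out of memory, not for fairness) rules that out, which is exactly why the theorem is stated for that family.

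The main obstacle is pinning down what ``$W_g(t_1,t_2) \geq w_q \cdot M$'' requires in terms of the batch-occupancy model: the counter update in line 30 adds $w_q$ per in-batch request per decode step, not $w_q$ per token of memory, so I need the arrival sequence to guarantee that enough requests (not just enough memory) are simultaneously decoding, and that the interval is long enough to accumulate $M$ worth of these events while still no memory has been freed for $f$. This is a balancing act: making $g$'s requests short enough to pack many of them in (raising the per-step decode count) versus long enough that none finish before $t_2$. I expect this is handled by taking $g$'s requests to have the maximal output length $L_{output}$ (so nothing finishes early) and letting the window span enough steps — and if a single minibatch does not already contain $M/(\text{tokens per request at admission})$ requests, one can instead let the window be long enough that the cumulative decode service crosses $w_q \cdot M$, which it must since $g$ is continuously served and work-conservation keeps the batch non-empty. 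The remaining care is purely bookkeeping: confirming $f$ truly receives zero service (its requests never enter the batch, and prefill service in line 24 is only charged on admission), so the difference is exactly $W_g(t_1,t_2)$.
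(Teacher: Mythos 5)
Your construction is essentially the paper's own proof with the roles of $f$ and $g$ swapped: one client's requests are admitted first, fill the entire memory $M$, and (having identical output lengths) all finish at the same time $T$, so the other client is starved over the whole interval while the first accrues up to $w_q\cdot M$ of decode service. The balancing concerns you raise (nothing finishing early, accumulating a full $M$ tokens' worth of decode events) are resolved exactly as you suggest, by uniform maximal output lengths and taking $t_2$ to be the moment the batch completes, so the proposal is correct and matches the paper's argument.
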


As we mentioned before, output tokens are more expensive than input tokens, so normally we have $w_q > w_p$. Therefore the right-hand side of the inequality in \Cref{thm:main_fairness} is $2w_q\cdot M$, which is $2\times$ of the lower bound in \Cref{thm:lower_bound}.

\subsubsection{Fairness for non-backlogged clients in VTC}
\label{sec:proof-non-overload}

In this subsection, we discuss item 2 in \Cref{sec:def_fairness}.
A backlogged client will not receive less service than another client.
This can be reflected in the following theorem.

\begin{restatable}{theorem}{nopunish}
\label{thm:nopunish}
If a client $f$ is backlogged during time interval $[t_1, t_2)$, for any client $g$, there is
\[ W_f(t_1, t_2) \geq W_g(t_1, t_2) - 4 U. \]
Here $U$ is the upper bound from \Cref{eq:invariant}.
\end{restatable}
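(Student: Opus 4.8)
The plan is to show that a backlogged client $f$ cannot fall too far behind any other client $g$, by tracking the counter value $c_f$ relative to $c_g$. The key observation is that whenever $f$ is backlogged, it always has a request waiting in $Q$, so $f$ participates in the $\argmin$ selection at line 20. The scheduler only advances $c_f$ by adding new requests (line 24) or decoding (line 30), and by Lemma \ref{lem:invariant} all clients present in $Q$ have counters within $U$ of each other at all times $Q \neq \emptyset$. So as long as $g$ is also in $Q$, the counters $c_f$ and $c_g$ differ by at most $U$. The subtlety is that $g$ need not be backlogged, so $g$ may leave and re-enter the queue; we must control what happens to $c_g$ across those gaps and how $c_f$ evolves during intervals when $g \notin Q$.

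First I would set up the identity $W_f(t_1, t_2) = c_f^{(t_2)} - c_f^{(t_1)}$ (valid because $f$ backlogged means line 7 is never triggered for $f$, exactly as in the proof of Theorem \ref{thm:main_fairness}), and observe that $W_g(t_1, t_2) \le c_g^{(t_2)} - c_g^{(t_1)}$ in general — counter lifts (line 10 or line 13) only \emph{increase} $c_g$ beyond what its served tokens account for, so the served service is at most the counter increment. Hence it suffices to bound $c_g^{(t_2)} - c_g^{(t_1)} - (c_f^{(t_2)} - c_f^{(t_1)}) = (c_g^{(t_2)} - c_f^{(t_2)}) - (c_g^{(t_1)} - c_f^{(t_1)})$ by $4U$, i.e., to bound the swing of the difference $c_g - c_f$ by $4U$ over any interval during which $f$ is backlogged.

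Next I would argue pointwise: at any time $t$ during $[t_1,t_2)$, either (a) $g \in Q$, in which case both $f, g \in Q$ and Lemma \ref{lem:invariant} gives $|c_g^{(t)} - c_f^{(t)}| \le U$; or (b) $g \notin Q$, in which case I claim $c_g^{(t)} - c_f^{(t)} \le U$ still holds, because $c_g$ was last set either to $0$ (and $c_f \ge 0$ always) or, more importantly, $c_g$ has not increased since $g$ last left $Q$ — the last time $g$ was in $Q$, say at time $t' \le t$, we had $c_g^{(t')} - c_f^{(t')} \le U$ by case (a), and $c_f$ only increases, $c_g$ is frozen, so $c_g^{(t)} = c_g^{(t')} \le c_f^{(t')} + U \le c_f^{(t)} + U$. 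The reverse direction $c_f^{(t)} - c_g^{(t)} \le U$ in case (b) requires care when $g$ rejoins: when $g$ re-enters at some later time the lift raises $c_g$ up toward the active minimum (line 13), and meanwhile $f$'s counter sits within $U$ of that active minimum, so the lift can overshoot $c_f$ by at most one further quantum; this is where the factor $2U$ (rather than $U$) on each side enters, and summing the $t_1$ and $t_2$ endpoints gives $4U$.

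The main obstacle I anticipate is the bookkeeping in case (b) around re-entry: precisely quantifying how far $c_f$ can drift below $c_g$ while $g$ is absent, given that $f$ is always selected (being the/an $\argmin$) but the batch memory constraint at line 22 may starve $f$ of room, and given that $c_g$ gets lifted to $\min\{c_i \mid i \in P\}$ on re-entry rather than to $c_f$ itself. I would handle this by a case analysis on whether, at the moment of $g$'s re-entry, $f$ is or is not the active minimizer, combining Lemma \ref{lem:invariant} (which bounds the spread of \emph{all} active counters, including $f$'s, by $U$) with the observation that the lift sets $c_g$ into the active range $[\min_i c_i, \max_i c_i]$, so $c_g$ cannot exceed $c_f$ by more than $U$ right after the lift, and thereafter the case-(a) bound takes over until $g$ next leaves. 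Carefully chaining these sub-intervals and adding the two endpoint contributions yields the stated $4U$ bound.
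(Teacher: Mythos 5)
Your setup contains several correct ingredients --- the identity $W_f(t_1,t_2)=c_f^{(t_2)}-c_f^{(t_1)}$ for the backlogged client, the observation that $W_g(t_1,t_2)\leq c_g^{(t_2)}-c_g^{(t_1)}$ because lifts only over-count service, and the use of Lemma~\ref{lem:invariant} whenever both clients are in $Q$ --- but the reduction to ``bound the swing of $c_g-c_f$ over $[t_1,t_2)$ by $4U$'' cannot work, and the pointwise claim it rests on is false. If $g$ has been idle for a long time before $t_1$, then $c_g^{(t_1)}$ can sit arbitrarily far below $c_f^{(t_1)}$; when $g$'s first request arrives inside the interval, line~13 lifts $c_g$ up to the active minimum, so $c_g^{(t_2)}-c_g^{(t_1)}$ contains an arbitrarily large lift that is not service. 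Consequently the swing $(c_g^{(t_2)}-c_f^{(t_2)})-(c_g^{(t_1)}-c_f^{(t_1)})$ is unbounded, your case-(b) claim that $c_f^{(t)}-c_g^{(t)}\leq U$ (or $2U$) when $g\notin Q$ is simply wrong, and the chain $W_g-W_f\leq \text{swing}\leq 4U$ cannot be completed. A secondary inaccuracy: $c_g$ is not frozen while $g\notin Q$, since requests of $g$ that are already in the running batch keep incrementing $c_g$ at line~30; this one is repairable, because those in-flight requests can generate at most $M$ further tokens, costing at most $w_q\cdot M\leq U$.

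The paper's proof sidesteps the lift by never comparing counters at a time when $g$ is outside the queue. It takes $t_1'$ and $t_2'$ to be the first and last instants in $[t_1,t_2)$ at which $g$ is backlogged, applies Lemma~\ref{lem:invariant} only at those two instants (where both $f,g\in Q$) to get $c_g^{(t_1')}\geq c_f^{(t_1')}-U$ and $c_g^{(t_2')}\leq c_f^{(t_2')}+U$, hence $W_g(t_1',t_2')\leq c_g^{(t_2')}-c_g^{(t_1')}\leq c_f^{(t_2)}-c_f^{(t_1)}+2U$ by monotonicity of the counters, and bounds the service on the edge intervals $[t_1,t_1')$ and $[t_2',t_2)$ by $U$ each, since $g$ submits no new requests there. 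The large initial lift is thus absorbed into the choice of endpoint $t_1'$ rather than appearing inside a counter difference. If you want to retain your ``track $c_g-c_f$'' framing, you would have to subtract the lift amounts from $c_g^{(t_2)}-c_g^{(t_1)}$ before comparing, which in effect forces you back to the paper's decomposition.
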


In addition to that,
clients who send requests constantly less than their share should have their requests serviced nearly instantly.
This property intuitively can be implied by the first item in \Cref{sec:def_fairness}, as if a low-rate client cannot be served on time, it becomes backlogged, which requires the same level of service with backlogged clients.
We formally prove this property to offer a fairness assurance for clients who are not overloaded. This intuitively acts as a safeguard against misbehaving clients~\cite{demers1989analysis}.

We start with Definition~\ref{def:capacity_bound} and \Cref{lem:simple_iso} discussing the aspect of latency bounds. Intuitively, if a client is not backlogged and has no requests running, the next request from it will be processed within a latency bound that is independent of the request rate of other clients.
\begin{definition}
\label{def:capacity_bound}
    Assume there are $n$ active clients during [$t_1$, $t_2$), and the server capacity at time $t \in [t_1, t_2)$ is defined as $S(t)$, where \[ \int_{t_1}^{t_2} S(t) \,dt = \sum_{i=1}^{n} W_{i}(t_1, t_2)\] 
    Because the server capacity is always positive and bounded, there exists $a, b\in \mathbf{R}^{+}$ such that $\forall$ t, $a < S(t) \leq b$.
\end{definition}

\begin{restatable}{theorem}{simpleiso}
    \label{lem:simple_iso}
    Let $A(r)$ and $D(r)$ denote the arrival time and dispatch time of a request $r$. Assume there are in total $n$ clients, $\forall t_1, t_2$, if at $t_1$, a client f is not backlogged and has no requests in the running batch, then the next request $r_f$ with $t_1 < A(r_f) < t_2$ will have its response time bounded: 
    \begin{equation}
        D(r_f) - A(r_f) \leq 2 \cdot (n-1) \cdot \frac{\max(w_p \cdot L_{input}, w_q\cdot M)}{a} 
    \end{equation}
    Here $a$ is the lower bound of the capacity in Definition~\ref{def:capacity_bound}.
\end{restatable}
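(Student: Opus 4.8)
The plan is to bound the \emph{aggregate} service the server can render between $A(r_f)$ and $D(r_f)$ and then divide by the guaranteed processing rate $a$. Write $\tau_0 = A(r_f)$. Throughout $[\tau_0, D(r_f))$ the request $r_f$ waits in $Q$, so $f$ is backlogged on this interval, while no request of $f$ is dispatched or decoded before $D(r_f)$ (because $r_f$ is the earliest queued request of $f$); hence $W_f(\tau_0, D(r_f)) = 0$. Moreover, since $f$ held no queued request at $\tau_0$, the counter-lift at line~7 of \Cref{alg:vtc} fires once and never again for $f$ on this interval, and as $f$ receives no service its counter is frozen at some value $c_f^\star$ for the whole interval. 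By \Cref{lem:invariant}, at every instant of $[\tau_0, D(r_f))$ any client $g$ that currently has a request in $Q$ satisfies $c_f^\star - U \le c_g \le c_f^\star + U$.

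The next step is to show that each of the at most $n-1$ clients $g \ne f$ receives at most $2U$ service on $[\tau_0, D(r_f))$. If $g$ is backlogged throughout the interval, this follows directly from \Cref{thm:main_fairness} together with $W_f(\tau_0, D(r_f)) = 0$. Otherwise $g$ enters and leaves $Q$; here I use that counters are non-decreasing and that counter-lifts can only raise a counter, so $W_g(\tau_0, D(r_f))$ is at most the net increase of $c_g$ over the interval. The increase accrued while $g$ is in $Q$ is at most $2U$ by the window above; once $c_g$ exceeds $c_f^\star$ no request of $g$ is admitted (line~20 always selects the minimum-counter client and $f$ is available with counter $c_f^\star$); and the only remaining service to $g$ is decoding of requests already in the batch, whose outstanding output tokens all reside within the $M$-token batch and hence contribute at most $w_q M \le U$. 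Combining the ``catch-up to $c_f^\star$'' part with this single over-shoot gives $W_g(\tau_0, D(r_f)) \le 2U$.

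Summing over clients, the total service delivered on $[\tau_0, D(r_f))$ is at most $2(n-1)U$. By \Cref{def:capacity_bound} this total equals $\int_{\tau_0}^{D(r_f)} S(t)\,dt \ge a\,(D(r_f) - \tau_0)$, so $D(r_f) - A(r_f) \le 2(n-1)U/a = 2(n-1)\max(w_p\cdot L_{input}, w_q\cdot M)/a$, which is the claimed bound.

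I expect the main obstacle to be the per-client bound for clients that are not continuously backlogged: one must show that decode service from requests already running when $r_f$ arrives, or admitted while $r_f$ waits (including during any stretch in which $f$ is already the minimum-counter client but $r_f$ cannot yet fit in the batch), cannot combine with prefill service to exceed $2U$. The essential ingredients are \Cref{lem:invariant} (queued clients' counters stay within $U$ of the frozen $c_f^\star$), monotonicity of the counters, and the confinement of a client's in-flight output tokens to the $M$-token batch; the remaining combination with the capacity bound is routine.
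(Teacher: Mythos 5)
Your proof follows essentially the same route as the paper's: freeze $c_f$ at its post-lift value, use Lemma~\ref{lem:invariant} to confine every other queued client's counter to a window of width $2U$ around it, bound the aggregate service delivered before $D(r_f)$ by $2(n-1)U$, and divide by the capacity lower bound $a$ from Definition~\ref{def:capacity_bound}. Your write-up is in fact more explicit than the paper's own one-paragraph argument (which simply asserts that ``in the worst case these counters are incremented sequentially''), and your per-client $2U$ accounting --- catch-up to $c_f^\star$ plus a single overshoot from the already-admitted batch --- is at the same level of rigor as the paper's own step for line~30 in the proof of Lemma~\ref{lem:invariant}.
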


\begin{remark}
    The bound in~\Cref{lem:simple_iso} is irrelevant to the request rate of others, giving an upper bound for latency against ill-behavior clients.    
\end{remark}

The above is about one request not getting delayed.
The following theorem shows that during time period $[t_1, t_2)$, if there are $n$ active clients sending requests, and client $f$ is sending requests with a rate constantly less than $1/n$ of the server's capacity (with some constant gap), client $f$ should have all its requests been served.

\begin{restatable}{theorem}{isolation}
(Fairness for non-overloaded clients)
\label{thm:isolation}
For any time interval $[t_1, t_2)$, we claim the following.

Assume a client $f$ is not backlogged at time $t_1$ and
for any time interval $[t, t_2), t_1\leq t < t_2$, $f$ has requested services less than $\frac{T(t, t_2)}{n(t, t_2)} - 5U$, where $T(t, t_2)$ is the total services received for all clients during the interval $[t, t_2)$, $n(t, t_2)$ is the number of clients that have requested services during the interval, and $U$ is the upper bound from \Cref{eq:invariant}.

Then, all of the services requested from $f$ during the interval $[t_1, t_2)$ will be dispatched.
\end{restatable}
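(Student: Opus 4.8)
The plan is to argue by contradiction: suppose some request from $f$ in $[t_1, t_2)$ is never dispatched by time $t_2$. Then $f$ becomes backlogged at some time $t^* \in [t_1, t_2)$ — namely the arrival time of the first request of $f$ that does not get dispatched (or an earlier arrival that also stays queued) — and remains backlogged on $[t^*, t_2)$. Pick the latest such $t^*$, so that just before $t^*$ client $f$ had no outstanding queued request. At this moment we can control $c_f$: by the counter-lift rule (lines 7–13 of \Cref{alg:vtc}) when $f$'s request enters an empty-for-$f$ queue, $c_f$ is raised to at least the minimum active counter, and by \Cref{lem:invariant} every active counter, in particular $c_f$ right after the lift, is within $U$ of the current minimum active counter. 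So $c_f^{(t^*)} \le \min_{i} c_i^{(t^*)} + U$ over clients active at $t^*$, hence $c_f^{(t^*)} \le c_i^{(t^*)} + U$ for every client $i$ active at $t^*$; combined with monotonicity of the counters this gives an upper bound on $c_f^{(t^*)}$ in terms of other clients' counters.

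Next I would translate "a request of $f$ is still queued at $t_2$" into a lower bound on $c_f^{(t_2)}$ relative to the other active counters. Since VTC always dispatches from the client of minimum counter, if $f$ still has a queued request at time $t_2$ then at every dispatch opportunity in $[t^*, t_2)$ we had $c_f \ge c_k$ for the chosen client $k$, and more usefully $c_f^{(t_2)} \ge \min_{i \in Q} c_i^{(t_2)}$, and by \Cref{lem:invariant} again $c_f^{(t_2)} \ge \max_{i\in Q} c_i^{(t_2)} - U$. Now sum the total service: $T(t^*, t_2) = \sum_{i} W_i(t^*, t_2) = \sum_i (c_i^{(t_2)} - c_i^{(t^*)})$ over the $n(t^*,t_2)$ clients active in that window (for backlogged ones this is exact by the argument in the proof of \Cref{thm:main_fairness}; for clients that go idle the counter identity still over/under-counts by at most a bounded amount, which I will absorb into the constant). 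Using the two bounds on $c_f$ — small at $t^*$, large at $t_2$ — I get $W_f(t^*,t_2) = c_f^{(t_2)} - c_f^{(t^*)} \ge c_i^{(t_2)} - c_i^{(t^*)} - O(U)$ for each active $i$, i.e. $f$'s service is within $O(U)$ of every other client's service in the window, so averaging, $W_f(t^*, t_2) \ge \frac{T(t^*,t_2)}{n(t^*,t_2)} - O(U)$. With the constant tracked carefully this is $\ge \frac{T(t^*,t_2)}{n(t^*,t_2)} - 5U$ (this is where the specific $5U$ in the hypothesis is calibrated). But the hypothesis says $f$ \emph{requested} less than $\frac{T(t^*,t_2)}{n(t^*,t_2)} - 5U$ of service on $[t^*, t_2)$, and one cannot be served more than one requests, contradiction — so no request of $f$ stays undispatched.

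The main obstacle I anticipate is bookkeeping the constant precisely and handling the clients that are active in the window $[t^*, t_2)$ but not continuously backlogged: for those the clean identity $W_i = c_i^{(t_2)} - c_i^{(t^*)}$ from the \Cref{thm:main_fairness} proof fails, because their counters get lifted (possibly repeatedly) on rejoining, and their $W_i$ and counter increments can differ by up to $U$ each time they rejoin — though each rejoin lift only \emph{raises} $c_i$, so it can only make $\sum_i(c_i^{(t_2)}-c_i^{(t^*)})$ an overestimate of $T(t^*,t_2)$ by a controlled amount, which is the favorable direction. I would also need the secondary fact (essentially \Cref{lem:simple_iso}) that $t^*$ is well-defined and that "$f$ backlogged on $[t^*,t_2)$" is the right event — i.e. that a never-dispatched request really does keep $f$ backlogged until $t_2$, which is immediate from the no-preemption, FCFS-within-client dispatch order. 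Everything else is the same two-sided counter-gap estimate used for \Cref{thm:main_fairness}, just applied with one side pinned down by the counter-lift rule instead of by the backlog assumption.
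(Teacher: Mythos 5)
Your overall strategy is the paper's: argue by contradiction, take $t^*$ to be the last time $f$ switches from non-backlogged to backlogged (so $f$ is backlogged on $[t^*,t_2)$ and $W_f(t^*,t_2)=c_f^{(t_2)}-c_f^{(t^*)}$), show $f$'s service on that window is within $O(U)$ of the per-client average, and contradict the rate hypothesis. Where you differ is in how you obtain the middle inequality: the paper gets it in two lines by pigeonhole (some client $g$ has $W_g(t^*,t_2)\geq T(t^*,t_2)/n(t^*,t_2)$) followed by a citation of \Cref{thm:nopunish}, which gives $W_f\geq W_g-4U$; you instead re-derive the two-sided counter-gap estimates from scratch, including the case analysis for clients that are not continuously backlogged. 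That inline derivation is exactly the content of \Cref{thm:nopunish}, so it is workable but duplicates an already-available lemma, and your ``averaging over all active $i$'' step needs the $O(U)$ slack to hold uniformly for every active client, whereas pigeonhole only needs it against the single best-served client.

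The one substantive loose end is your closing step, ``one cannot be served more than one requests, contradiction.'' That is not true of a time window as stated: $W_f(t^*,t_2)$ counts service \emph{delivered} during $[t^*,t_2)$, which includes output tokens generated for requests of $f$ that were issued (and dispatched) \emph{before} $t^*$ and are still in the running batch at $t^*$; those tokens do not count toward the service $f$ \emph{requested} during $[t^*,t_2)$. The paper closes this gap explicitly: such residual requests contribute at most $U$ to $W_f(t^*,t_2)$ (at most $M$ tokens in the batch, each costing $w_q$), so requests issued inside the window account for at least $T/n-4U-U=T/n-5U$ of service, which is where the fifth $U$ in the hypothesis comes from. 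Your accounting lumps everything into an untracked $O(U)$, so you would need to exhibit this term explicitly to land on the stated constant; without it the contradiction does not quite close.
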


\subsection{Adapt to Different Fairness Criteria}
\label{sec:general_cost}

\Cref{alg:vtc} is designed for fairness with the service function $W(t_1, t_2)$ as a linear combination of the number of processed input tokens and the number of generated tokens.
For a different definition of $W(t_1, t_2)$, \Cref{alg:vtc} can be easily modified to update the counter according to the other definitions described in \Cref{sec:measurement}.

Assume we aim for fairness using $\sum_{r} h(n_p^r, n_q^r)$ as the metric of service, where $h$ is a specific function. In this context, $r$ indexes the served requests, and $n_p^r, n_q^r$ represent the number of input and output tokens served for request $r$, respectively. 
Line 24 will be changed to
\[ c_k \leftarrow c_k + h(n_p^r, 0). \]
Line 30 will be changed to
\[ c_i \leftarrow c_i + \sum_{r\mid client(r) = i, r\in B} \left( h(n_p^r, n_q^r) - h(n_p^r, n_q^r - 1) \right). \]

The fairness bound will also be changed according to $h(\cdot, \cdot)$. Under the assumption that output tokens are more expensive than input tokens, the bound will become the maximum value of aggregated $h(\cdot, \cdot)$ for a set of requests that can be fitted in one running batch.
\Cref{alg:vtc_general} in \Cref{sec:vtc_integration} is the pseudocode of a general VTC framework.

\subsection{Weighted VTC}
\label{sec:method_weighted_vtc}
VTC can be applied when clients have tiers. Similar to weighted fair queuing, clients can have different weights to represent their priority in service. If a client \( f \) has a weight \( w_1 \), that is twice the weight \( w_2 \) of client \( g \), client \( f \) is expected to receive twice the service than client \( g \). When they are continuously backlogged during the interval \( [t_1, t_2) \), we want \(\left| \frac{W_f(t_1, t_2)}{w_1} - \frac{W_g(t_1, t_2)}{w_2} \right|\) to be bounded instead of \(\left| W_f(t_1, t_2) - W_g(t_1, t_2) \right|\).

Weighted VTC can be easily implemented by modifying the lines that update the virtual tokens.
For example, the line 22 in \Cref{alg:vtc_general} will be changed to
\[ c_i \leftarrow c_i + \dfrac{\sum_{r\mid client(r) = i} \left( h(n_p^r, n_q^r) - h(n_p^{r(old)}, n_q^{r(old)}) \right)}{w_i}. \]
Here $c_i$ is the virtual counter of client $i$, and $w_i$ is its corresponding weight.

\subsection{VTC with Length Prediction}
\label{sec:method_vtc_length}

As mentioned in Remark~\ref{remark:restrict_mem}, using VTC with length prediction can heuristically reduce the service discrepancy. In standard VTC, the counters only reflect served tokens. Tokens generated in the future can only be passively added to the counter. This results in a large service discrepancy because requests are overly added due to underestimation of their costs at the time of prompting, leading to the forced serving of over-compensated output tokens. Incorporating a prediction mechanism can help reduce this variance.

The theoretical worst-case scenario won’t change, according to the lower bound proved in \Cref{thm:lower_bound}.
But practically, the average-case service discrepancy could be smaller.

The modified pseudocode of VTC with length prediction is described in \Cref{alg:vtc_length_predict} in \Cref{sec:vtc_length}.
Intuitively, when a request \( r \) is selected, the cost associated with the predicted number of output tokens is immediately added to the virtual counter of the client sending the request. During the actual decoding process, adjustments are made to the virtual counter based on the actual number of output tokens produced. If the actual number of tokens exceeds the prediction, the virtual counter is increased accordingly. Conversely, if fewer tokens are generated than predicted when finished, the virtual counter is reduced. The effectiveness of the length predictor is contingent upon both the workload and the accuracy of predictions, as demonstrated in our evaluations.

\section{Evaluations}
\label{sec:eval-service}

In this section, we evaluate VTC against other alternatives under different workloads.
The results confirm the fairness properties introduced in \Cref{sec:definition} of VTC, and show that all other alternatives will fail in at least one workload.

\subsection{Setup}
\label{sec:setup}

\textbf{Implementation}
We implement our VTC and other baseline schedulers in S-LoRA\cite{sheng2023slora}, a system that serves a large amount of LoRA adapters concurrently.
Its backbone is a general serving system adapted from LightLLM~\cite{lightllm}.
It includes the implementation of continuous batching~\cite{yu2022orca} and PagedAttention~\cite{kwon2023efficient}\footnote{with block size equals 1.}.
Our VTC scheduler is built on top of those two techniques.
Our implementation is elegant and can be implemented as a thin layer on top of the existing scheduler, it contains only about 100 lines of code on top of S-LoRA. The simplicity demonstrates its wide applicability.
Fairness can be considered among general clients, and our experiments are done in this way.
But we would like to note that fairness also could be taken into consideration among adapters, especially under the scenario of personalization that uses one adapter per customer, which originally motivated this paper.

\paragraph{Baselines}
In this section, we benchmark VTC and the baselines as below:
\begin{itemize}
    \item First Come First Serve (FCFS):
    In the First-Come-First-Serve method, requests are handled strictly in the order they are received, irrespective of the requesting client.
    This is the default scheduling strategy in many prevalent LLM serving systems, including vLLM~\cite{kwon2023efficient} and Huggingface TGI~\cite{tgi}.

    \item Request per minute (RPM):
    This method limits the maximum number of requests that a client can make to the server within a one-minute timeframe. The definition of service corresponds to \Cref{sec:definition}.
    When a client exceeds this limit, subsequent requests are blocked until the limit resets at the start of the next minute.

    \item Least Counter First (LCF):
    This is a variant of VTC without the counter lift component.
    Each client will maintain a counter for the service it received so far.
    The request from the client with the smallest counter will be scheduled each time.
\end{itemize}
We also benchmark the VTC with length predictions as described below:
\begin{itemize}
    \item VTC (predict): This variant of VTC, detailed in \Cref{alg:vtc_length_predict}, utilizes the average output length of the last five requests from each client to predict the output length.

    \item VTC (oracle): This variant employs a hypothetical output length predictor that achieves 100\% accuracy.
\end{itemize}

\paragraph{Synthetic Workload}
We run Llama-2-7b on A10G (24GB), using the memory pool for the KV cache with size 10000\footnote{There are in total 10000 tokens for KV cache that can be stored on GPU.}.
We use various workloads to demonstrate different aspects of fairness, and compare VTC with other baselines.
The detailed results are in \Cref{sec:exp_synthetic}.
We start with synthetic workloads to give a clear message for fairness properties.

\paragraph{Real Workload}
To validate the effectiveness of VTC in more complex real-world scenarios, we also experiment with VTC and other baselines under workloads constructed from the trace log of LMSYS Chatbot Arena~\cite{zheng2023judging, zheng2023lmsys}, which is an LLM serving platform for real-world clients.

\paragraph{Ablation Study}
In the ablation study, to evaluate the impact of different memory pool sizes and request lengths on the scheduling fairness, we run Llama-2-13b on A100 (80GB) with a memory pool of size 35000 and 65000 respectively. For each memory pool size, we evaluate the absolute difference in the accumulated service of two clients.

\paragraph{Metrics}
We apply the weighted number of tokens described in \Cref{sec:measurement} as the measurement of services in our evaluation.
Following OpenAI pricing, we set $w_p = 1$ and $w_q = 2$.
\begin{itemize}
    \item The service received by client $i$ at time $t$ is measured as $W_i(t-T, t+T)$ for a certain $T$.
    \item The absolute difference in service between clients is quantified based on accumulated services, represented as $\max_{i, j} |W_i(0, t) - W_j(0, t)|$.
    \item The response time of client $i$ at time $t$ is measured as the average first token latency of the requests sent by client $i$ during the time window $[t-T, t+T]$.
\end{itemize}
In all settings, we set $T=30$ seconds.

We employ \emph{service difference} as a quantitative metric to assess the deviation from ideal fairness. A smaller difference in service indicates more equitable scheduling.
Formally, the service difference between two clients is defined as the minimum of two values: the difference between their received services, and the difference between the lower service and its corresponding request rate. For example, consider two clients that received services \(s_1\) and \(s_2\), such that \(s_1 \leq s_2\), and let \(r_1\) denote the request rate sent from the first client. Then the service difference is defined as \(\min(s_2 - s_1, \vert r_1 - s_1\vert )\).

\paragraph{VTC Variants}
The experiments for weighted VTC are presented in \Cref{sec:weighted_vtc}, demonstrating its capability to serve clients with varying priorities.
To illustrate the versatility of the service function beyond the linear model used in our primary analysis, we evaluate a profiled service cost function in \Cref{sec:vtc_profile}, which is a quadratic function.
Additional experiments on VTC with length prediction are detailed in \Cref{sec:vtc_length}.

\subsection{Results on Synthetic Workloads}
\label{sec:exp_synthetic}

We design a set of experiments to visualize the fairness properties of VTC. We start with synthetic traces to show plots reflecting the ideal case's fairness. We experiment from the simplest setting, where clients send requests following a uniform distribution with the same input and output length, to complex settings, where requests arrive stochastically, with various input and output lengths.

\begin{figure}[ht]
    \subfloat[Absolute difference for accumulated service]{\includegraphics[width=0.2\textwidth]{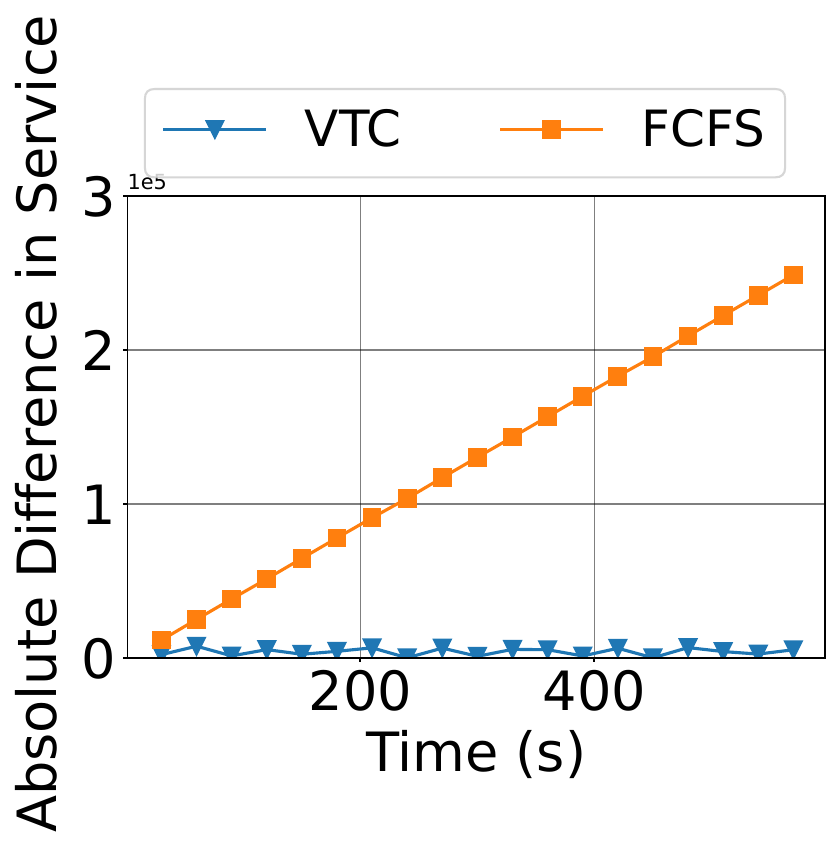}\label{fig:syn_overload_diff}}
    \hfill
    \subfloat[Received service rate, calculated as an average of 60s time windows. (VTC)]
    {\includegraphics[width=0.24\textwidth]{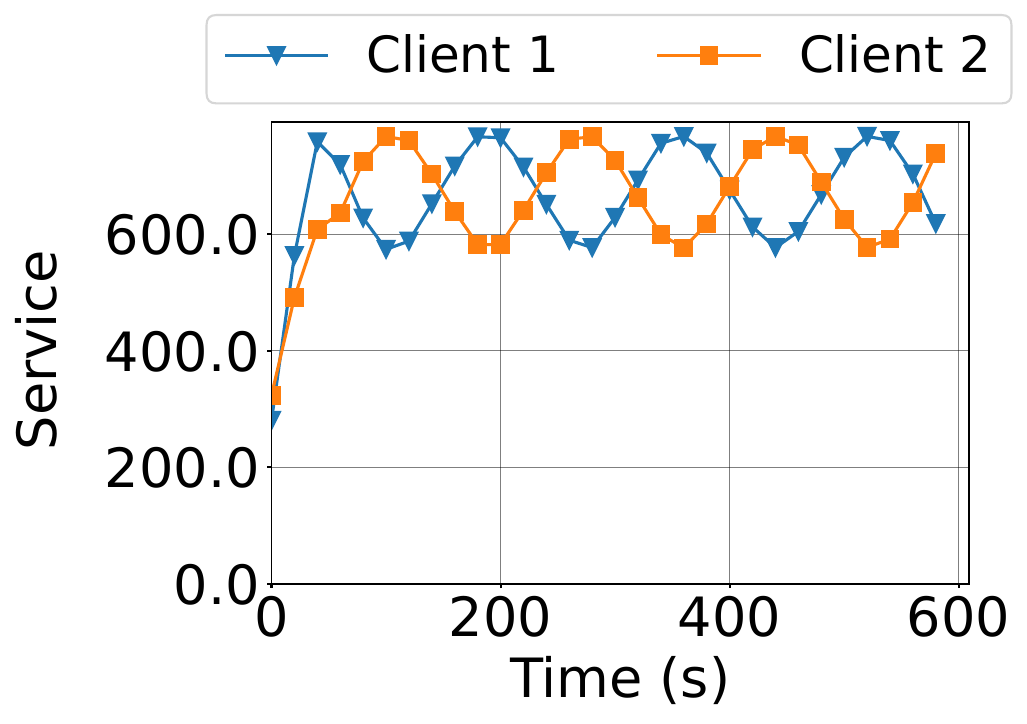}\label{fig:syn_overload_service}}
    \caption{Two clients with different request rates and both overloaded. Client 1 sends 90 requests per minute. Client 2 sends 180 requests per minute, both evenly spaced out so that each request is sent at a consistent time interval throughout the minute. Every request has input lengths of 256 and output lengths of 256. Both clients are backlogged because they exceed the server capacity.}
    \label{fig:syn_overload}
\end{figure}

\begin{figure}[t]
    \subfloat[Received service rate (VTC).]{\includegraphics[width=0.23\textwidth]{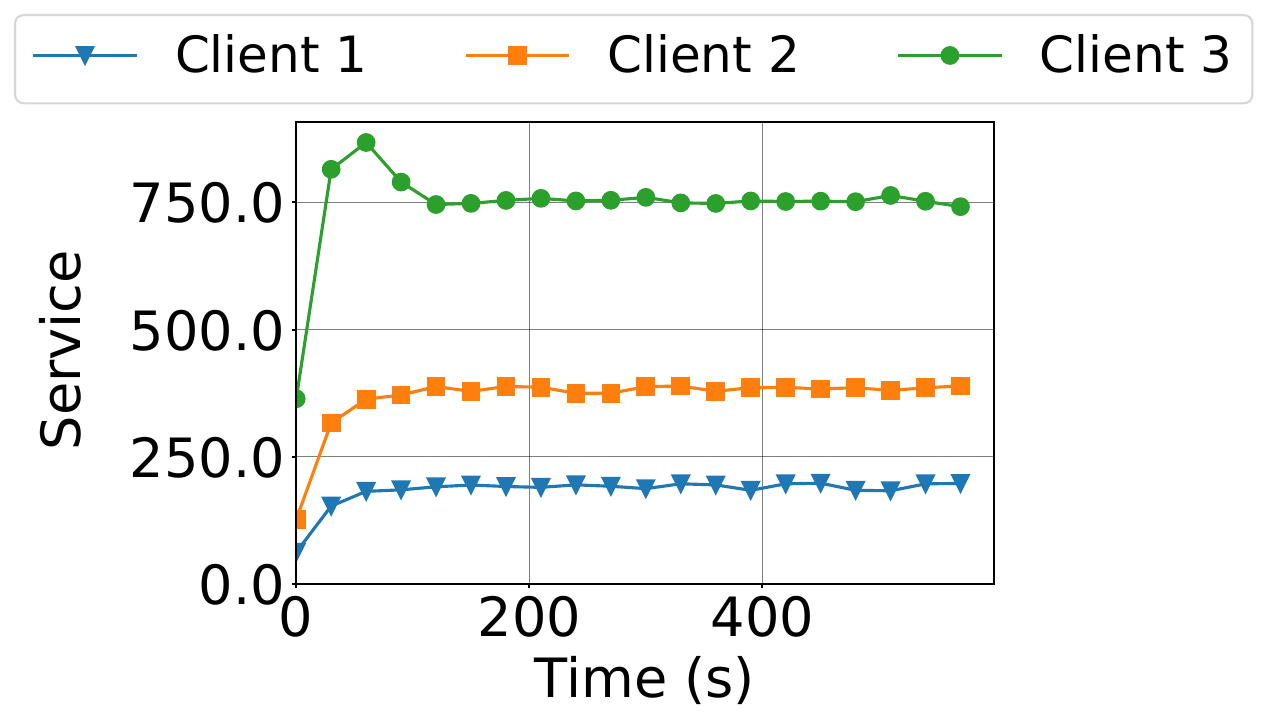}\label{fig:syn_proportional_service}}
    \hfill
    \subfloat[Response time (VTC).]{\includegraphics[width=0.23\textwidth]{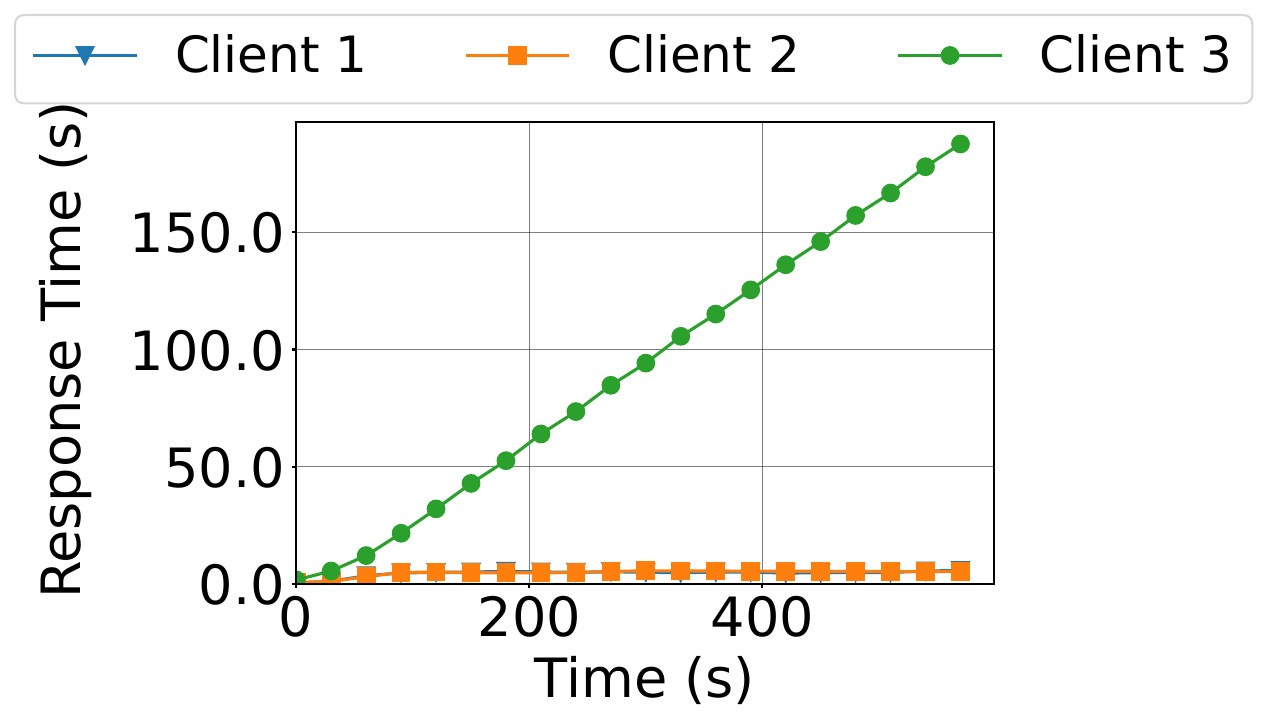}\label{fig:syn_proportional_rt}}
    \caption{
    Client 3 who is overloaded can consume more than its share as Clients 1 and 2 are sending requests lower than their share.
    Clients 1, 2, and 3 send 15, 30, and 90 requests per minute, respectively, under uniform distribution. Requests have input lengths of 256 and output lengths of 256. Client 3 is backlogged, while Clients 1 and 2 are not.}
    \label{fig:syn_proportional}
\end{figure}

\paragraph{Constant request rates}
We start with scenarios where requests arrive deterministically with the same input and output length.
In \Cref{fig:syn_overload}, two clients send requests at different rates, but are both constantly overloaded.
In this case, \Cref{fig:syn_overload_diff} shows VTC can keep the difference between services received by both clients to be small. 
FCFS cannot maintain fairness, which always serves more for the client who is sending requests at a higher rate.
\Cref{fig:syn_overload_service} shows the real-time received service rate for two clients in VTC, which confirms that the two received the same level of services at any time interval.
This experiment empirically validates Theorem~\ref{thm:main_fairness}.

In~\Cref{fig:syn_proportional}, three clients send requests at around $2/13$, $4/13$, and $>7/13$ of the server's capacity, respectively.
In this case, Clients 1 and 2 can be served immediately when their requests arrive (\Cref{fig:syn_proportional_rt}), and Client 3 will consume the remaining capacity (more than 1/3), which is an empirical illustration of the work-conserving property of VTC. 
The service received for Client 1 and Client 2 have a ratio 1 : 2, which is consistent with their request rates (15 versus 30).

\paragraph{ON/OFF request pattern}
In real-world applications, clients usually do not always send requests to the server. They may occasionally be idle (``OFF'' phase). We call this the ``ON/OFF'' pattern. In \Cref{fig:syn_on_off_less}, Client 2 is always in the ``ON'' phase, sending requests at a rate of 120 per minute. Client 1 sends 30 requests per minute (less than half of the capacity) during the ON phase and switches to OFF phase periodically. Since Client 1 uses less than half the system capacity when it is in the ON phase, its requests are mostly processed before it switches to the OFF phase (\Cref{fig:syn_on_off_less_rt}). When it is in the OFF phase, Client 1 thus takes all the system capacity. The total service rate remains the same, which confirms VTC's flexibility in achieving work-conserving.

On the contrary, in~\Cref{fig:syn_on_off_overload}, client 1 sends much more than half the capacity during the ON phase, and makes itself always backlogged. Thus, even when it is in the OFF phase, it is still in the backlog status. In this case, Client 1 and Client 2 should still receive the same level of service rate.

\begin{figure}[ht]
    \subfloat[Received service rate (VTC).]{\includegraphics[width=0.23\textwidth]{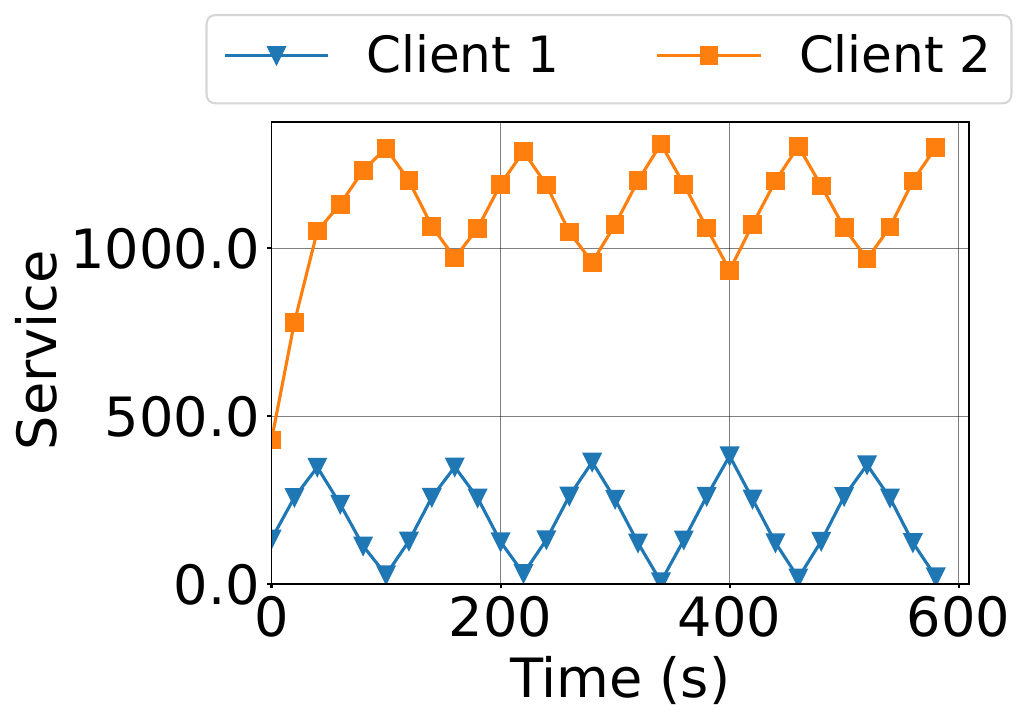}\label{fig:syn_on_off_less_service}}
    \hfill
    \subfloat[Response time (VTC).]{\includegraphics[width=0.23\textwidth]{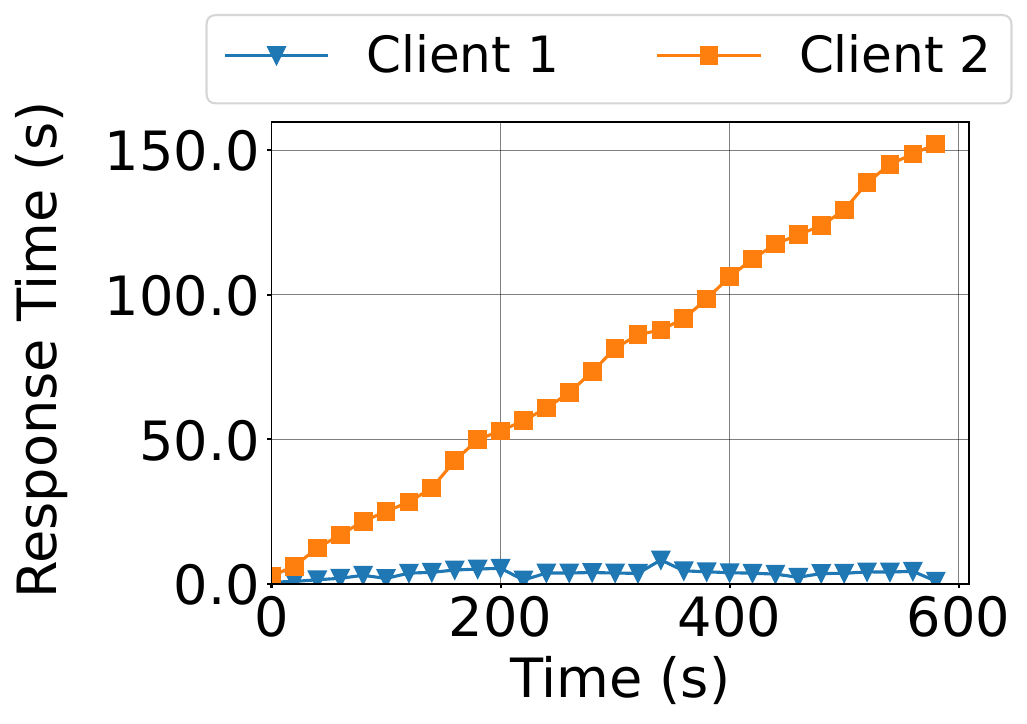}\label{fig:syn_on_off_less_rt}}
    \caption{ON/OFF request pattern.
    Client 1 sends 30 requests per minute (less than half of the capacity) during the ON phase and switches to OFF phase periodically.
    Client 2 is always in the ON phase, sending requests at a rate of 120 requests per minute (larger than half of the capacity).
    Requests have input lengths of 256 and output lengths of 256.}
    \label{fig:syn_on_off_less}
\end{figure}

\begin{figure}[ht]
    \subfloat[Received service rate (VTC).]{\includegraphics[width=0.23\textwidth]{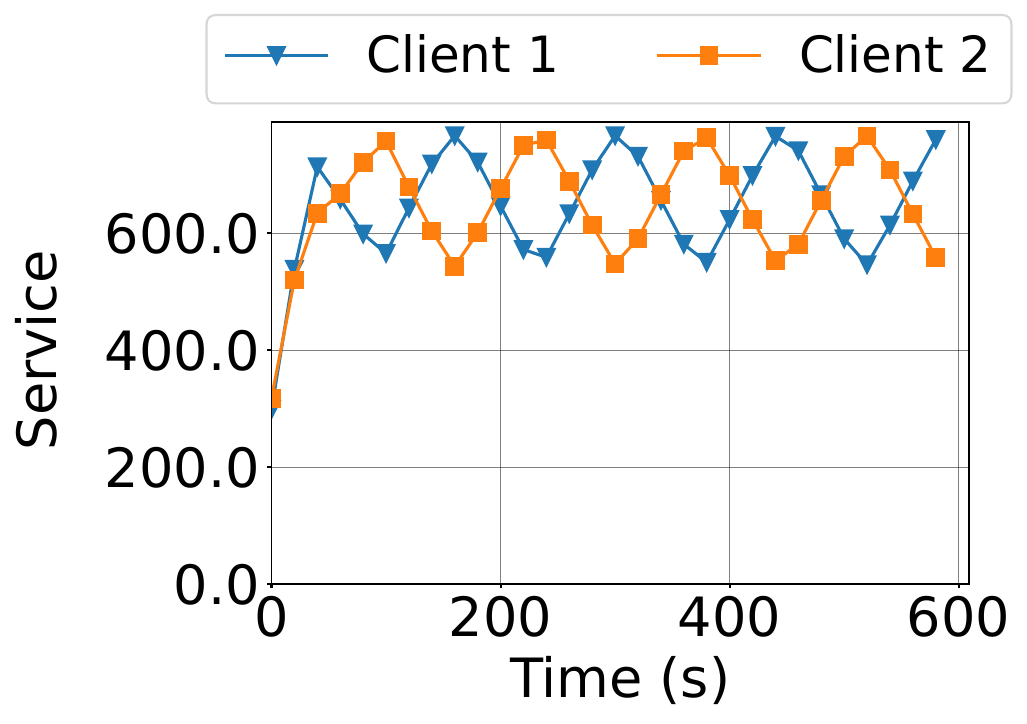}\label{fig:syn_on_off_ol_service}}
    \hfill
    \subfloat[Response time.]{\includegraphics[width=0.23\textwidth]{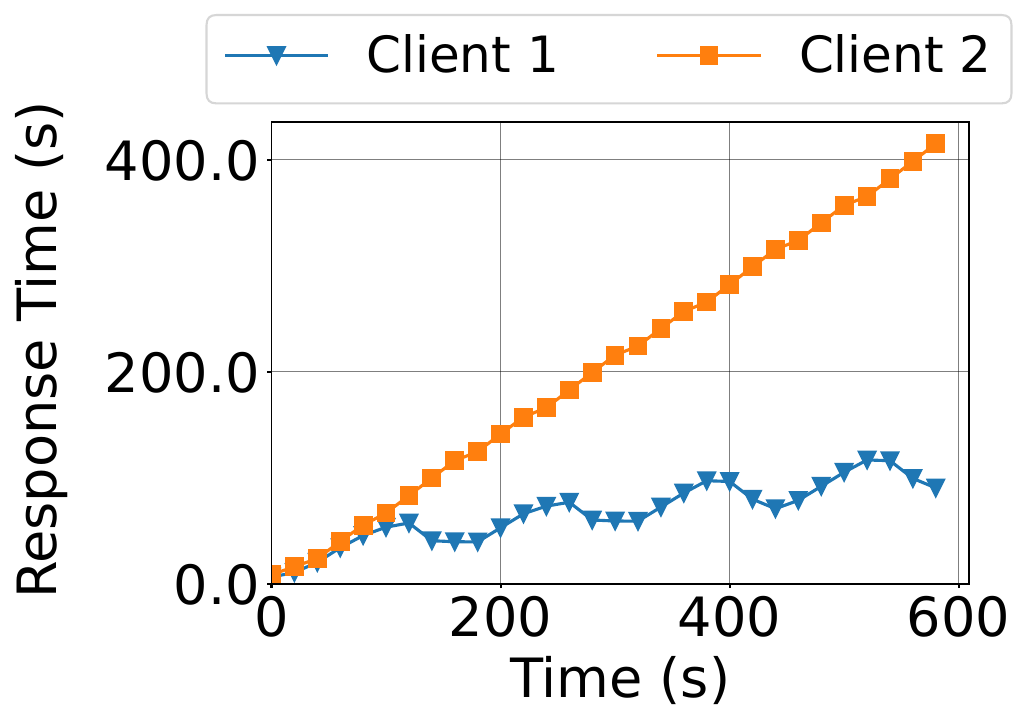}\label{fig:syn_on_off_ol_rt}}
    \caption{ON/OFF request pattern. Client 1 sends 120 requests per minute constantly during the ON phase (over its share), and stops sending during the OFF phase.
    Client 2 sends 180 requests per minute all the time (over its share).
    Requests have input lengths of 256 and output lengths of 256.}
    \label{fig:syn_on_off_overload}
    \vspace{-1em}
\end{figure}

\paragraph{Variable input/output length and poisson process}
In this experiment, we simulate scenarios where requests arrive stochastically. Furthermore, they send requests with different input and output lengths.
In both~\Cref{fig:syn_poisson_short_long} and~\Cref{fig:syn_poisson_short_long_2}, Client 1 sends requests with a high rate and Client 2 sends requests with a rate lower but still over its share.
Requests arrive according to a Poisson process with the coefficient of variance 1.
In \Cref{fig:syn_poisson_short_long}, client 1 sends short requests, and client 2 sends long requests.
In \Cref{fig:syn_poisson_short_long_2}, Client 1 sends requests with short input and long output, while Client 2 sends requests with long input and short output.
Similarly, with the observation before, VTC maintains a bounded difference between the services received by two clients. FCFS cannot preserve fairness according to \Cref{fig:syn_poisson_short_long_acc_service} and~\Cref{fig:syn_poisson_short_long_2_acc_service}.
This confirms that VTC can work under stochastic workloads with variable lengths.

\begin{figure}[t]
    \subfloat[Received service rate (VTC).]{\includegraphics[width=0.24\textwidth]{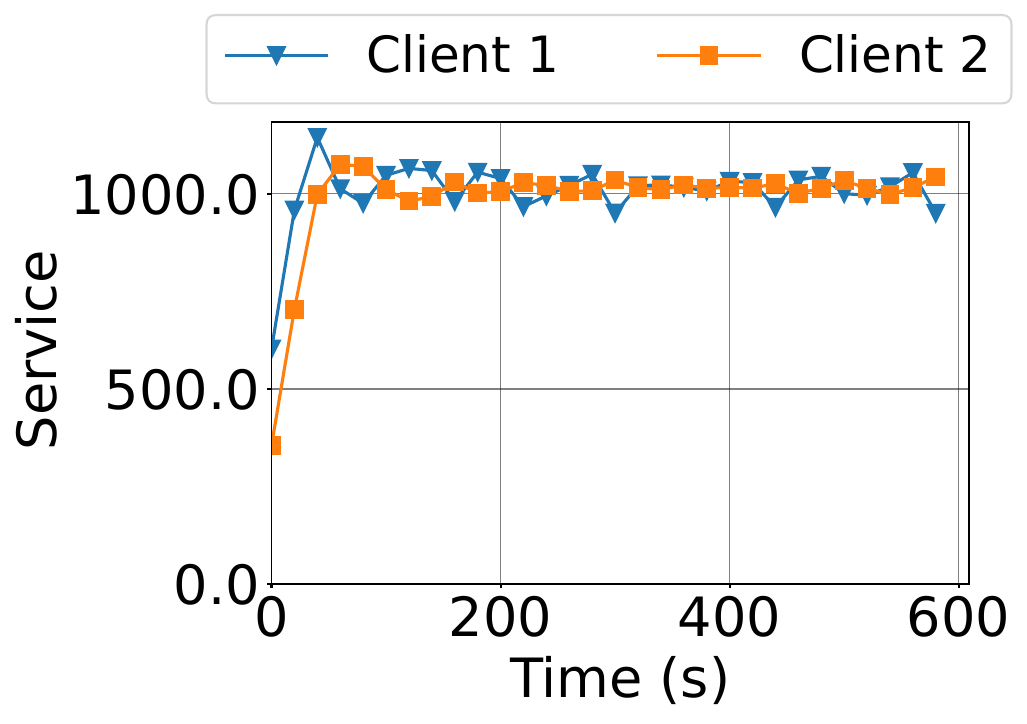}\label{fig:syn_poisson_short_long_service}}
    \hfill
    \subfloat[Absolute difference for accumulated service.]{\includegraphics[width=0.21\textwidth]{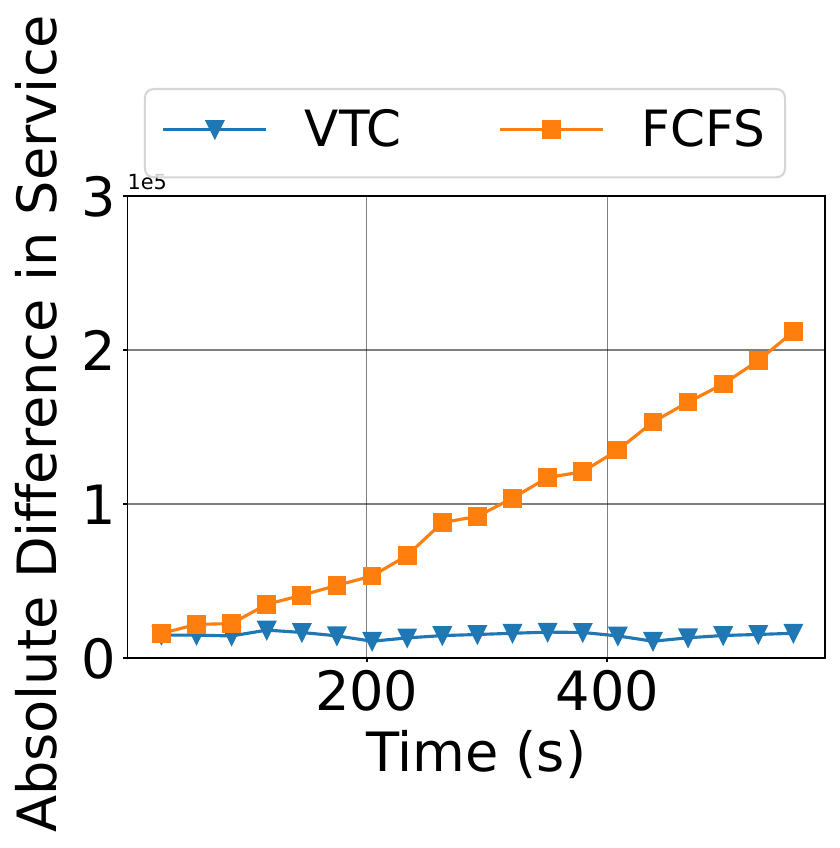}\label{fig:syn_poisson_short_long_acc_service}}
    \caption{Client 1 sends 480 requests per minute. Client 2 sends 90 requests per minute. Requests arrive according to a Poisson process with the coefficient of variance 1.
    Requests sent from Client 1 have input lengths of 64 and output lengths of 64.
    Requests sent from Client 2 have input lengths of 256 and output lengths of 256.}
    \label{fig:syn_poisson_short_long}
\end{figure}

\begin{figure}[t]
    \subfloat[Received service rate (VTC).]{\includegraphics[width=0.24\textwidth]{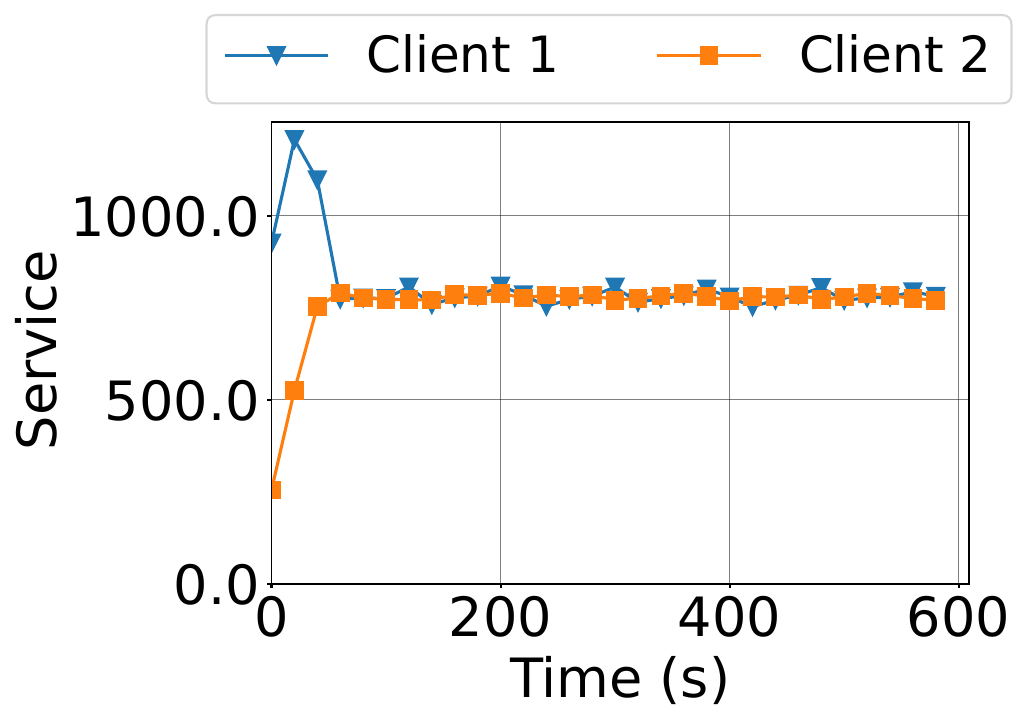}\label{fig:syn_poisson_short_long_2_service}}
    \hfill
    \subfloat[Absolute difference for accumulated service.]{\includegraphics[width=0.20\textwidth]{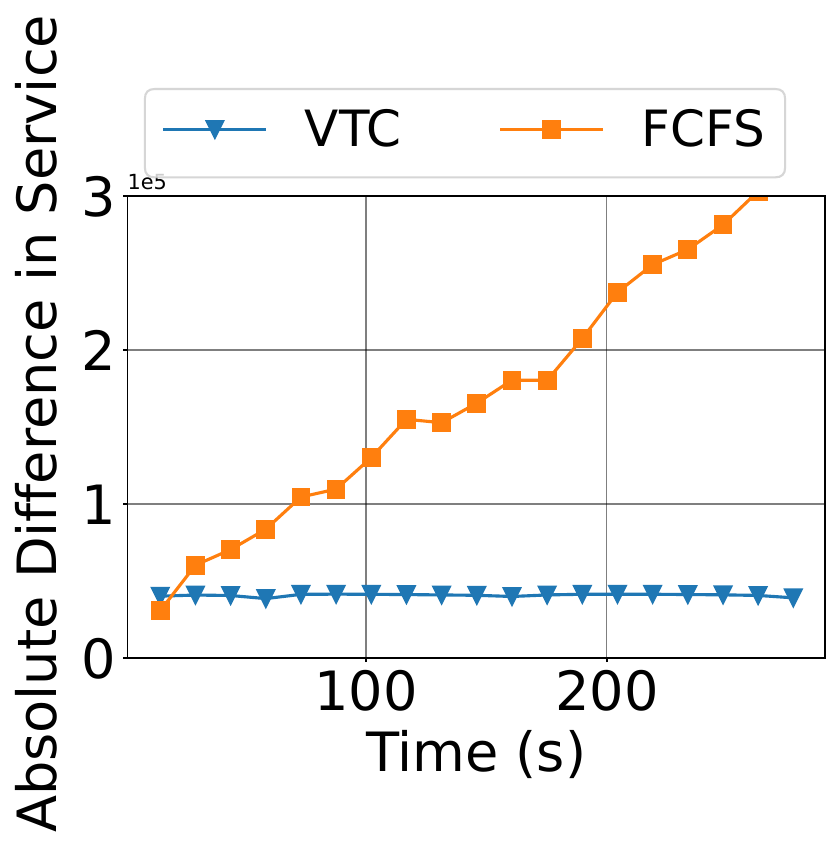}\label{fig:syn_poisson_short_long_2_acc_service}}
    \caption{Client 1 sends 480 requests per minute. Client 2 sends 90 requests per minute. Requests arrive according to a Poisson process with the coefficient of variance 1.
    Requests sent from Client 1 have input lengths of 64 and output lengths of 512.
    Requests sent from Client 2 have input lengths of 512 and output lengths of 64.}
    \label{fig:syn_poisson_short_long_2}
    \vspace{-1em}
\end{figure}

\paragraph{Isolation}
To illustrate the isolation property, we use the setup with a deterministic arrival pattern and the same input length and output length of 256.
In \Cref{fig:syn_increase}, Client 1 sends 30 requests per minute, which is under half of the server's capacity.
Client 2 acts as an "ill-behaved" client. It sends requests at a linearly increasing rate, and gradually over half of the system capacity. We observe that the response time of requests from client 1 is roughly unchanged, empirically validating the property stated in Theorem~\ref{thm:isolation}.

\begin{figure}[t]
    \subfloat[Received service rate (VTC).]{\includegraphics[width=0.23\textwidth]{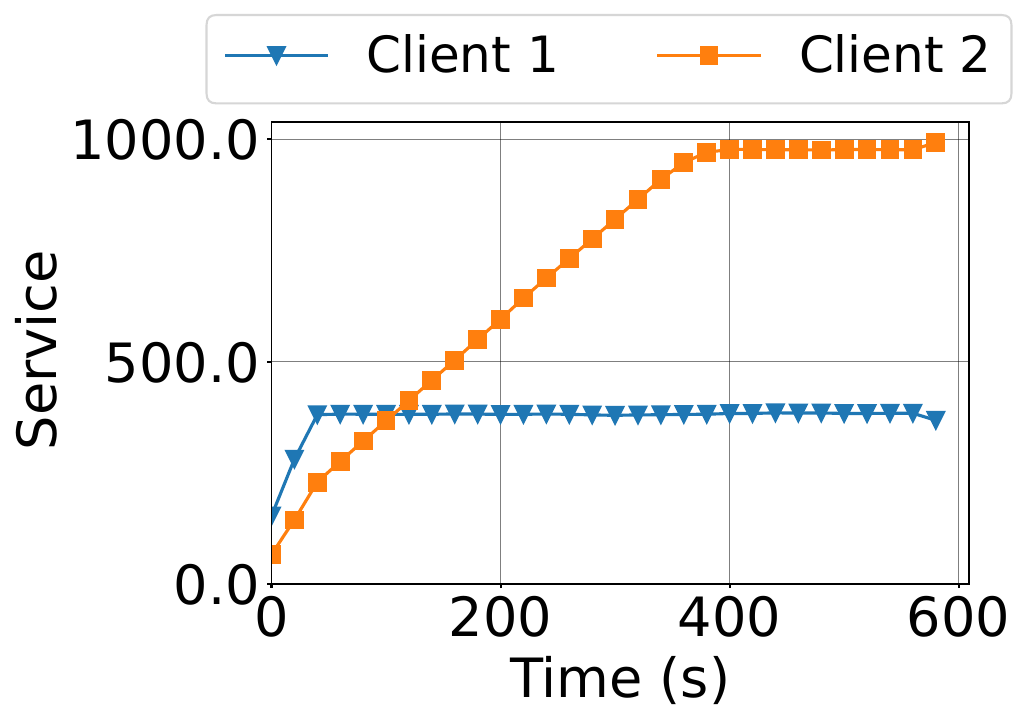}\label{fig:syn_increase_service}}
    \hfill
    \subfloat[Response time (VTC).]{\includegraphics[width=0.23\textwidth]{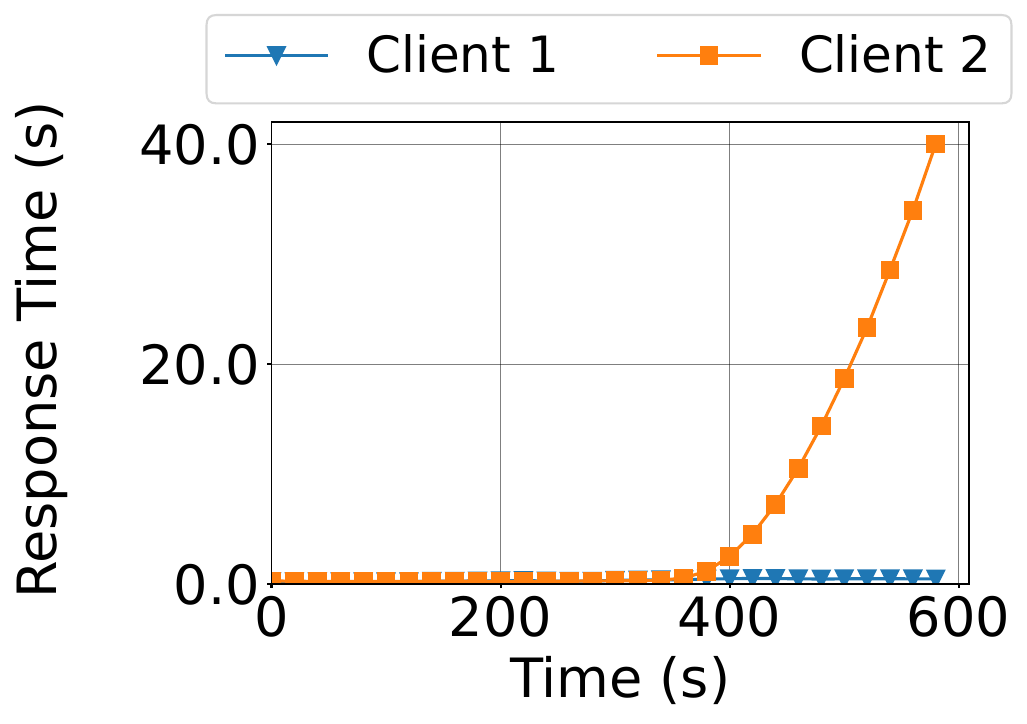}\label{fig:syn_increase_rt}}
    \caption{
    Client 1 sends 30 requests per minute, Client 2 sends 120 requests per minute, in a uniform arrival pattern.
    Requests have input lengths of 256 and output lengths of 256.
    Client 1 sends 30 requests per minute, which is under half of the server's capacity.
    Client 2 sends requests at a linearly increasing rate, and gradually over half of the system capacity.}
    \label{fig:syn_increase}
\end{figure}

\begin{figure}[t]
    \subfloat[Received service rate (VTC).]{\includegraphics[width=0.23\textwidth]{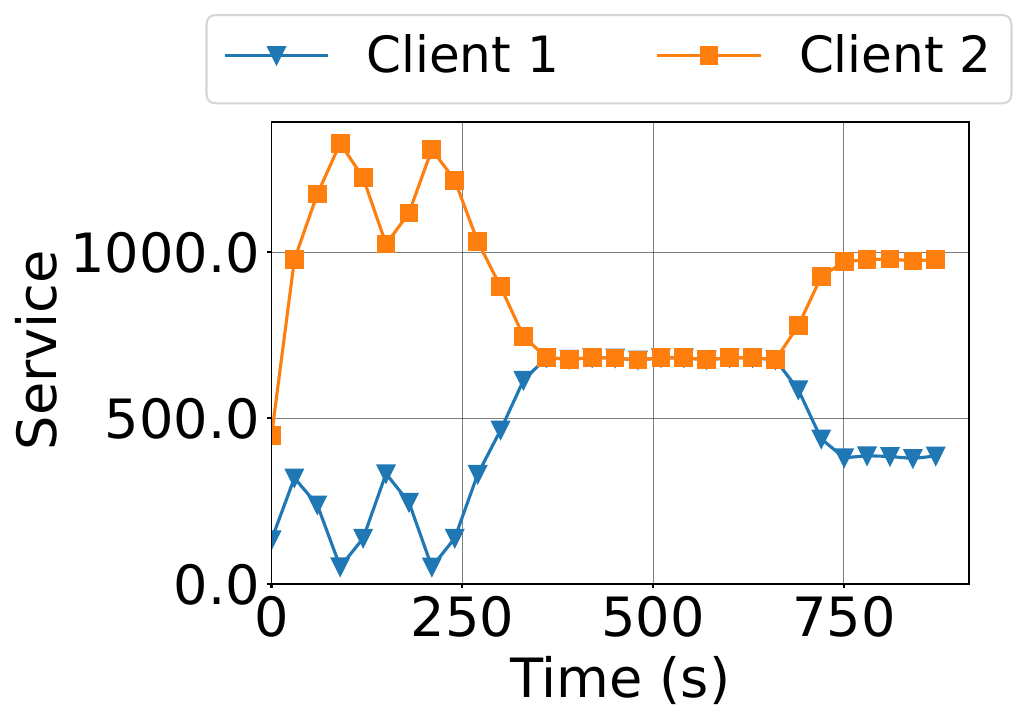}\label{fig:syn_shift_service}}
    \hfill
    \subfloat[Received service rate (LCF).]{\includegraphics[width=0.23\textwidth]{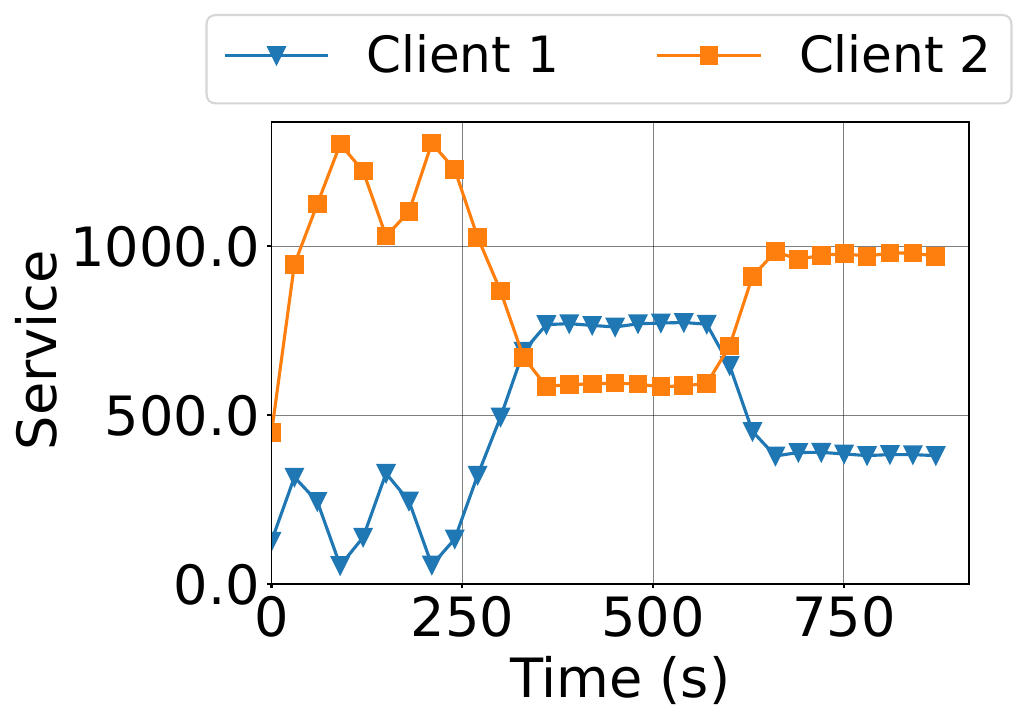}\label{fig:syn_shift_service_lcf}}
    \caption{Clients send requests in three phases, all with uniform arrival patterns.
    The first 5 minutes is ON/OFF phase.
    Client 1 sends 30 requests per minute during the ON phase (less than its share) and stops sending during the OFF phase. Each ON or OFF phase has 60 seconds.
    The second 5 minutes is the overload phase.
    Both Client 1 and Client 2 send 60 requests per minute, which causes the server to be overloaded.
    In the last 5 minutes, Client 1 sends 30 requests per minute (less than its share), and Client 2 sends 90 requests per minute, which causes the server to be still overloaded.
    Requests all have input lengths of 256 and output lengths of 256.}
    \label{fig:syn_shift}
    \vspace{-1em}
\end{figure}

\paragraph{Distribution shift}
In reality, clients' behavior may change over time.
To this end, we evaluate the robustness of VTC when the distribution of client requests shifts.
In \Cref{fig:syn_shift}, we construct a 15-minute workload comprising three phases.
The first phase is an ON/OFF phase, in which Client 1 sends requests less than its share only during the ON phase and stops during the OFF phase.
Client 2 sends requests at a constant rate, which makes the server overloaded.
We can observe the pattern for the first phase to be similar to \Cref{fig:syn_on_off_less_service}, which maintains a constant total service.
During the second phase, because the two clients both send requests over their share, a fair server should let them receive the same level of service.
\Cref{fig:syn_shift_service} demonstrates that VTC yields a desired pattern, similar to that shown in \Cref{fig:syn_overload_service}. \Cref{fig:syn_shift_service_lcf} reveals that LCF disproportionately serves Client 1, as it inherits Client 1's deficit from the first phase.
In the last phase, the serving pattern for VTC and LCF are similar, because they simply serve all requests from Client 1 immediately as Client 1 sends requests under its share.

\subsection{Results on Real Workloads}
We construct real workload traces from the traces of LMSYS Chatbot Arena~\cite{zheng2023judging, zheng2023lmsys}, following a similar process in~\cite{sheng2023slora}.
The trace is from a server that serves multiple LLMs. To adapt it to our setting, we treat each LLM as a client. In total, there are 27 clients.
To sample from this log, we define $D$, the duration, and $R$, the request rate. We then sample $R*D$ requests from the trace, and re-scale the real-time stamps to $[0, D]$. We use a duration of 10 minutes to be consistent with previous experiments, and a request rate of 210 requests per minute for the whole system.
With the adapted workload, we run Llama-2-7b on A10G (24GB).
In summary, the prompts from the 27 clients are collected from the real world interactions, which will be sent to the server for inference on Llama-2-7b.
The timestamps are re-scaled from the real-world trace.

For better visualization of the evaluation results, we select two clients that send the most requests and two clients that send a medium number of requests. We sort 27 clients according to the number of requests they send, and depict the statistics of the $13^{th}, 14^{th}$ and $26^{th}, 27^{th}$ clients. We do not choose clients that send the least requests because they typically only send requests in a small interval.

\begin{figure}[t]
    \centering
    \begin{subfigure}[b]{0.23\textwidth}
    \includegraphics[width=\textwidth]{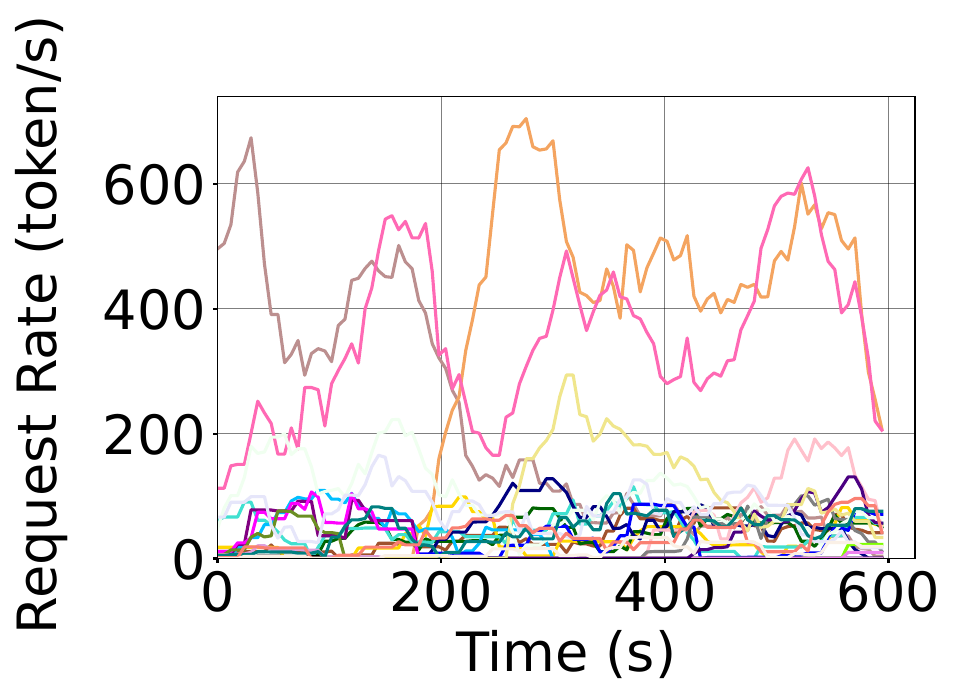}
    \end{subfigure}
    \hfill
    \begin{subfigure}[b]{0.23\textwidth}
    \includegraphics[width=\textwidth]{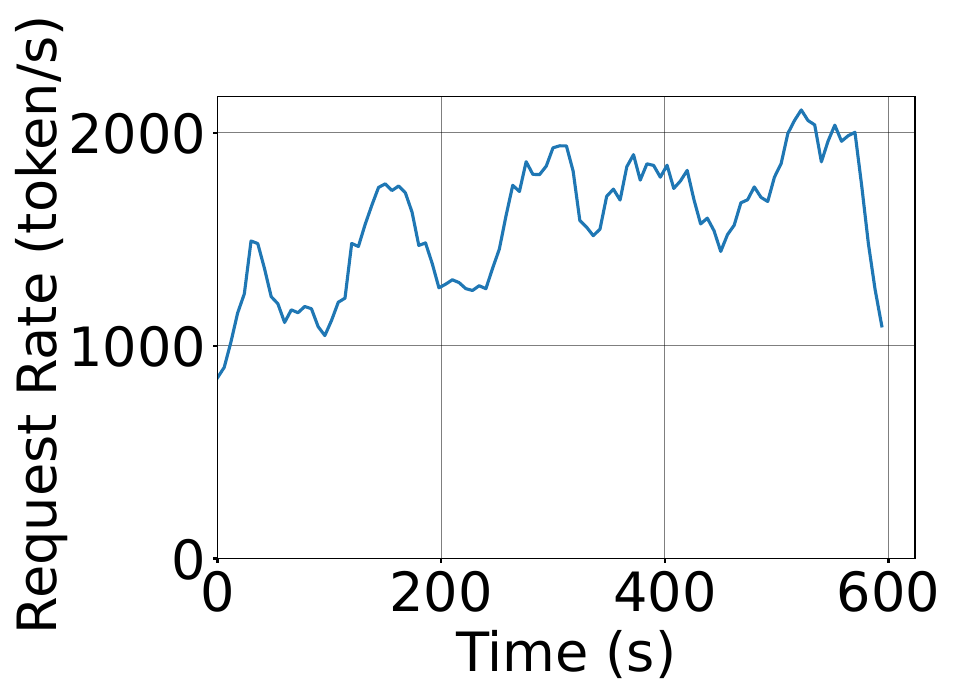}
    \end{subfigure}
    \caption{Request rate distribution during the sampled 10 minutes duration with re-scale. The figure on the left denotes the real-time request rate for the 27 clients. A few clients have sent many more requests than others, reflecting the original trace of a few most popular models. The figure on the right depicts the total request rate from all 27 clients.}
    \label{fig:real_req_rate}
\end{figure}

\begin{figure}[t]
    \centering
    \begin{subfigure}[b]{0.23\textwidth}
    \includegraphics[width=\textwidth]{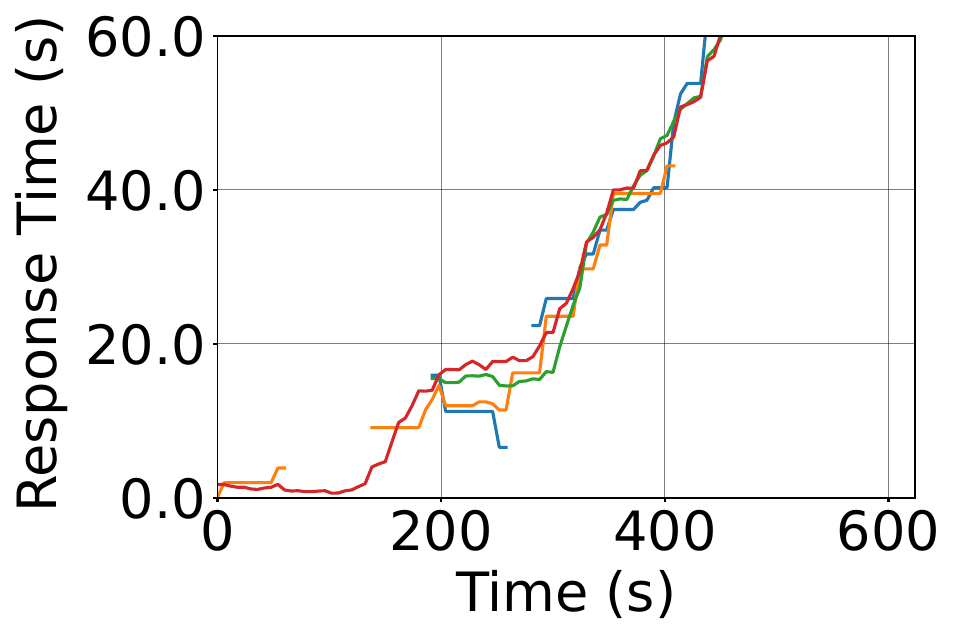}
    \end{subfigure}
    \hfill
    \begin{subfigure}[b]{0.23\textwidth}
    \includegraphics[width=\textwidth]{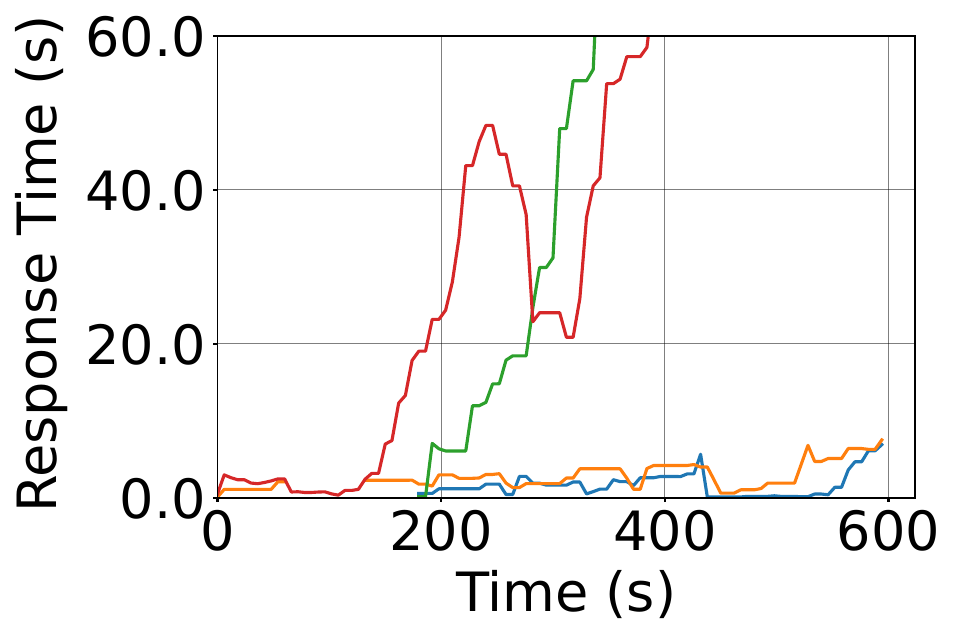}
    \end{subfigure}
    \caption{Response time of 4 selected clients when using FCFS (Left) and VTC (Right) in real traces. Each curve corresponds to one client. There are some curves that show disconnected because, during some periods, a client may have no requests served. Requests distribution see \Cref{fig:real_req_rate}.}
    \label{fig:real_fcfs_throughput_response_time}
\end{figure}

\begin{figure}[t]
    \centering
    \begin{subfigure}[b]{0.23\textwidth}
    \includegraphics[width=\textwidth]{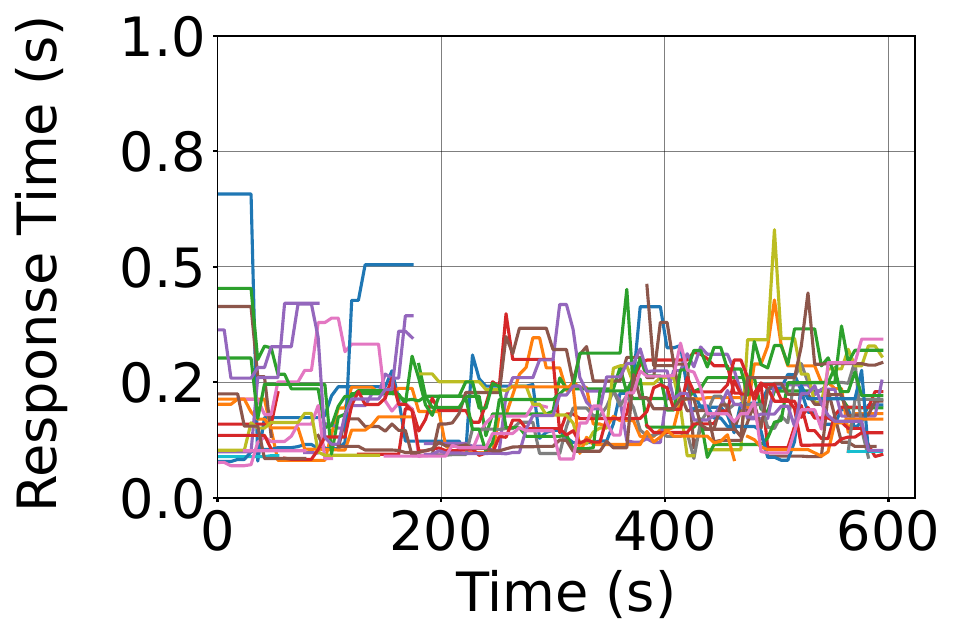}
    \end{subfigure}
    \hfill
    \begin{subfigure}[b]{0.23\textwidth}
    \includegraphics[width=\textwidth]{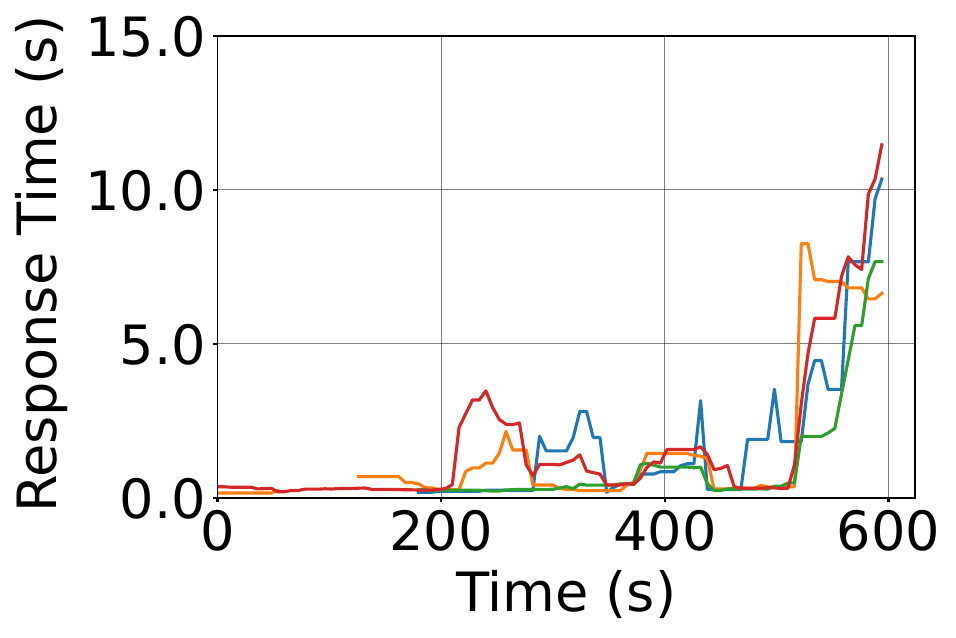}
    \end{subfigure}
    \hfill
    \begin{subfigure}[b]{0.23\textwidth}
    \includegraphics[width=\textwidth]{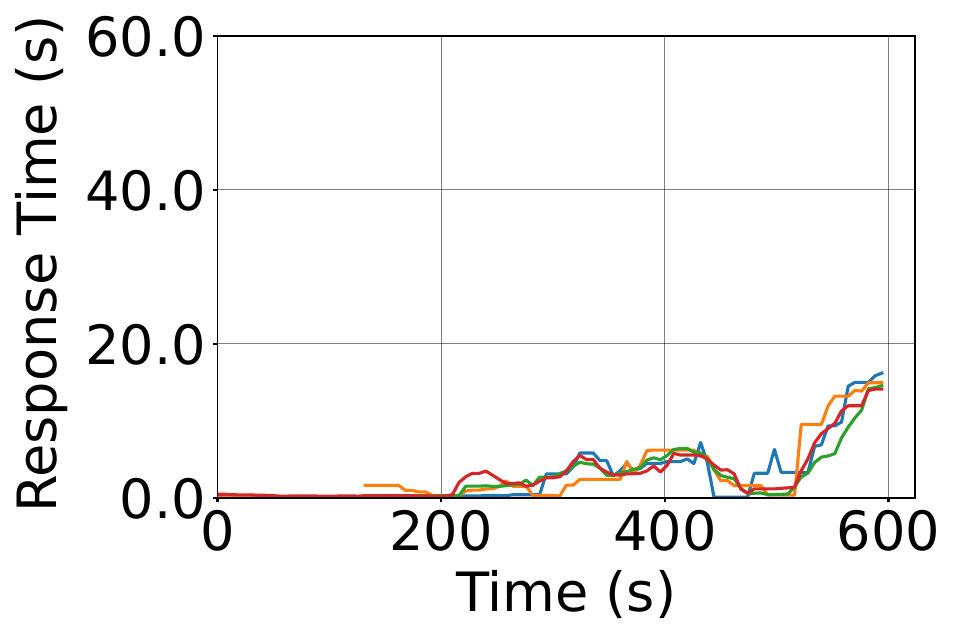}
    \end{subfigure}
     \begin{subfigure}[b]{0.23\textwidth}
    \includegraphics[width=\textwidth]{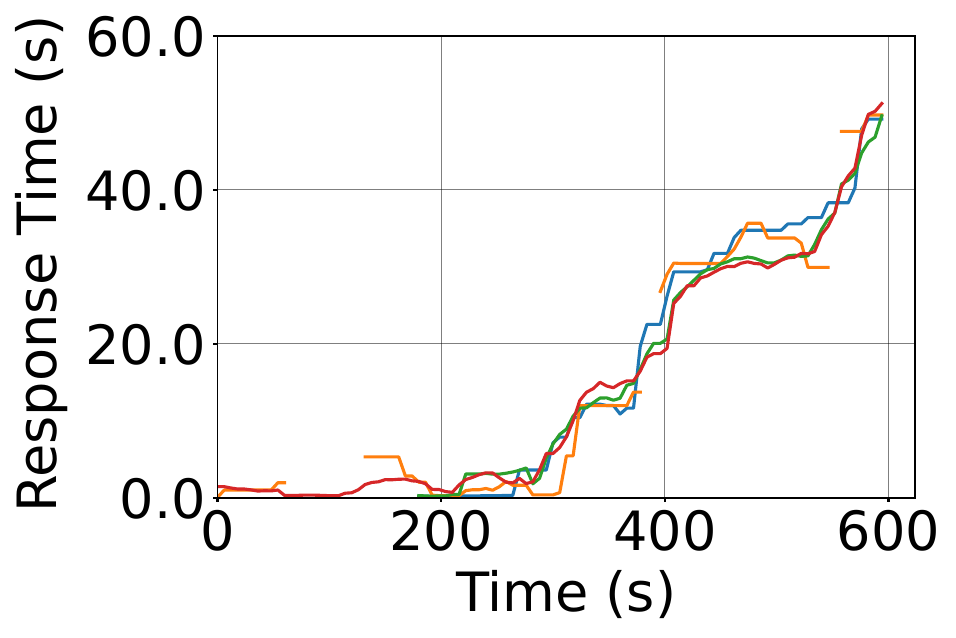}
    \end{subfigure}
    \hfill
    \caption{Response time of 4 selected clients (all 27 clients when rpm=5) when using RPM in real traces. Left-upper to right-bottom corresponds to a different rate limit (5, 15, 20, 30 requests per minutes, respectively). There are some curves that show disconnected because, during some periods, a client may have no requests served.}
    \label{fig:real_rpm_throughput_response_time}
\end{figure}

\begin{figure}[t]
    \centering
    \begin{subfigure}[b]{0.25\textwidth}
    \includegraphics[width=\textwidth]{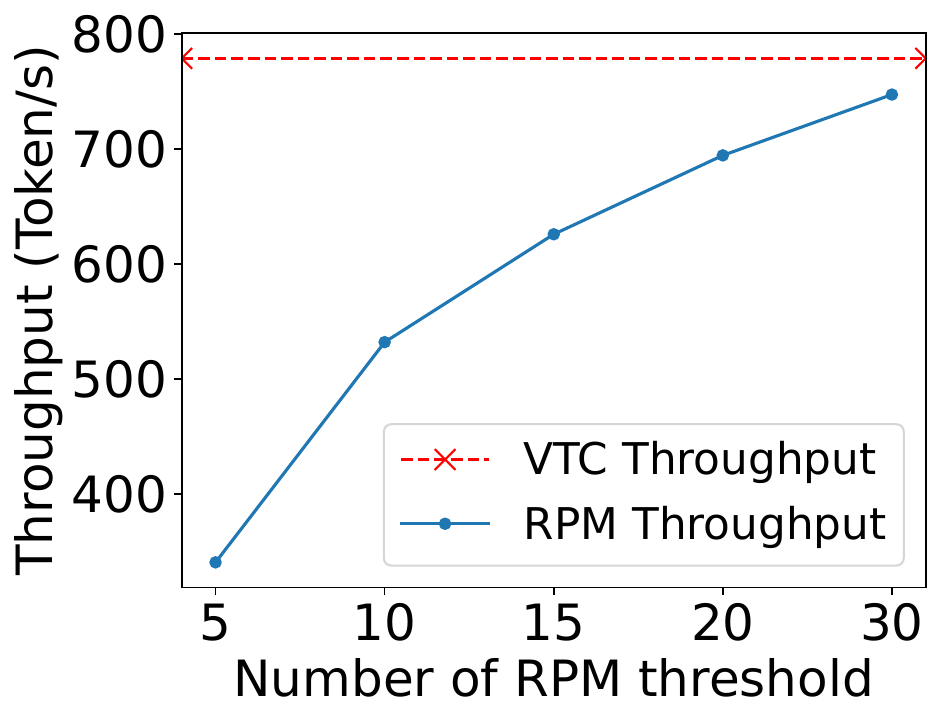}
    \end{subfigure}
    \caption{Throughput of RPM versus different number of requests per minute threshold. Compared with VTC, RPM consistently exhibits a lower throughput. }\label{fig:real_rpm_overall_throughput_comparison}
\end{figure}

\textbf{Request distribution}
The request rate distribution is visualized in \Cref{fig:real_req_rate}.
The request rate of individual clients and the total request rate are all highly dynamic.
The input and output length distribution is depicted in \Cref{fig:real_req_len} in the Appendix.
The average input length is 136, and the average output length is 256. The input and output lengths have the range of $[2, 1021]$ and $[2, 977]$, respectively.

\textbf{Effect on response time} 
Figure~\ref{fig:real_fcfs_throughput_response_time} shows the response time of 4 selected  clients on the real trace. With FCFS scheduling, the response time of all clients increases drastically because some clients send over their share, monopolizing the service and impacting other clients. With VTC, only clients that send requests over its share will have a drastic increase in the response time.

\textbf{Analysis of request rate per limit approach}
In~\Cref{fig:real_rpm_throughput_response_time}, we show the response of RPM approach with different rate limits. In~\Cref{fig:real_rpm_overall_throughput_comparison}, we show the corresponding throughput comparison with VTC. These plots reveal a core dilemma of the RPM approach - the system has to choose between fairness or throughput, but not both. If the rate limit is low, then the system rejects many requests from clients that send over their share.
This opens the capacity for clients with fewer requests. As demonstrated in the uppermost plot in \Cref{fig:real_rpm_throughput_response_time}, all requests have a similar response time.
However, this low rate limit rejects more requests than needed, causing a lower throughput (cluster-wise throughput is $\approx 340$ output tokens per second when RPM=5, as opposed to $\approx 779$ tokens per second in VTC or FCFS). When the rate limit is set higher, the system throughput is gradually increasing, i.e., increasing from $340$ tokens to $747$ tokens per second. However, the response time for all requests grows up. When the request rate is set higher and higher, the response time curve converges to the one in FCFS, and there is no fairness guarantee anymore.
In other words, the RPM approach can be summarized as follows: it functions as an FCFS (First-Come, First-Served) approach with admission control (rate limiting), rather than as a truly fair scheduler. Its fairness is achieved by rejecting numerous requests from other clients, which compromises the overall system throughput.

\textbf{Quantitative Measurement}
We measured the maximum and average service difference described in \Cref{sec:setup} during the time window (10 minutes) in which we ran the experiments.
\Cref{tab:quant_fairness} is a summary for all baselines using this quantitative measurement for real workload trace.

\begin{table}[ht]
\centering
\resizebox{1\columnwidth}{!}{
\begin{tabular}{c|ccccc}
\toprule
Scheduler & Max Diff & Avg Diff & Diff Var & Throu & Isolation\\
\midrule
FCFS & 759.97 & 433.53 & 32112.00 & 777 & No\\
LCF & 750.49 & 323.82 & 29088.90 & 778 & Some\tablefootnote{LCF achieves isolation if the workload does not change. However, the isolation can be broken by newly joined clients whose virtual counter is lagging behind.}\\
\midrule
VTC & 368.40 & 251.66 & 6549.16 & 779 & Yes\\ 
VTC(predict) & 365.47 & 240.33 & 5321.62 & 773 & Yes\\
\textbf{VTC(oracle)} & \textbf{329.46} & \textbf{227.51} & \textbf{4475.76} & \textbf{781} & \textbf{Yes}\\ 
\midrule
RPM(5) & 143.86 & 83.58 & 1020.46 & 340 & Some\\
RPM(20) & 446.76 & 195.71 & 7449.79 & 694 & Some\\
RPM(30) & 693.66 & 309.45 & 24221.31 & 747 & Some\\
\bottomrule
\end{tabular}
}
\caption{The service difference is counted by summing the service difference between each client and the client who received the maximum services.
Throughput is the total number of tokens (including input and output tokens) processed divided by the total execution time.}
\vspace{-1em}
\label{tab:quant_fairness}
\end{table}

\begin{figure}[ht!]
    \subfloat[Different memory pool size.]{\includegraphics[width=0.23\textwidth]{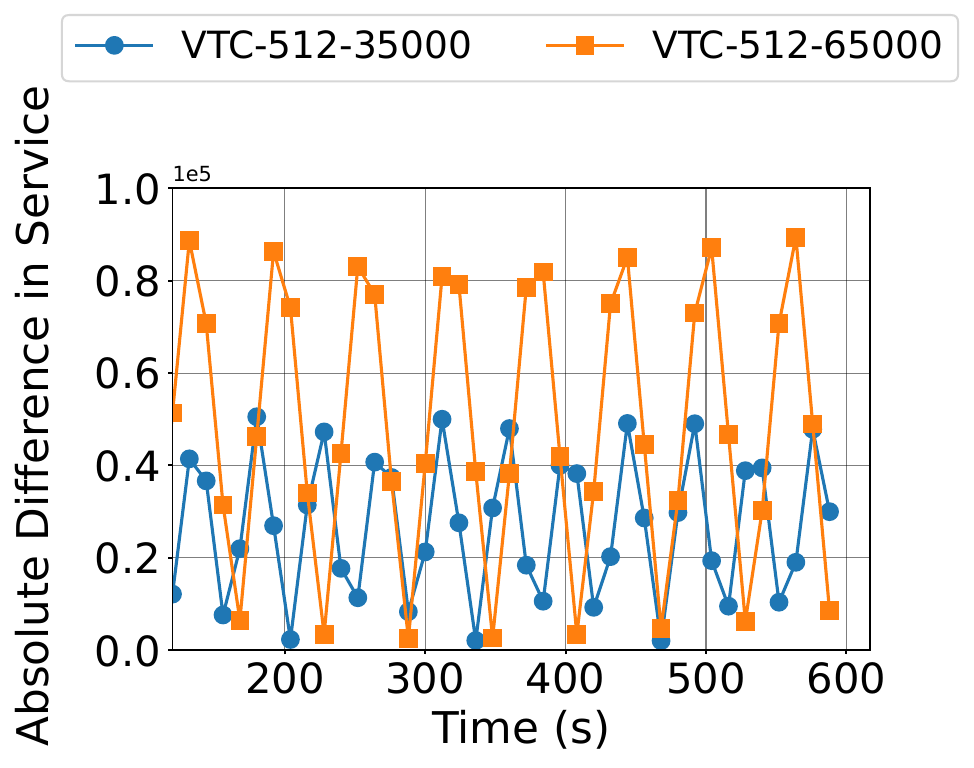}\label{fig:syn_overload_s4_acc_service_mem_pool}}
    \hfill
    \subfloat[Different request length.]{\includegraphics[width=0.23\textwidth]{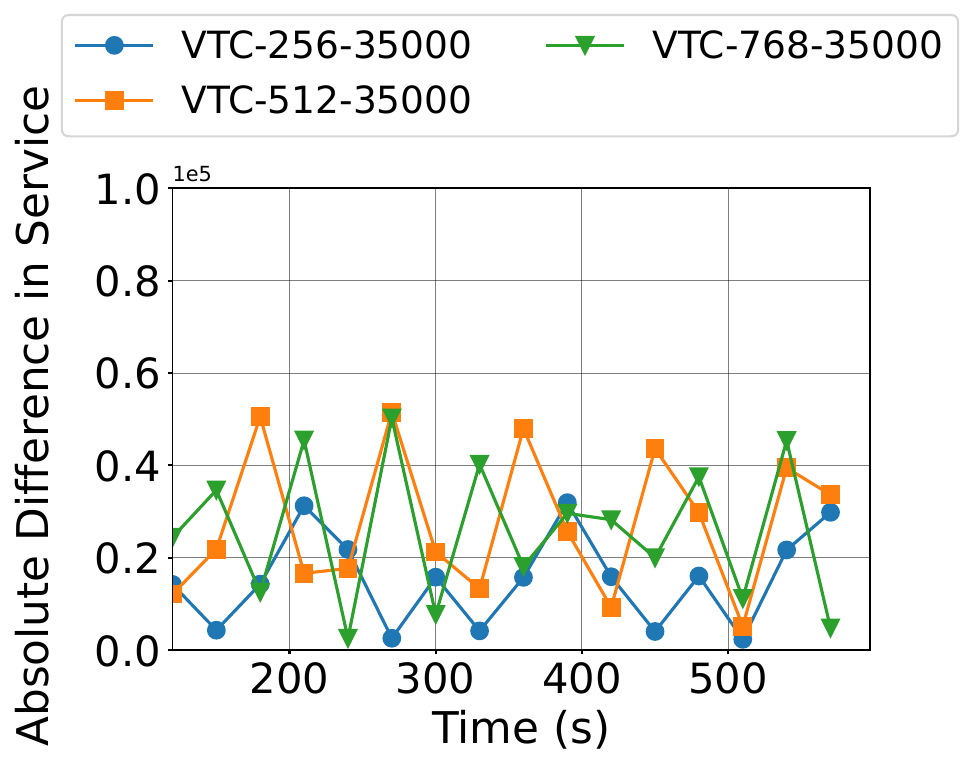}\label{fig:syn_overload_s4_acc_service_req_len}}
    \caption{In all settings, both clients are sending requests of the same lengths with uniform arrival patterns. They send requests with different request rates but are both backlogged. Three different request lengths (256*2, 512*2 and 768*2) are evaluated for the 35000 KV cache setting.}
    \label{fig:syn_overload_s4}
    \vspace{-0.5em}
\end{figure}

\subsection{Ablation Study}
\label{sec:ablation}
In~\Cref{fig:syn_overload_s4}, we evaluate how different memory pool sizes and request lengths will affect scheduling fairness. As shown in~\Cref{fig:syn_overload_s4_acc_service_mem_pool}, with a larger memory pool size, the attainable batch size becomes larger. Therefore, there is greater variation in the absolute difference of accumulated services received by the clients when the memory pool is 65000 than that is 35000, which empirically validates~\Cref{thm:main_fairness}.
\Cref{fig:syn_overload_s4_acc_service_req_len} demonstrates that larger request lengths will also lead to greater variations in the service difference.
This is caused by the unknown output length of request generation. At line 24 in \Cref{alg:vtc}, the most conservative way of only counting the input tokens leads to over-compensation for the smallest counter, as all the potential output tokens are not counted.
A shorter request length has a milder effect of over-compensation.
The curves of $(512*2)$ and $(768*2)$ show the same variance.
This is because at length $(512*2)$, the upper bound given by VTC has been reached.

\section{Related Works}
\paragraph{Fairness in scheduling}
Achieving fairness in scheduling resources in a multi-client environment has been a long-standing topic in computer science~\cite{schwiegelshohn2000fairness, radunovic2007unified, zaharia2010delay, drf}. Among these, Fair Queuing~\cite{fq} has been adapted into many variants for different contexts such as CPU scheduling~\cite{cfq}, link bandwidth allocation~\cite{wfq,pgps,sfq,sfq_d,self-clock}, and memory allocation~\cite{nesbit2006fairmemory}. Deficit round robin~\cite{drr} and stochastic fair queuing~\cite{McKenney1990stochasticfq} are non-real-time fair queuing algorithms for variable-size packets, providing guarantees for long-term fairness.
There are also real-time fair queuing algorithms (e.g., WFQ~\cite{wfq} and SFQ~\cite{sfq}) that can make more strict short-term delay guarantees~\cite{dorlan2016introduction}.
Our scheduling algorithm is different from these algorithms because we need to consider the batching effects across multiple clients' requests and deal with unknown request length. Further, we need to accommodate a flexible notion of fairness on both performance and GPU resource consumption.

\paragraph{Fairness in ML training}
Within the realm of deep learning, research has delved into scheduling jobs in shared clusters~\cite{mahajan2020themis,narayanan2020heterogeneity,chaudhary2020balancing,qiao2021pollux}, with a primary focus on long-duration training jobs.
Machine Learning training jobs have unique characteristics and traditional fair schedulers~\cite{isard2009quincy,robert2016carbyne} designed for big-data workflow usually fail~\cite{mahajan2020themis}. In particular, Themsis~\cite{mahajan2020themis} points out that ML jobs are device placement sensitive, where jobs will be envious of other's placement even if they are assigned the same number of resources. It then defines a finish-time fairness metric to measure fairness in ML training scenarios. Pollux~\cite{qiao2021pollux} further points out that ML jobs should jointly consider the throughput and the statistical efficiency, and develop a goodput-based scheduler that further improves the finish-time fairness of ML jobs. In this paper, we consider fairness in LLM serving. The fairness problem in LLM serving is quite different from the fairness problem in model training. In model training, different clients' GPUs are isolated and the problem is which GPUs are assigned to each client. Achieving fairness in LLM serving requires design for a different set of issues, including how to batch requests from multiple clients to achieve high GPU utilization.

\paragraph{LLM Serving Systems}
How to improve the performance of LLM serving systems has recently gained significant attention.
Notable techniques cover advanced batching mechanisms~\cite{fang2021turbotransformers,yu2022orca}, memory optimizations~\cite{sheng2023flexgen,kwon2023efficient},
GPU kernel optimizations~\cite{wang2021lightseq,aminabadi2022deepspeed,nvidiaft,dao2023flashattention},
model parallelism~\cite{pope2023efficiently,aminabadi2022deepspeed,li2023alpaserve}, parameter sharing~\cite{zhou2022pets}, and speculative execution~\cite{stern2018blockwise,miao2023specinfer} were proposed.
FastServe~\cite{wu2023fast} explored preemptive scheduling to minimize job completion time (JCT).
However, none of these works consider fairness among clients.
Our work bridges this gap, and our proposed scheduling methods can be easily integrated with many of these techniques. Our implementation used for this paper is built atop continuous batching (iteration-level scheduling)~\cite{yu2022orca}\footnote{We presented VTC on continuous batching with separated prefill and decode steps in the main context. A general integration is discussed in \Cref{sec:vtc_integration}.} and PagedAttention~\cite{kwon2023efficient}.

\section{Conclusion}
We studied the problem of fair serving in Large Language Models (LLMs) with regard to the service received by each client.
We identified unique characteristics and challenges associated with fairness in LLM serving, as compared to traditional fairness problems in networking and operating systems.
We then defined what constitutes fairness and proposed a fair scheduler, applying the concept of fair sharing to the domain of LLM serving at the token granularity.

\section*{Acknowledgment}
We thank Aurick Qiao, Shu Liu, Stephanie Wang, Hao Zhang for helpful discussions and feedback.
This research was supported by gifts from Anyscale, Astronomer, Google, IBM, Intel, Lacework, Microsoft, Mohamed Bin Zayed University of Artificial Intelligence, Samsung SDS, Uber, and VMware.
Ying is partly supported by the Stanford Center for Automated Reasoning.
We thank Clark Barrett for academic advising and funding support.

\bibliographystyle{plain}
\bibliography{reference}

\newpage
\newpage
\appendix
\section{Missing Proofs in Proving Fairness of VTC}
\label{sec:proof}

\begin{lemma}
\label{lem:min_non_decrease}
In \Cref{alg:vtc}, $\min_{i\in Q}(c_i)$ is non-decreasing during the time when $Q\neq \emptyset$.
\end{lemma}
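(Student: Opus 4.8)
The plan is to argue by case analysis over the atomic state-changing operations of \Cref{alg:vtc}: I would fix a maximal time interval throughout which $Q\neq\emptyset$, and show that $\min_{i\in Q}c_i$ cannot decrease across any single operation occurring inside that interval. The operations that touch $Q$ or any $c_i$ are exactly (a) an arrival handled by the monitoring stream (lines 5--14, possibly triggering a counter lift), (b) one iteration of the inner selection loop (lines 20--26), and (c) the decode-step update on line 30; the calls \texttt{forward\_prefill}, \texttt{forward\_decode}, and \texttt{filter\_finished\_requests} change neither $Q$ nor any counter, so I would dismiss them immediately. Since the two concurrent streams merely interleave these atomic operations, establishing the invariant step by step in timestamp order yields the lemma.

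For (c) the argument is one line: line 30 only adds nonnegative amounts to the counters, so every $c_i$ is nondecreasing and hence so is $\min_{i\in Q}c_i$. For (b), I would use that the chosen client $k$ is the current argmin, so before the step $c_k=\min_{i\in Q}c_i$; the step raises $c_k$ by $w_p\cdot input\_length(r)\ge 0$ and deletes one request from $Q$. If $k$ still has a request in $Q$, the client set is unchanged and all its counters are now $\ge$ the old $c_k$; if $k$ leaves the client set, the surviving counters were already $\ge c_k$ and are untouched; either way the new minimum is $\ge$ the old one (and if $Q$ empties, the interval ends and nothing is required).

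The crux is case (a), the counter lift, and I expect it to be the one subtlety worth spelling out. If the arriving client $u$ is already in $Q$, line 7 blocks any change and the client set is unchanged, so there is nothing to prove. Otherwise line 13 sets $c_u\leftarrow\max\{c_u,\min_{i\in P}c_i\}$ where $P$ is the current client set of $Q$, i.e.\ $\min_{i\in P}c_i$ is precisely $\min_{i\in Q}c_i$ at that instant; hence afterward $c_u\ge\min_{i\in Q}c_i$, and after appending $r$ the new client set $P\cup\{u\}$ has minimum counter $\min\{\min_{i\in P}c_i,c_u\}=\min_{i\in P}c_i$, unchanged. I would also note that the branch on lines 9--10 ($Q=\emptyset$ just before the arrival) lies exactly at the left endpoint of a fresh maximal nonempty interval, so it is never compared to the preceding empty state --- which is why the lemma correctly restricts to times when $Q\neq\emptyset$. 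The only real things to verify, then, are that the lifted counter never drops below the current minimum and that the dispatched client is always the current argmin, and both are exactly what lines 13 and 20 enforce.
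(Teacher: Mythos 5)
Your case analysis follows the same route as the paper's proof and is correct for what it covers, but you have deliberately proved a weaker statement than the one the paper relies on. By fixing a maximal interval on which $Q\neq\emptyset$ and declaring that the lines 9--10 branch ``is never compared to the preceding empty state,'' you establish monotonicity of $\min_{i\in Q}c_i$ only \emph{within} each maximal nonempty interval. The paper's proof also covers the transition \emph{across} an empty period: when $Q$ refills after being empty, line 10 sets $c_u\leftarrow\max\{c_u,c_l\}$ where $l$ is the last client to have left $Q$. Since requests leave $Q$ one at a time at line 26, just before $Q$ emptied it held a single request from $l$, so the minimum at that instant was exactly $c_l$; and $c_l$ can only have grown since (counters are never decremented). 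Hence the new minimum $c_u\geq c_l$ is at least the old one, and monotonicity holds across the gap as well.

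This across-gap version is what is actually invoked downstream, so the omission is a genuine gap rather than a harmless choice of reading. In the proof of \Cref{lem:invariant}, case 2 takes $t$ to be the last time the rejoining client $u$ was in $Q$ and uses \Cref{lem:min_non_decrease} to conclude $\min_{i\in Q^{(t)}}c_i^{(t)}\leq\min_{i\in Q}c_i$ at the present moment; case 5 similarly compares the minimum at the time a request was dispatched (line 21) with the minimum at a later decode step. In both situations $Q$ may have been empty at some intermediate time, so the within-interval statement does not suffice. The rest of your argument is sound --- indeed your handling of the selection loop is slightly more detailed than the paper's, though the argmin property of $k$ is not even needed there, since any nonnegative increment to any counter and any shrinkage of $Q$ both preserve the minimum --- but you should add the one-line argument for lines 9--10 above rather than excluding that case from the claim.
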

\begin{proof}
We prove the lemma by case study on each line of changing the $c_i$'s.
\begin{itemize}
    \item In the initialization, all $c_i=0$, lemma holds.
    \item If the condition of line 7 is satisfied, at lines 8-14, a new client will be added to $Q$.
    If lines 9-10 are reached, the $\min_{i\in Q}(c_i)$ is equals to its value at the last time when $Q\neq \emptyset$.
    If lines 12-13 are reached, since $c_u = \max\{c_u, \min_{i\in Q} c_i\}$, the $\min_{i\in Q}(c_i)$ will not change.
    \item At line 24 and line 30, the $c_i$'s can only increase, so that $\min_{i\in Q}(c_i)$ is non-decreasing.
    \item At line 26, if a client has cleared all its requests from $Q$, that the client is removed from $Q$, $\min_{i\in Q}(c_i)$ cannot decrease.
\end{itemize}
\end{proof}

\invariant*
\begin{proof}
We prove the lemma by induction.
During the induction, for each line of change of $c_i$ in \Cref{alg:vtc}, we use $c_i'$ to denote the new value and $c_i$ to denote the original value.
Similarly, we use $Q'$ to donate the new value and $Q$ to denote the original value.
We also use $c_i^{(t)}$ to denote the value of $c_i$ at time $t$, and $Q^{(t)}$ to denote the value of $Q$ at time $t$.
\begin{enumerate}
    \item In the initialization, all $c_i=0$, Equation (\ref{eq:invariant}) holds.

    \item If a client $u\notin Q$ receive a new request and thus $Q'=Q\cup \{u\}$, line 12-13 will be reached, and thus $c'_u = \max\{c_u, \min_{i\in Q} c_i\} \geq \min_{i\in Q} c_i$.
    Then we have,
    \begin{equation}
    \min_{i\in Q'}c'_i = \min\{c_u', \min_{i\in Q} c_i\} = \min_{i\in Q} c_i.
    \label{eq:start_backlog_min_bound}
    \end{equation}
    Let $t$ be the last time that $u$ was in $Q$ before the change, and thus $c_u^{(t)} = c_u$.
    From \Cref{eq:invariant}, there is
    \[ \max_{i\in Q^{(t)}} c_i^{(t)} - \min_{i\in Q^{(t)}} c_i^{(t)} \leq \max (w_p \cdot L_{input}, w_q\cdot M). \]
    Then we have
    \[ c_u^{(t)} \leq \max_{i\in Q^{(t)}} c_i^{(t)} \leq \min_{i\in Q^{(t)}} c_i^{(t)} + \max (w_p \cdot L_{input}, w_q\cdot M). \]
    From \Cref{lem:min_non_decrease}, there is
    $ \min_{i\in Q^{(t)}} c_i^{(t)} \leq \min_{i\in Q} c_i $, so we have
    \[ c_u = c_u^{(t)} \leq \min_{i\in Q} c_i + \max (w_p \cdot L_{input}, w_q\cdot M), \]
    which can derive
    \[ c_u' = \max\{c_u, \min_{i\in Q} c_i\} \leq \min_{i\in Q} c_i + \max (w_p \cdot L_{input}, w_q\cdot M). \]
    Combine with \Cref{eq:invariant} and \Cref{eq:start_backlog_min_bound}, there is
    \begin{align*}
        \max_{i\in Q'} c_i' &= \max\{ c_u', \max_{i\in Q} c_i \} \\ &\leq \min_{i\in Q} c_i + \max (w_p \cdot L_{input}, w_q\cdot M)\\
        &\leq \min_{i\in Q'} c'_i + \max (w_p \cdot L_{input}, w_q\cdot M)
    \end{align*}
    Therefore, \Cref{eq:invariant} holds after the change.

    \item If a client $u$ is left from $Q$ at line 26, the difference $\max_{i\in Q} c_i - \min_{i \in Q} c_i$ will not increase. Because $\max(C') - \min(C') \leq \max(C) - \min(C), \forall C \supseteq C', C'\neq \emptyset$.
    Therefore, Equation (\ref{eq:invariant}) still holds.

    \item At line 24, since $c_k = \min_{i\in Q} c_i$, there is 
    \begin{equation}
        \label{eq:line18_min_bound}
        \min_{i\in Q} c_i \leq \min_{i\in Q} c'_i \leq c_k' \leq \min_{i\in Q} c_i + w_p\cdot L_{input}.
    \end{equation}
    From Equation (\ref{eq:invariant}), we have 
        \[\max_{i\in Q} c_i \leq \min_{i \in Q} c_i + \max (w_p \cdot L_{input}, w_q\cdot M).\]
    Because:
    \[\max_{i\in Q} c_i' = \max(\max_{i\in Q} c_i, c_k')\]
    We have:
    \begin{equation}
        \label{eq:line18_max_bound}
        \max_{i\in Q} c_i' \leq \max(\min_{i \in Q} c_i + \max (w_p \cdot L_{input}, w_q\cdot M), c_k')
    \end{equation}
    In \Cref{eq:line18_min_bound} we have derived that: 
    \[c_k' \leq \min_{i\in Q} c_i + w_p\cdot L_{input}\]
    Thus:
    \begin{align*}
    c_k' &\leq \min_{i\in Q} c_i + w_p\cdot L_{input}\\
    &\leq \min_{i\in Q} c_i + \max (w_p \cdot L_{input}, w_q\cdot M)
    \end{align*}
    Thus:
    \[\max(\min_{i\in Q} c_i + \max (w_p \cdot L_{input}, w_q\cdot M), c_k') = \] 
    \[\min_{i \in Q} c_i + \max (w_p \cdot L_{input}, w_q\cdot M).\]
    Thus  \Cref{eq:line18_max_bound} gives:
    \begin{equation}
        \max_{i\in Q} c_i' \leq \min_{i \in Q} c_i + \max (w_p \cdot L_{input}, w_q\cdot M)
    \end{equation}
    Finally, combining the inequality from \Cref{eq:line18_min_bound} that \[\min_{i \in Q} c_i \leq \min_{i \in Q} c_i',\] we arrive at:
    \[\max_{i\in Q} c_i' \leq \min_{i\in Q} c'_i + \max (w_p \cdot L_{input}, w_q\cdot M). \]    
    Therefore, \Cref{eq:invariant} holds.

    \item At line 30,
    let $k = \argmax_{i \in Q} c_i'$, so that $c_k' = \max_{i\in Q} c_i'$.
    Let $r$ be the last one among requests from $k$ that have been scheduled.
    Let $t$ be the time when $r$ was selected at line 21.
    Since $r$ is the last one been scheduled from $k$, there is
    \begin{equation}
    \max_{i\in Q} c_i' = c'_k \leq c_k^{(t)} + w_q\cdot M
    \label{eq:decode_fair_ub}
    \end{equation}

    Because request $r$ from client $k$ has been scheduled at time $t$, from line 20, there is $c_k^{(t)} = \min_{i\in Q^{(t)}} c_i^{(t)}$.
    From Lemma~\ref{lem:min_non_decrease}, we have $\min_{i\in Q^{(t)}} c_i^{(t)} \leq \min_{i\in Q} c'_i$.
    Combine with \Cref{eq:decode_fair_ub}, we have
    \[ \max_{i\in Q} c'_i - \min_{i\in Q} c'_i \leq w_q\cdot M. \]
    Therefore, \Cref{eq:invariant} holds.
\end{enumerate}
\end{proof}

\lowerbound*
\begin{proof}
    Consider at time $0$ the client $f$ sends a list of requests which cannot fit in the memory at once.
    Because of work-conserving, client $f$ will fill the whole running batch.
    In this case, client $f$ is backlogged, and any new query is not processed until the existing queries finish processing.  Assume that all existing queries finish at time $T$, and that at time $\epsilon$ with $\epsilon$ close to 0, a second client $g$ sends another batch of requests. Now during the time interval $[\epsilon, T]$, both clients $f, g$ are backlogged since there exist  queries from both clients in the queue. At time $T$, client $f$ received service from the first batch of processing, which can be up to $w_q \cdot M$ if the memory is luckily fully utilized. Thus we have
    \[ W_f(\epsilon, T) = w_q \cdot M.\]
    On the other hand, client $g$ did not receive any service during the time period $[\epsilon, T]$. Thus $W_g(\epsilon, T) = 0$. In this case, we have constructed an instance with 
    \[ \vert W_f(t_1, t_2) - W_g(t_1, t_2) \vert \geq w_q \cdot M. \]
\end{proof}

\simpleiso*
\begin{proof}
    Let the counter for $f$ be $c_f$ after line 13 for $r_f$. Before $D_{r_f}$, since $r_f$ is always in the queue, the counter for f will not be lifted. Since there is no running batch of $f$ in the server, line 21 will select $r_f$ to be the next one for $f$. 
    Lemma~\ref{lem:invariant} shows that for any other client g,
    \[c_g - c_f < \max (w_p \cdot L_{input}, w_q\cdot M).\]
    In the worst case where these counters are incremented sequentially, it will take at most $2*(n-1)*\frac{\max(w_p \cdot L_{input}, w_q\cdot M)}{a}$. Thus, giving a bound for the dispatch time of $r_f$.
\end{proof}

\nopunish*
\begin{proof}

If $g$ is not backlogged during the entire $[t_1, t_2)$, then $W_g(t_1, t_2) \leq U$, the theorem trivially holds.
Next, assume $g$ is backlogged at some point during $[t_1, t_2)$.
Let $t_1'$, $t_2'$ be the first time and the last time g is backlogged between $[t_1, t_2)$.
Since there is no request submitted in $[t_1, t_1')$ and $[t_2', t_2)$, we have
\begin{equation}
\label{eq:iso_end_bound}
W_g(t_1, t_1') \leq  U, \quad W_g(t_2', t_2) \leq  U.
\end{equation}
Since $c_i$'s in \Cref{alg:vtc} are non-decreasing,
\begin{equation}
\label{eq:iso_increase_1}
c_f^{(t_1)} \leq c_f^{(t_1')} \leq c_f^{(t_2')} \leq c_f^{(t_2)},
\end{equation}
\begin{equation}
\label{eq:iso_increase_2}
c_g^{(t_1)} \leq c_g^{(t_1')} \leq c_g^{(t_2')} \leq c_g^{(t_2)}.
\end{equation}
According to Lemma~\ref{lem:invariant} there is,
\[c_g^{(t_2')} \leq c_f^{(t_2')} + U, \quad c_g^{(t_1')} \geq c_f^{(t_1')} - U.\]
By \Cref{eq:iso_increase_1} and \Cref{eq:iso_increase_2}:
\[c_g^{(t_2')} \leq c_f^{(t_2)} + U, \quad c_g^{(t_1')} \geq c_f^{(t_1)} - U.\]
Since $W_g(t_1', t_2') \leq c_g^{(t_2')} - c_g^{(t_1')}$, there is
\[ W_g(t_1', t_2') \leq c_f^{(t_2)} - c_f^{(t_1)} + 2U.\]
Combine with \Cref{eq:iso_end_bound}, there is:
\begin{align*}    
W_g(t_1, t_2) &= W_g(t_1, t_1') + W_g(t_1', t_2') + W_g(t_2', t_2)\\
&\leq c_f^{(t_2)} - c_f^{(t_1)} + 4U.
\end{align*}
Since f is backlogged during $(t_1, t_2)$, \[W_f(t_1, t_2) = c_f^{(t_2)} - c_f^{(t_1)}\] Thus: \[W_f(t_1, t_2) \geq W_g(t_1, t_2) - 4U\]

\end{proof}

\isolation*
\begin{proof}
We prove by contradiction. Assume there is a request from $f$ that has not been dispatched in $t_2$, i.e., $f$ is backlogged at $t_2$. Since $f$ is not backlogged at $t_1$, there exists a (non-empty) set of time steps such that $f$ becomes backlogged. We let $t$ be the largest element in the set, i.e. $f$ is backlogged at any time in $[t, t_2)$.
We claim that $W_f(t, t_2) \geq \frac{T(t, t_2)}{n(t, t_2)} - 4U$. 

From the pigeonhole principle,
there is at least one client $g$ who has received services $W_g(t, t_2)\geq \frac{T(t, t_2)}{n(t, t_2)}$.
If $f=g$, the claim holds.
If not, from \Cref{thm:nopunish}, we have
\[ W_f(t, t_2) \geq W_g(t, t_2) - 4U \geq \frac{T(t, t_2)}{n(t, t_2)} - 4U. \]

Since $f$ swicthes from non-backlogged to backlogged at $t$, requests sent before $t$ at most contributes a $U$ increase in $W_f(t, t_2)$. Thus, requests sent in $(t, t_2)$ at least contribute to $\frac{T(t, t_2)}{n} - 5U$, which contradicts to the assumption in the theorem.
\end{proof}

\section{Advanced VTC Variants}

This section presents additional experiments on various variants of VTC.
\Cref{sec:weighted_vtc} evaluate weighted VTC, which is introduced in \Cref{sec:method_weighted_vtc}.
In \Cref{sec:vtc_profile}, we show concretely how VTC can be tailored to specific cost functions, using a profiled service cost function as an example.
In \Cref{sec:vtc_length}, we include more analysis of VTC with length prediction, which is introduced in \Cref{sec:method_vtc_length}.  We empirically show its effectiveness in obtaining a better service discrepancy.

For all experiments shown in this section, we run Llama-2-7b on A10G (24GB), using the memory pool of 10000 tokens for KV cache.

\subsection{VTC for Weighted Fairness}
\label{sec:weighted_vtc}

\Cref{fig:weighted_vtc} demonstrates the effectiveness of the weighted VTC in managing clients with varied priority levels. We conducted a test using a synthetic workload involving four overloaded clients. The results depicted in \Cref{fig:priority_vtc} were achieved using standard VTC, which illustrates the comparable levels of service received by all four clients. In contrast, \Cref{fig:priority_wvtc}, which was obtained through the application of weighted VTC, shows differentiated service levels. The clients were assigned weights of 1, 2, 3, and 4, respectively, and the resulting service distribution closely adhered to these ratios.

\begin{figure}[ht]
    \subfloat[VTC]{\includegraphics[width=0.23\textwidth]{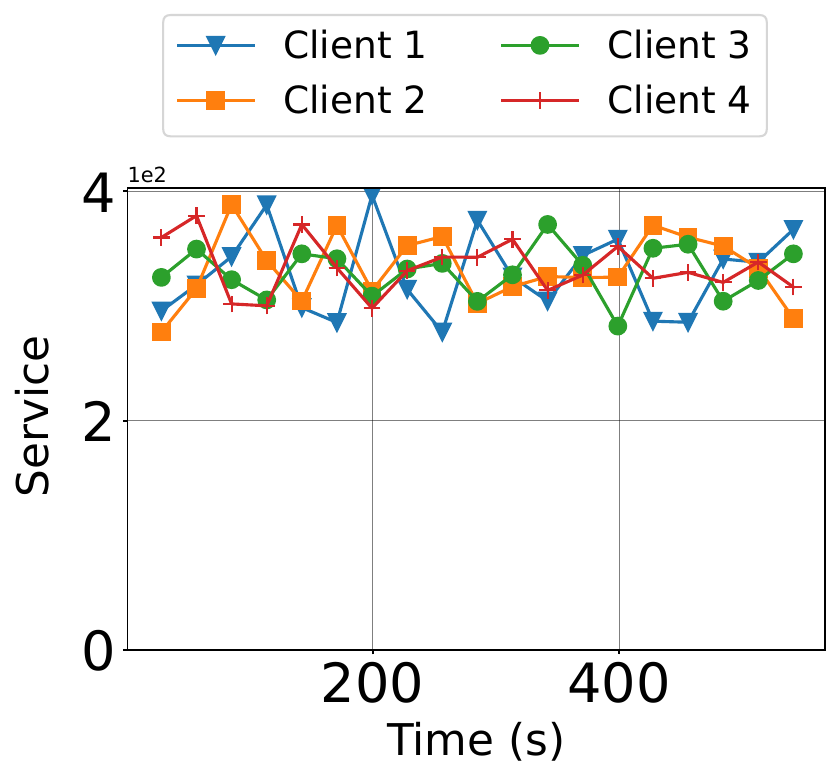}
    \label{fig:priority_vtc}}
    \hfill
    \subfloat[Weighted VTC]{\includegraphics[width=0.23\textwidth]{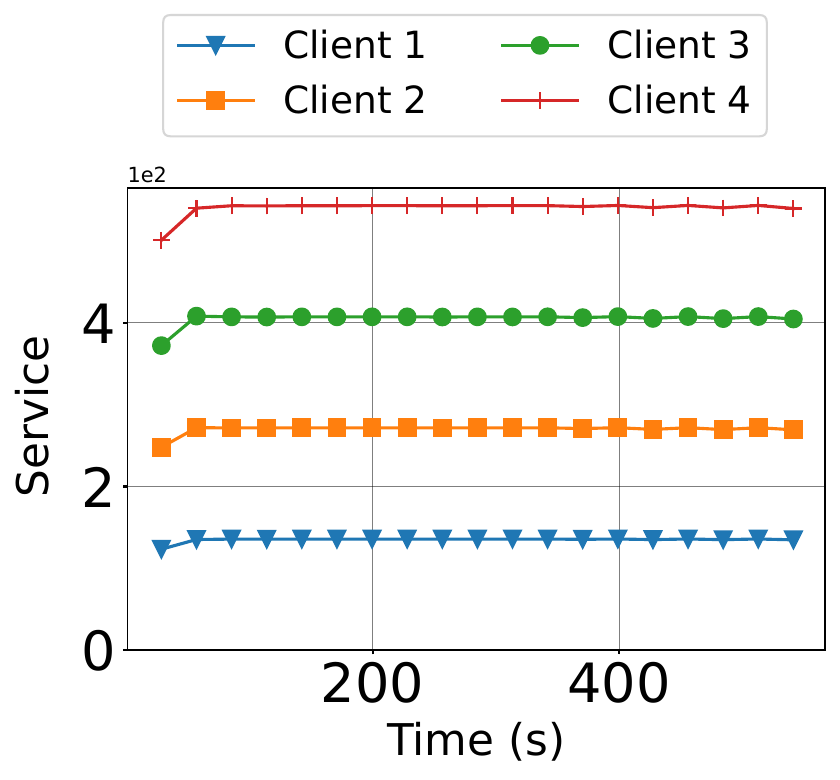}
    \label{fig:priority_wvtc}}
    \caption{Received service during the 10 minutes of the synthetic overloaded workload with input and output length both at 256. The figure on the left is obtained through standard VTC. The figure on the right is obtained through weighted VTC with weights 1:2:3:4 for the 4 clients.}
    \label{fig:weighted_vtc}
\end{figure}

\subsection{VTC with Profiled Cost Function}
\label{sec:vtc_profile}

In this section, we demonstrate the generalizability of the token cost function used in VTC (see \Cref{sec:general_cost}) by using a profiled service cost function.

To match our experimental setup, we profiled the inference time for Llama-2-7b on an A10G (24GB) across various conditions, as shown in Figure \ref{fig:cost_func}. We employed a batch size that utilizes the entire memory pool for each data point corresponding to specific input and output lengths. Consequently, shorter lengths allow for larger batch sizes, while longer lengths necessitate smaller ones. The prefill time is determined by dividing the total prefill time of the batch by the batch size. Similarly, the decode time is calculated by dividing the time taken to decode all tokens in the batch by the batch size. The function $h(n_p, n_q)$ is defined as the sum of prefill and decode times for the data point with input length $n_p$ and output length $n_q$.

\begin{figure}[ht]
    \subfloat[Prefill Time]{\includegraphics[width=0.23\textwidth]{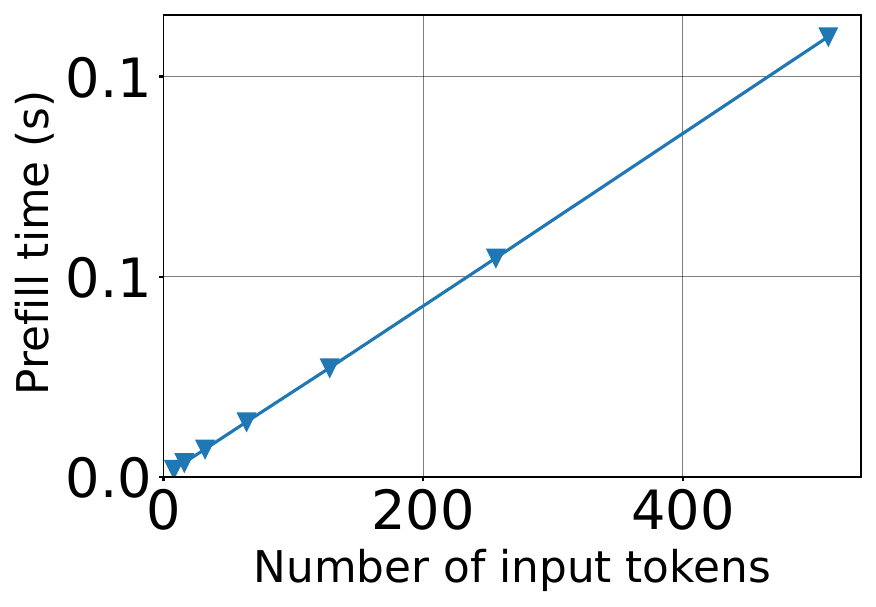}}
    \hfill
    \subfloat[Decode Time]{\includegraphics[width=0.23\textwidth]{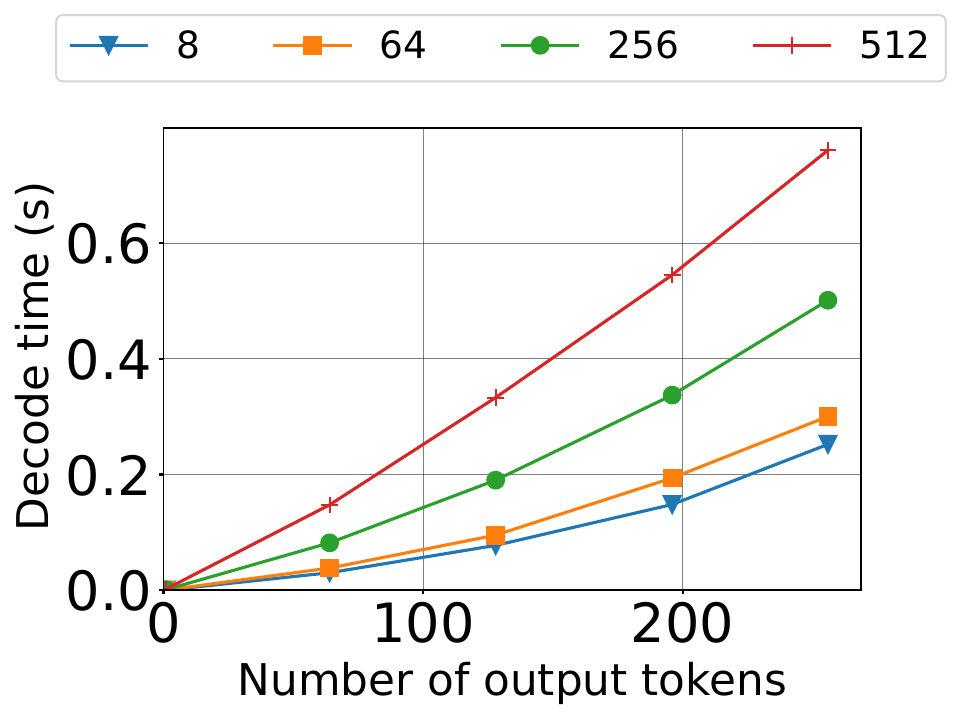}}
    \caption{Profiled prefill and decode time in different settings. For each data point, the batch size is set to the maximum to fulfill the memory pool (full utilization). The prefill time and decode time are all divided by the batch size. For the figure on the right, the legend for each curve denotes its number of input tokens.}
    \label{fig:cost_func}
\end{figure}

When considering the same total number of input and output tokens, the decode time for scenarios involving all output tokens is about 2 to 5 times the prefill time for scenarios involving all input tokens. The profiled cost function does not follow a linear model. We proceeded to fit the profiled data points and adjusted the coefficients to derive the following cost function:
\[ h(n_p, n_q) = 2.1 \cdot n_p + n_q + 0.04 \cdot n_p n_q + 0.032 \cdot n_q^2 + 11.46 \]

We conducted real trace experiments using this profiled cost function as the metric, the results of which are presented in \Cref{tab:quant_fairness_profile}.
The disparity between VTC and other baseline methods is insignificant because clients with low request rates, when starved, do not substantially impact the overall service difference. However, as observed in \Cref{fig:profile_h_response_time}, VTC successfully maintains low response times for clients with low request rates, a feat not matched by other baselines except for LCF.
In the case of LCF, clients with consistently high request rates face undue penalties, resulting in excessively high response times.
We reinforce our findings by assessing the profiled cost function on a synthetically overloaded workload to highlight the differences between VTC and FCFS, as shown in \Cref{tab:profile_2_clients}.

The results empirically show that VTC can achieve fairness using a customized cost function. However, our goal is not to determine the optimal cost function or pricing model, as these can vary based on numerous factors in a production environment and may change over time. The investigation into the cost function and pricing model is designated for future research.

\begin{table}[ht]
\centering
\resizebox{1\columnwidth}{!}{
\begin{tabular}{c|ccccc}
\toprule
Scheduler & Max Diff & Avg Diff & Diff Var & Throu & Isolation \\ 
\midrule 
FCFS & 743.23 & 457.29 & 26645.42 & 777 & No\\ 
LCF & 709.35 & 384.78 & 23299.20 & 778 & Some\\
\midrule
VTC & 707.35 & 368.74 & 21918.67 & 780 & Yes\\ 
VTC(predict) & 617.22 & 337.05 & 11803.41 & 778 & Yes\\
\textbf{VTC(oracle)} & \textbf{387.43} & \textbf{277.18} & \textbf{4541.57} & \textbf{783} & \textbf{Yes}\\
\midrule
RPM(5) & 230.78 & 151.00 & 823.15 & 340 & Some\\ 
RPM(20) & 445.34 & 270.51 & 5938.52 & 694 & Some\\ 
RPM(30) & 801.16 & 377.22 & 25980.39 & 747 & Some\\ 
\bottomrule
\end{tabular}
}
\caption{Results run on real workload under the profiled cost function introduced in \Cref{sec:vtc_profile}. The service difference is counted by summing the service difference between each client and the client who received the maximum services.
Throughput is the total number of tokens (including input and output tokens) processed divided by the total execution time.}
\label{tab:quant_fairness_profile}
\end{table}

\begin{table}[ht]
\centering
\resizebox{1\columnwidth}{!}{
\begin{tabular}{c|cccc}
\toprule
Scheduler & Max Diff & Avg Diff & Diff Var & Throughput \\ 
\midrule
FCFS & 323.18 & 317.13 & 15.98 & 876\\ 
VTC & 137.27 & 74.87 & 2819.40 & 900\\ 
VTC(oracle) & 4.28 & 0.34 & 0.91 & 893\\ 
\bottomrule
\end{tabular}
}
\caption{Results run on the synthetic overloaded workload with 2 clients under the profiled cost function introduced in \Cref{sec:vtc_profile}. The work difference is counted by summing the work difference between each client and the client who received the maximum services.}
\label{tab:profile_2_clients}
\end{table}

\begin{figure}[ht]
    \centering
    \subfloat[VTC (oracle)]{ \includegraphics[width=0.23\textwidth]{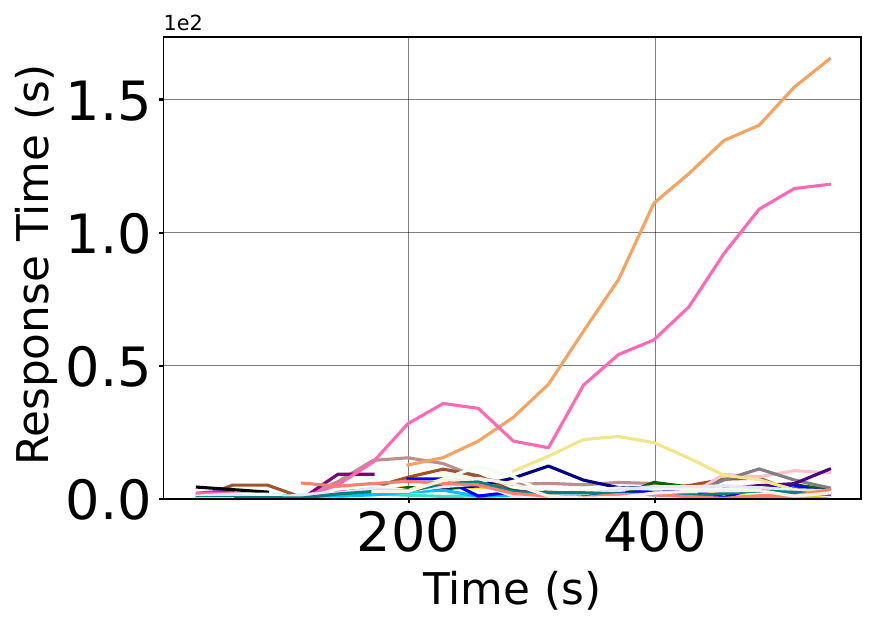}}
    \hfill
    \subfloat[VTC]{\includegraphics[width=0.23\textwidth]{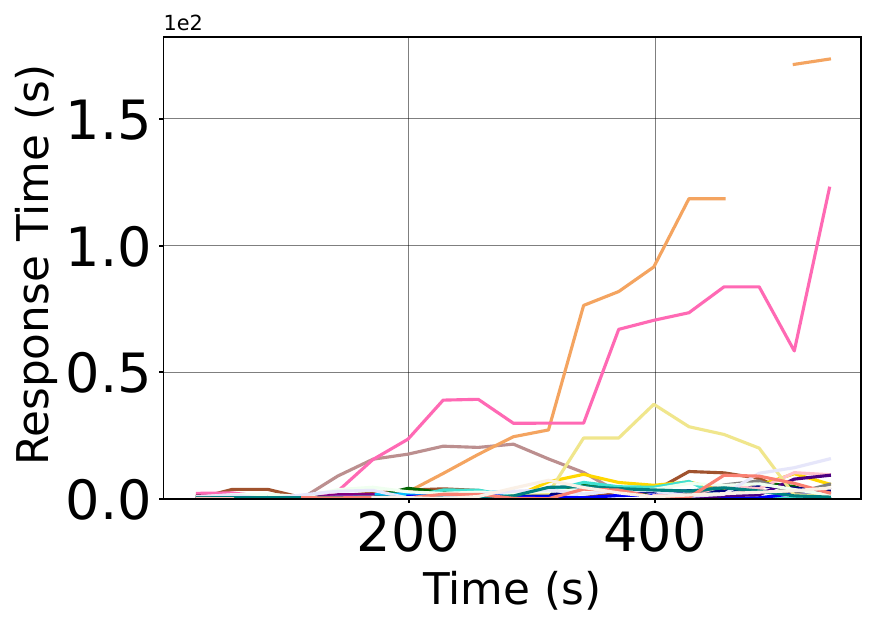}}
    
    \hfill
    
    \subfloat[RPM(20)]{    \includegraphics[width=0.23\textwidth]{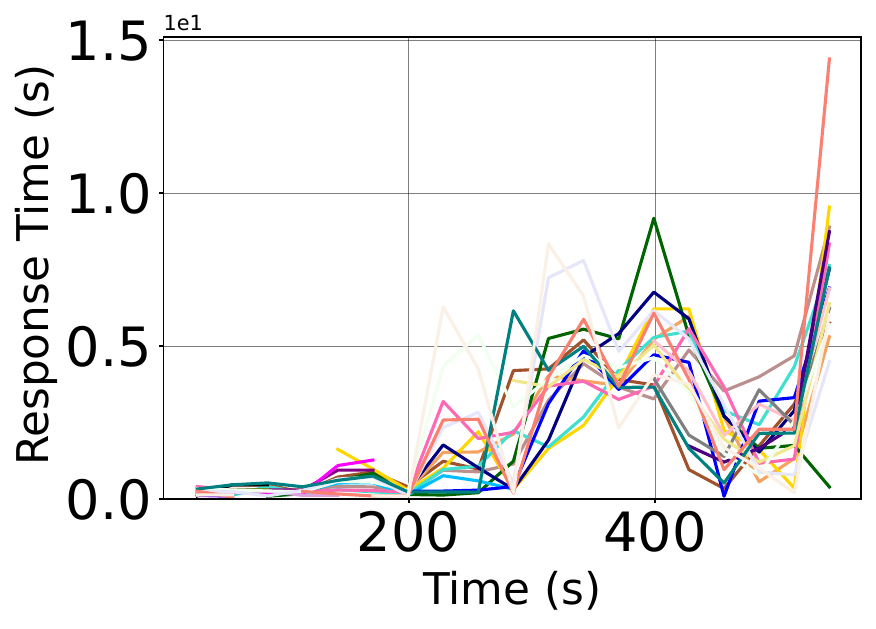}}
    \hfill
    \subfloat[RPM(30)]{\includegraphics[width=0.23\textwidth]{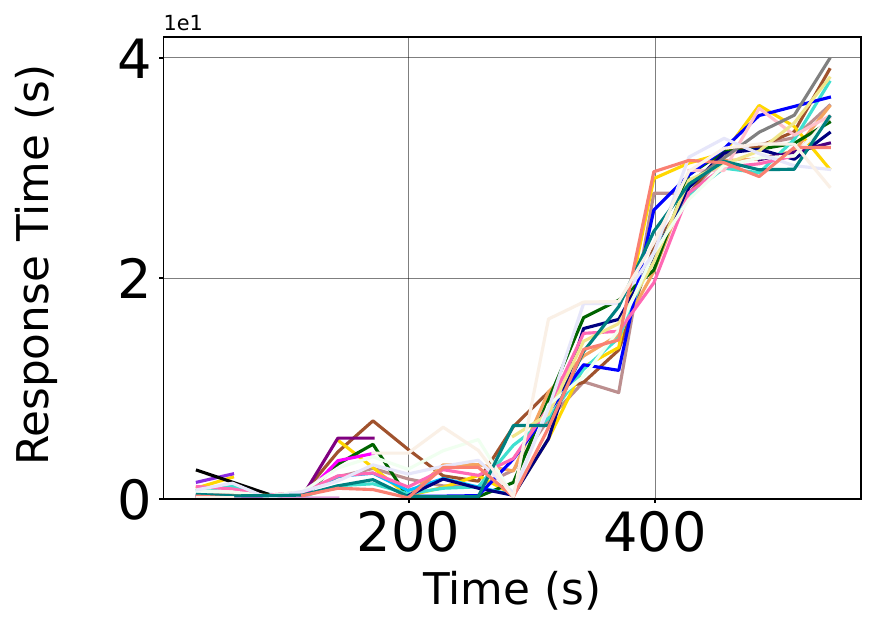}}
    
    \hfill
    
    \subfloat[FCFS]{    \includegraphics[width=0.23\textwidth]{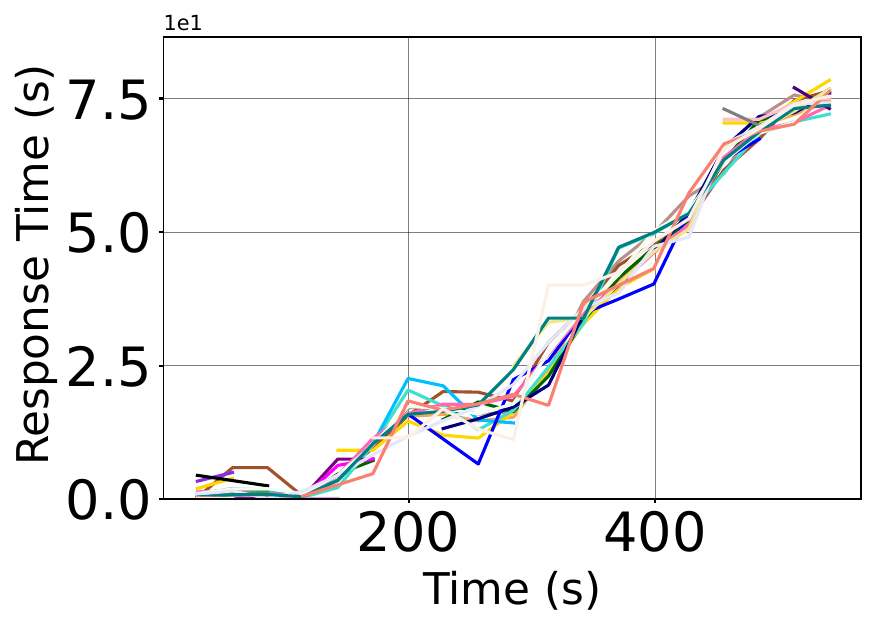}}
    \hfill
    \subfloat[LCF]{    \includegraphics[width=0.23\textwidth]{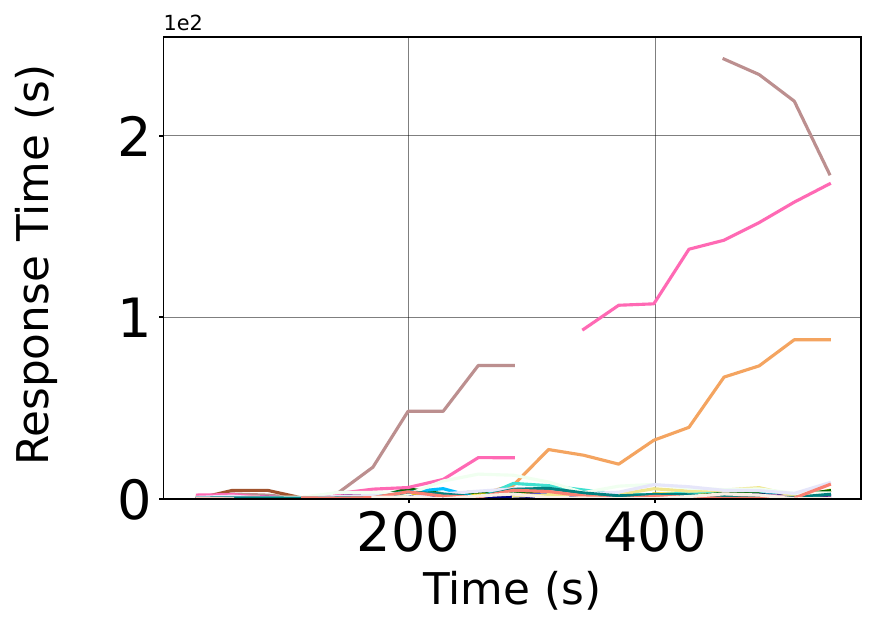}}
    \caption{Response time of the 27 clients during the 10 minutes of real trace simulation using different schedulers. The VTC style schedulers are using the profiled cost function introduced in \Cref{sec:vtc_profile}. There are some curves that show disconnected because, during some periods, a client may have no requests served. Requests distribution see \Cref{fig:real_req_rate}.}
    \label{fig:profile_h_response_time}
\end{figure}

\subsection{VTC with Length Prediction}
\label{sec:vtc_length}

The adapted pseudocode for VTC with length prediction is detailed in Algorithm \ref{alg:vtc_length_predict}. In line 25, the cost associated with the predicted number of output tokens is preemptively calculated. Lines 32-37 describe the adjustments made to the cost to correspond with the actual number of output tokens produced.

\Cref{fig:vtc_len_overload} demonstrates how length prediction reduces service discrepancies among clients in a synthetic workload scenario where all clients are overloaded.
"VTC (oracle)" refers to a simulation using a predictor with 100\% accuracy. "VTC ($\pm 50\%$)" simulates a predictor that randomly selects a value within 50\% of the actual output length, either above or below. 
While standard VTC ensures that the absolute differences in services received by clients remain bounded and do not grow over time, VTC with length prediction significantly lowers these differences throughout the test period, even with a prediction error margin of 50\%.
\Cref{tab:pred_len_2_clients} and \Cref{tab:pred_len_n_clients} provide quantitative assessments of the service discrepancies among overloaded clients under the same conditions.

\begin{figure}[ht]
    \subfloat[2 Clients]{\includegraphics[width=0.23\textwidth]{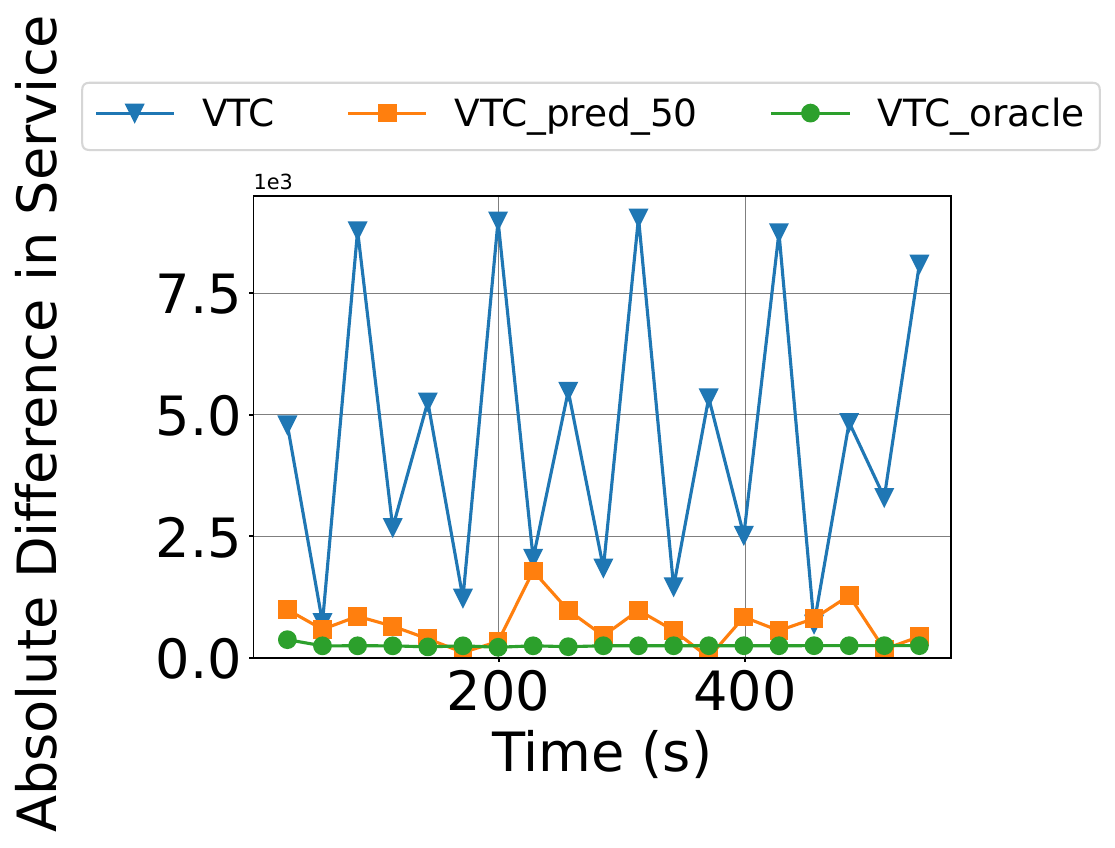}}
    \hfill
    \subfloat[8 Clients]{\includegraphics[width=0.23\textwidth]{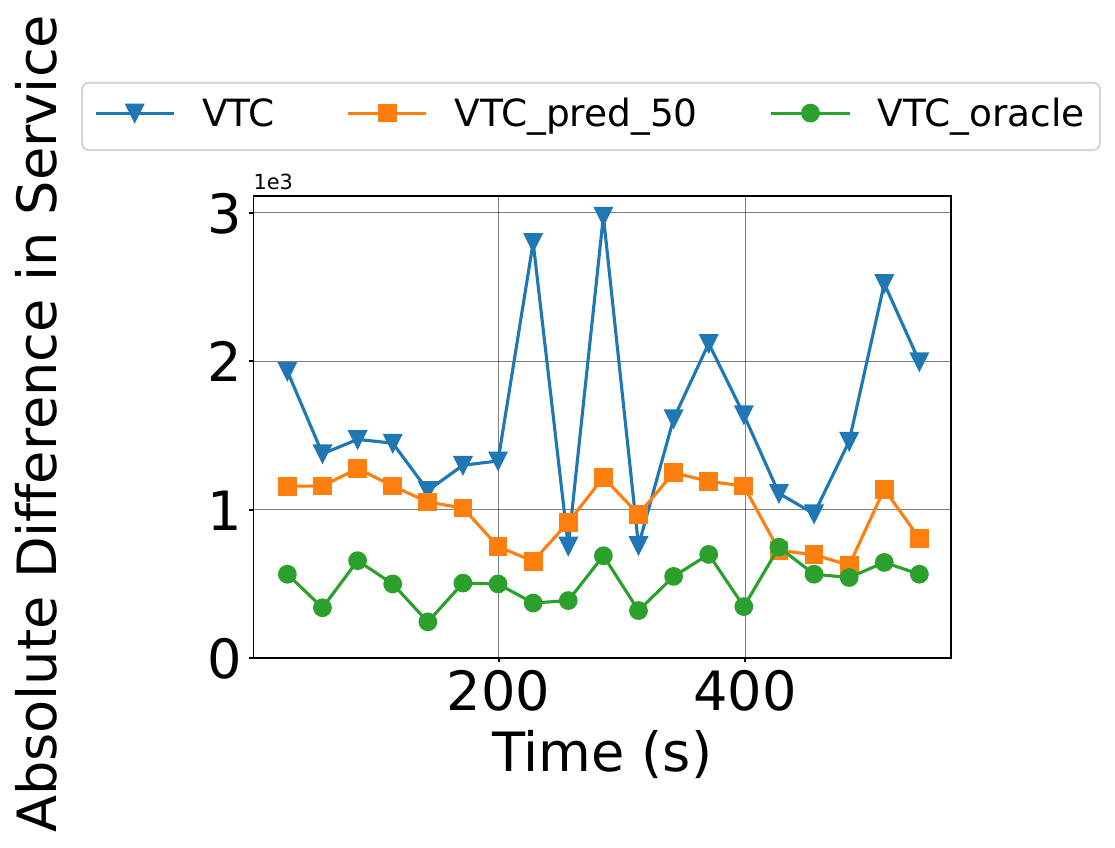}}
    \caption{The figures illustrate the maximum difference in accumulated service received by clients during a 10-minute period of synthetic workload, where both the input and output lengths are set at 256. The left figure is derived from a simulation involving two clients, while the right figure comes from a simulation involving eight clients. In both scenarios, the request rate for each client surpasses the available capacity, resulting in continuous backlogging of each client.}
    \label{fig:vtc_len_overload}
\end{figure}

\begin{table}[ht]
\centering
\resizebox{1\columnwidth}{!}{
\begin{tabular}{c|cccc}
\toprule
Scheduler & Max Diff & Avg Diff & Diff Var & Throughput \\ 
\midrule
VTC & 192.88 & 103.77 & 6981.24 & 893\\
VTC ($\pm 50\%$) & 33.98 & 12.54 & 111.94 & 904\\ 
VTC (oracle) & 5.87 & 0.51 & 1.71 & 895\\ 
\bottomrule
\end{tabular}
}
\caption{Results run on 10-minute synthetic workload same with \Cref{fig:vtc_len_overload} for 2 clients. The service difference is counted by summing the work difference between each client and the client who received the maximum services.
Throughput is the total number of tokens (including input and output tokens) processed divided by the total execution time.}
\label{tab:pred_len_2_clients}
\end{table}

\begin{table}[ht]
\centering
\resizebox{1\columnwidth}{!}{
\begin{tabular}{c|cccc}
\toprule
Scheduler & Max Diff & Avg Diff & Diff Var & Throughput \\ 
\midrule
VTC & 322.16 & 162.20 & 5151.49 & 875\\ 
VTC ($\pm$ 50\%) & 99.43 & 66.32 & 487.10 & 875\\ 
VTC (oracle) & 43.23 & 36.34 & 56.52 & 875\\
\bottomrule
\end{tabular}
}
\caption{Results run on 10-minute synthetic workload same with \Cref{fig:vtc_len_overload} for 8 clients. The service difference is counted by summing the work difference between each client and the client who received the maximum services.
Throughput is the total number of tokens (including input and output tokens) processed divided by the total execution time.}
\label{tab:pred_len_n_clients}
\end{table}

\begin{algorithm}[t!]
\caption{VTC with Length Prediction}
\begin{algorithmic}[1]
\Require request trace, input token weight $w_p$, output token weight $w_q$, upper bound from \Cref{eq:invariant} denoted as $U$.
\State let current batch $B \leftarrow \emptyset$
\State let $c_i \leftarrow 0$ for all client $i$
\State let $Q$ denote the waiting queue, which is dynamically changing.
\State $\triangleright$ \texttt{with monitoring stream:}
\While{True}
    \If {new request $r$ from client $u$ arrived}
        \If {not $\exists r' \in Q, client(r')=u$}
            \If {$Q=\emptyset$}
                \State let $l\leftarrow$ the last client left $Q$
                \State $c_u \leftarrow \max\{c_u, c_{l}\}$
            \Else
                \State $P \leftarrow \{i \mid \exists r' \in Q, client(r')=i\}$
                \State $c_u \leftarrow \max\{c_u, \min\{c_i \mid i\in P\}\}$
            \EndIf
        \EndIf
        \State $Q \leftarrow Q + r$
    \EndIf
\EndWhile
\State $\triangleright$ \texttt{with execution stream:}
\While{True}
    \If{can\_add\_new\_request()}
        \State $B_{new} \leftarrow \emptyset$
        \While{True} 
            \State let $k \leftarrow \argmin_{i \in \{client(r) \mid r\in Q\}} c_i$
            \State let $r$ be the earliest request in $Q$ from $k$.
            \If{$r$ cannot fit in the memory}
                \State Break
            \EndIf
            \State $c_k \leftarrow c_k + w_p\cdot input\_length(r)$
            \State $c_k \leftarrow c_k + w_q\cdot predicted\_output\_length(r)$
            \State $B_{new} \leftarrow B_{new} + r$
            \State $Q \leftarrow Q - r$
        \EndWhile
        \State forward\_prefill($B_{new}$)
        \State $B \leftarrow B + B_{new}$
    \EndIf
    \State forward\_decode($B$)
    \State $\triangleright$ \texttt{Adjust the cost of output tokens}
    \For{each $r \in B$}
        \State $\delta \leftarrow output\_len(r) - predicted\_output\_len(r)$
        \If{$\delta > 0$}
            \State $c_{client(r)} \leftarrow c_{client(r)} + w_q$
        \EndIf
        \If{$r$ is finished and $\delta < 0$}
            \State $c_{client(r)} \leftarrow c_{client(r)} + w_q\cdot \delta$
        \EndIf
    \EndFor
    \State $B \leftarrow$ filter\_finished\_requests($B$)
\EndWhile
\end{algorithmic}
\label{alg:vtc_length_predict}
\end{algorithm}

\begin{figure}[t!]
    \centering
    \begin{subfigure}[b]{0.22\textwidth}
    \includegraphics[width=\textwidth]{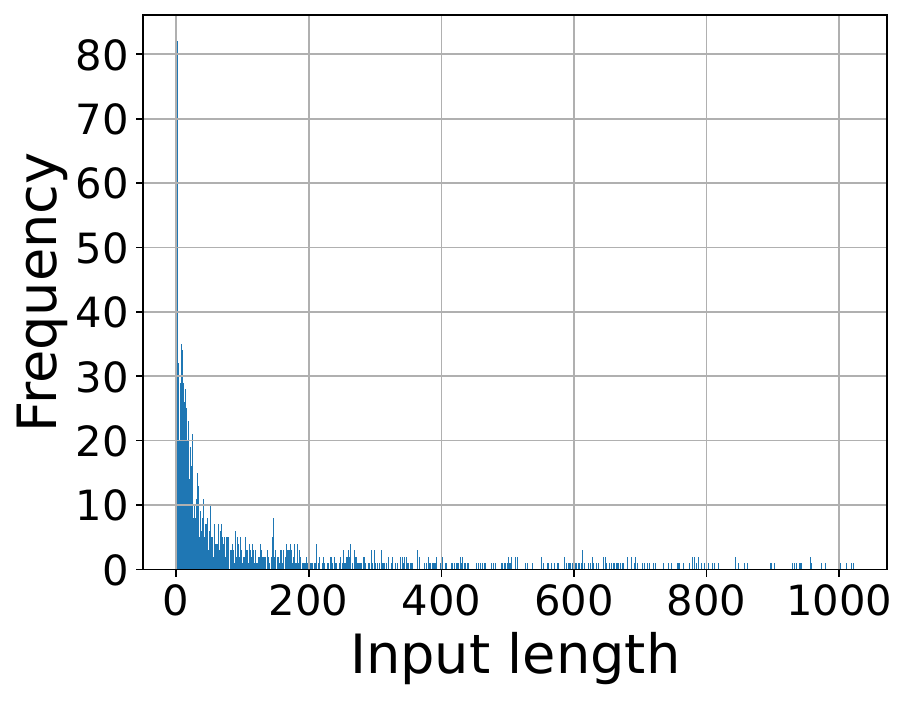}
    \end{subfigure}
    \hfill
    \begin{subfigure}[b]{0.235\textwidth}
    \includegraphics[width=\textwidth]{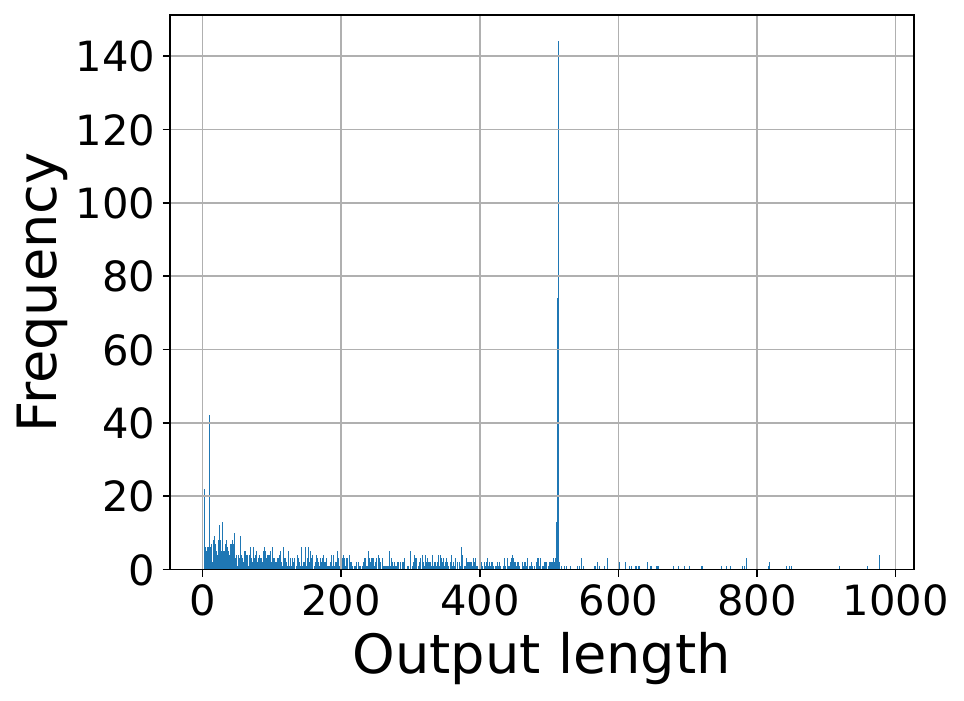}
    \end{subfigure}
    \caption{Request input and output length distribution in the real workload trace during the sampled 10 minutes duration with re-scale. The average input length is 136, and the average output length is 256. The input and output lengths have the range of $[2, 1021]$ and $[2, 977]$, respectively.}
    \label{fig:real_req_len}
\end{figure}

\section{Discussions}
\label{sec:discussions}

\subsection{VTC Integration in Real Systems}
\label{sec:vtc_integration}

In \Cref{alg:vtc}, we have shown an example of VTC integration with continuous batching.
In implementation, VTC integration should be a simple change in the request scheduler.
Generally, for an existing serving system, there are three modules that need to be modified.
First, the monitoring stream handles counter-lifting when a new request comes, as shown in \Cref{alg:vtc_general}, which is the same as in \Cref{alg:vtc}.
Second, when new tokens have been processed, the counters should be updated according to a pre-defined cost function as discussed in \Cref{sec:general_cost}.
Third, when new requests need to be selected for processing, we schedule the request from a user with the lowest counter first.
The added modules are demonstrated in \Cref{alg:vtc_general}.
We are assuming a customized cost function $h(n_p, n_q)$ as introduced in \Cref{sec:measurement}.
At line 22,
$n_p^r, n_q^r$ denote the number of processed input and output tokens, and
$n_p^{r(old)}, n_q^{r(old)}$ denote the number of processed input and output tokens before processing the new tokens.

Those modules for maintaining the virtual token counters and selecting requests according to the counters could be additive features of an existing serving system.
However, in some cases, VTC is possibly in conflict with a scheduling algorithm that optimizes performance while being against fairness. Cache-aware scheduling introduced in \cite{zheng2023efficiently} is an example in which requests with shared prefixes will always be prioritized.
A natural solution to combine the two is adding a policy of switching between the two schedulers by setting tolerable fairness bounds.
We leave such exploration as future research.

\begin{algorithm}[ht]
\caption{General VTC}
\begin{algorithmic}[1]
\label{alg:vtc_general}
\Require request trace, input token weight $w_p$, output token weight $w_q$, upper bound from \Cref{eq:invariant} denoted as $U$.
\State let current batch $B \leftarrow \emptyset$
\State let $c_i \leftarrow 0$ for all client $i$
\State let $Q$ denote the waiting queue, which is dynamically changing.
\State $\triangleright$ \texttt{with monitoring stream:}
\While{True}
    \If {new request $r$ from client $u$ arrived}
        \If {not $\exists r' \in Q, client(r')=u$}
            \If {$Q=\emptyset$}
                \State let $l\leftarrow$ the last client left $Q$
                \State $c_u \leftarrow \max\{c_u, c_{l}\}$
            \Else
                \State $P \leftarrow \{i \mid \exists r' \in Q, client(r')=i\}$
                \State $c_u \leftarrow \max\{c_u, \min\{c_i \mid i\in P\}\}$
            \EndIf
        \EndIf
        \State $Q \leftarrow Q + r$
    \EndIf
\EndWhile
\State $\triangleright$ \texttt{when process new request:}
\If {add\_new\_request()}
    \State let $k \leftarrow \argmin_{i \in \{client(r) \mid r\in Q\}} c_i$
    \State let $r$ be the earliest request in $Q$ from $k$.
    \State $Q \leftarrow Q - r$
    \State original process when selecting $r$.
\EndIf
\State $\triangleright$ \texttt{when new tokens been processed:}
\State $c_i \leftarrow c_i + \sum_{r\mid client(r) = i} \left( h(n_p^r, n_q^r) - h(n_p^{r(old)}, n_q^{r(old)}) \right)$
\end{algorithmic}
\end{algorithm}

\subsection{Adapted Deficit Round Robin}
\label{sec:drr}

We have briefly discussed in \Cref{sec:challenge} why Deficit Round Robin (DRR) cannot be directly applied.
In this section, we discuss an adaptation of Deficit Round Robin~\cite{drr} and show it is equivalent to our proposed VTC scheduler.

The original DRR can be described as follows:
\begin{itemize}
\item[1.] The algorithm maintains a constant $Q$, which is the quantum that each client has.
\item[2.] Every client maintains a variable $C_i$ that represents its deficit, which is initialized as 0.
\item[3.] On each round, the algorithm visits each client with a non-empty queue and schedules its requests as many as possible if the incurred cost $P$ is less than or equal to $Q + C_i$.
The $C_i$ is then updated to $Q + C_i - P$ if $P > C_i$ or $C_i - P$ if else.
\end{itemize}

The obstacle to applying DRR in LLM serving is that we do not know how many requests we should schedule to meet the requirement of $P \leq Q+C_i$ since the number of output tokens is unknown in advance.

We then give an adapted version for LLM serving:
\begin{itemize}
\item[1.] The algorithm maintains a constant $Q$, which is still the quantum that each client has.

\item[2.] Every client maintains a variable $C_i$ that represents its debt, which is initialized as 0.

\item[3.] In each round, the algorithm processes each client. If $C_i \leq 0$, it refills $C_i$ by adding $Q$ to it. Should the updated $C_i$ become positive, the algorithm schedules as many requests as possible, such that the cost associated with the prompt tokens $P$ slightly exceeds $C_i$ with the addition of the last scheduled request. After scheduling, $P$ is subtracted from $C_i$.

\item[4.] Each time a new token is decoded, the associated cost is deducted from the respective $C_i$. Consequently, $C_i$ may become negative, exceeding the value of $Q$ multiple times, and it might require waiting through several rounds before it can be scheduled again.
\end{itemize}

Fairness is no longer strictly bounded by $Q$, yet a smaller $Q$ promotes a tighter constraint.
When $Q = \epsilon$ is extremely small, smaller than the cost of a single prompt token, the algorithms revert to functioning like the VTC algorithm.
This is because each round results in one of two outcomes: either all $C_i$ values remain non-positive, prompting another round, or the highest $C_i$ turns positive and the corresponding client is scheduled. 
The client with the highest $C_i$ is the one who has received the least service, which corresponds to having the smallest virtual counter in VTC.

If a client has no requests in the queue at a given time, it will cease to be refilled once $C_i \geq 0$.
When a new request arrives, its $C_i$ will be within $(0, \epsilon]$, approximating $\max_i{C_i}$.
The $\max_i{C_i}$ remains within the range of $(0, \epsilon]$ because the algorithm persistently adds $\epsilon$ to $C_i$ to maintain it positive, but then rapidly pulls it back into the negative by scheduling new requests. This process mirrors the counter lift mechanism in VTC.

In addition to its similarity to VTC, practically, simulating repeated round-robin with a small quantum $Q$ is inefficient. Therefore, we focus solely on analyzing VTC in this paper, leaving the discussion of the round-robin simulation here for reference.

\subsection{Future Work}
\label{sec:future_work}

\paragraph{Preemption}
\label{sec:preemption}

As we mentioned in \Cref{sec:prelim_llm_serving}, this paper focuses on how to integrate
fair scheduling with continuous batching, and leaving an
investigation on preemption as an orthogonal future work.
But we still would like to discuss how preemption will affect the VTC algorithm, and point out a possible future research on it.

The nature of unpredictable length in a no-preemption framework directly affects the fairness bound in the main theorem \Cref{thm:main_fairness}, which is $U=2\max(w_p\cdot L_{input}, w_q\cdot M)$.
Intuitively, the worst case occurs when many requests from one client are added, generating a large number of tokens that cannot be preempted.
During the process, other clients cannot catch up arbitrarily because the memory is occupied.
Essentially, this is caused by an underestimation of a future number of tokens, similarly explained in the ablation study (\Cref{sec:ablation}) and VTC with length prediction (\Cref{sec:method_vtc_length}).

In \Cref{remark:restrict_mem}, we mentioned that we could restrict the memory usage for each client in the running batch to obtain a better bound. However, this can potentially lower the overall throughput because the memory may not always be fully utilized.
Having a preemption mechanism could be a good solution to address the problem of underestimating and tightening the bound.
Basically, if the difference in service is larger than a threshold, we can preempt the requests in processing and swap in requests from clients with lower counters.

\paragraph{VTC for distributed systems}
Integrating VTC in a distributed LLM serving system is an interesting direction for future work. For a distributed setup where there are many replicas of serving engines, we will have a central request dispatcher where we can keep the token counter and enforce the algorithm (this is similar to hierarchical fair sharing~\cite{bennett1997hierarchical} in the network domain, and multi-queue fair queuing~\cite{hedayati2019multi}). The bound now is dependent on the total memory capacity of all the serving engines. However, in the distributed setting, the counter will be updated by different serving engines concurrently, raising the problem of counter synchronization, which will be interesting to explore as a future work.

\paragraph{VTC and Auto-scaling}
The VTC algorithm does not rely on a constant capacity. Adding and removing GPUs will not affect the algorithm but may need a hierarchical virtual counter as discussed in the paragraph about distributed systems.
However, auto-scaling is a possible approach to mitigate the issue of throughput degradation in RPM.
The resources can be auto-scaled to fit the fluctuating traffic, but this requires flexible and responsive resource management.
Auto-scaling has its own challenges, including operational cost overhead, inaccurate workload prediction, and delays.
A combination of VTC and auto-scaling is a future direction worth exploring.

\end{document}